\newcommand{\alglinelabel}{%
  \addtocounter{ALC@line}{-1}% Reduce line counter by 1
  \refstepcounter{ALC@line}% Increment line counter with reference capability
  \label% Regular \label
}
\definecolor{mydarkblue}{rgb}{0,0.08,0.45}
\theoremstyle{plain}
\newtheorem{theorem}{Theorem}[section]
\newtheorem{lemma}[theorem]{Lemma}
\newtheorem{corollary}[theorem]{Corollary}
\theoremstyle{definition}
\newtheorem{definition}[theorem]{Definition}
\theoremstyle{remark}
\newtheorem{remark}[theorem]{Remark}
\newcommand{\E}{\operatornamewithlimits{\mathbb{E}}}
\newcommand{\argmax}{\operatornamewithlimits{\mathrm{argmax}}}
\newcommand{\argmin}{\operatornamewithlimits{\mathrm{argmin}}}
\newcommand{\tr}{\operatorname{\mathrm{Tr}}}
\renewcommand{\O}{\operatorname{\mathcal O}}
\newcommand{\Otil}{\operatorname{\tilde{\mathcal O}}}
\renewcommand{\P}{\operatorname{\mathbb P}}
\newcommand{\trans}{\mathsf{T}}
\newcommand{\mS}{\mathcal{S}}
\newcommand{\mA}{\mathcal{A}}
\newcommand{\mT}{\mathcal{T}}
\renewcommand{\tilde}{\widetilde}
\renewcommand{\hat}{\widehat}
\renewcommand{\bar}{\overline}
\Crefname{ALC@line}{Line}{Lines}
\newcommand\scalemath[2]{\scalebox{#1}{\mbox{\ensuremath{\displaystyle #2}}}}
\begin{document}

\twocolumn[
\icmltitle{Refined Regret for Adversarial MDPs with Linear Function Approximation}

% It is OKAY to include author information, even for blind
% submissions: the style file will automatically remove it for you
% unless you've provided the [accepted] option to the icml2023
% package.

% List of affiliations: The first argument should be a (short)
% identifier you will use later to specify author affiliations
% Academic affiliations should list Department, University, City, Region, Country
% Industry affiliations should list Company, City, Region, Country

% You can specify symbols, otherwise they are numbered in order.
% Ideally, you should not use this facility. Affiliations will be numbered
% in order of appearance and this is the preferred way.
\icmlsetsymbol{equal}{*}

\begin{icmlauthorlist}
\icmlauthor{Yan Dai}{iiis}
\icmlauthor{Haipeng Luo}{usc}
\icmlauthor{Chen-Yu Wei}{mit}
\icmlauthor{Julian Zimmert}{google}
\end{icmlauthorlist}

\icmlaffiliation{iiis}{IIIS, Tsinghua University}
\icmlaffiliation{usc}{University of Southern California}
\icmlaffiliation{mit}{IDSS, MIT}
\icmlaffiliation{google}{Google Research}

\icmlcorrespondingauthor{Yan Dai}{yan-dai20@mails.tsinghua.edu.cn}
\icmlcorrespondingauthor{Haipeng Luo}{haipengl@usc.edu}
\icmlcorrespondingauthor{Chen-Yu Wei}{chenyuw@mit.edu}
\icmlcorrespondingauthor{Julian Zimmert}{zimmert@google.com}

% You may provide any keywords that you
% find helpful for describing your paper; these are used to populate
% the "keywords" metadata in the PDF but will not be shown in the document
\icmlkeywords{Machine Learning, ICML}

\vskip 0.3in
]

% this must go after the closing bracket ] following \twocolumn[ ...

% This command actually creates the footnote in the first column
% listing the affiliations and the copyright notice.
% The command takes one argument, which is text to display at the start of the footnote.
% The \icmlEqualContribution command is standard text for equal contribution.
% Remove it (just {}) if you do not need this facility.

\printAffiliationsAndNotice{}  % leave blank if no need to mention equal contribution
% \printAffiliationsAndNotice{\icmlEqualContribution} % otherwise use the standard text.

\begin{abstract}
We consider learning in an adversarial Markov Decision Process (MDP) where the loss functions can change arbitrarily over $K$ episodes and the state space can be arbitrarily large. We assume that the Q-function of any policy is linear in some known features, that is, a linear function approximation exists.
The best existing regret upper bound for this setting \citep{luo2021policy_nipsver} is of order $\Otil(K^{2/3})$ (omitting all other dependencies), given access to a simulator.
This paper provides two algorithms that improve the regret to $\Otil(\sqrt K)$ in the same setting.
Our first algorithm makes use of a refined analysis of the Follow-the-Regularized-Leader (FTRL) algorithm with the log-barrier regularizer. This analysis allows the loss estimators to be arbitrarily negative and might be of independent interest.
Our second algorithm develops a magnitude-reduced loss estimator, further removing the polynomial dependency on the number of actions in the first algorithm and leading to the optimal regret bound (up to logarithmic terms and dependency on the horizon).
Moreover, we also extend the first algorithm to simulator-free linear MDPs, which achieves $\Otil(K^{8/9})$ regret and greatly improves over the best existing  bound $\Otil(K^{14/15})$. This algorithm relies on a better alternative to the Matrix Geometric Resampling procedure by~\citet{neu2020efficient}, which could again be of independent interest.
\end{abstract}

\section{Introduction}
%!TEX root=main.tex

Markov Decision Processes (MDPs) have been widely used to model reinforcement learning problems, where an agent needs to make decisions sequentially and to learn from the feedback received from the environment.
In this paper, we focus on adversarial MDPs where the loss functions can vary with time and the state space can also be arbitrarily large, capturing the fact that in real-world applications such as robotics, the environment can be non-stationary and the number of states can be prohibitively large.

To handle large state spaces, one of the most common methods in the literature is to assume a linear-function approximation \citep{yang2020reinforcement,jin2020provably,wei2021learning,zanette2021cautiously,neu2021online,luo2021policy_nipsver}, where the expected loss suffered by any policy from any state-action pair, commonly known as the Q-function, is linear in a set of known features.

While such assumptions are commonly used in the literature, the minimax optimal regret attainable by the agent in adversarial environments is still poorly understood.
\footnote{To be more specific, we only consider bandit feedback in this paper, where the agent can only observe her experienced losses. For the easier full-information setting, $\sqrt K$-style near-optimal regret has already been achieved~\citep{he2022near} (\textit{cf.} \Cref{tab:table-regret}).\label{footnote:full information}}
Specifically, for adversarial linear-Q MDPs, the best existing bound is of order $K^{2/3}$ where $K$ is the number of episodes \citep{luo2021policy_nipsver}.%%
\footnote{Meanwhile, if a ``good'' exploratory policy $\pi_0$ (formalized in \Cref{footnote:good exploratory policy}) is granted, $\sqrt K$-style bounds are also achievable, though with some additional dependencies on the quality of $\pi_0$  \citep{luo2021policy_nipsver,neu2021online}; see \Cref{tab:table-regret}.}
In that paper, the authors assume a transition simulator (i.e., the agent is allowed to draw a trajectory starting from any state-action pair, sampled from the actual transition and a given policy, without any cost) and achieve $\Otil(d^{2/3} H^2 K^{2/3})$ regret where $H$ is the length of each episode and $d$ is the dimension of the feature space.
% while the second one \citep{neu2021online} assumes a known transition (which is stronger than the simulator assumption) and ensures $\Otil(A^{1/3} d^{1/3} H^2 K^{2/3})$ where $A$ is the number of actions associated with each state.
On the other hand, the best lower bound for this setting (induced from the special case of adversarial linear bandits) is of order $\Omega(\sqrt{K})$~\citep{dani2008price}.
% \hl{Chu et al. is contextual LB, so I changed it to Dani et al. But I can't really find a $\Omega(\sqrt{dK\log A})$ lower bound, so I removed the $d$ dependency. Because of this, I don't think we should say $\sqrt{dK}$ in the question below.}
%On the other hand, the optimal regret when the loss functions are stationary is known to be of order $\sqrt K$ \citep{yang2020reinforcement}.
Therefore, a natural question arises:

\textit{\textbf{Is it possible to design an algorithm in adversarial linear-Q MDPs that attains $\Otil(\sqrt K)$ regret bound?}}

\begin{table*}[htb]
\begin{minipage}{\textwidth}
\caption{Overview of Our Results and Comparisons with the Most Related Works}
\label{tab:table-regret}
\centering
\begin{savenotes}
\renewcommand{\arraystretch}{1.6}
\resizebox{\textwidth}{!}{%
\begin{tabular}{|c|c|c|c|c|}\hline
Algorithm & Setting\footnote{Linear MDP is a special case of linear-Q MDP, while linear
mixture MDP is generally incomparable to these two.} 
& Transition & Assumption\footnote{The definitions of ``exploratory policy'' and $\lambda_0$ are stated in \Cref{footnote:good exploratory policy}, while ``full information'' means the entire loss function is revealed at the end of each episode (which is easier than our bandit-feedback setting).} & Regret \\\hline
\multirow{2}{*}{\shortstack{\textsc{Dilated Bonus}\\\citep{luo2021policy_nipsver}}} & \multirow{4}{*}{\shortstack{Linear-Q MDP\\\\(\Cref{def: linear Q})}} & \multirow{4}{*}{\shortstack{Simulator\\\\ (\Cref{def:simulator})}} & None & $\Otil(d^{2/3}H^2 \bm{K^{2/3}})$ \\\cline{4-5}
 & & & Exploratory Policy & $\Otil\big (\text{poly}(d,H)\bm{(K/\lambda_0)^{1/2}}\big )$ \\\cline{1-1} \cline{4-5} %$\Otil( (dH+\lambda_0^{-1})^{1/2} H^2 \bm{K^{1/2}})$
\Cref{alg:linear-q with log-barrier} (\textbf{This work}) & & & \multirow{2}{*}{\textbf{None}} & $\Otil(A^{1/2} d^{1/2} H^3 \bm{K^{1/2}})$ \\\cline{1-1} \cline{5-5}
\Cref{alg:linear-q using variance-reduced} (\textbf{This work}) & & &  & $\Otil(d^{1/2}H^3\bm{K^{1/2}})$ \\\hline
\textsc{POWERS} \citep{he2022near} & Linear Mixture MDP & Unknown & Full Information & $\Otil(dH \bm{K^{1/2}})$ \\\hline
\multirow{1}{*}{ \shortstack{\small \textsc{Online Q-REPS} \\ \small{\citep{neu2021online}}}} & \multirow{4}{*}{\shortstack{Linear MDP\\ \\(\Cref{def:linear MDP})}} & Known & Exploratory Policy & $\Otil\big (\text{poly}(d,H)\bm{(K/\lambda_0)^{1/2}}\big )$\footnote{This is a refined version of the original $\O(\sqrt{K\log K})$ bound presented in their Theorem 1, which contains no explicit dependency on $\lambda_0$ by assuming $K=\Omega(\exp(\lambda_0^{-1}))$. See Review hYYK at \url{https://openreview.net/forum?id=gviX23L1bqw}.
% \hl{I significantly shortened this part because I think the original version is very distracting and unnecessary to be stated here. If we want, we can add the original explanation back somewhere in the appendix? Another option is just to cite Chen-Yu's review, which is public anyway}
%``$K$ is large enough''. Specifically, they $\eta\le (\sigma^2 \beta M H)^{-1}$, thus $\eta^{-1}D(\mu^\ast\|\mu_0)=\Omega((\beta M)^{-1})$. Meanwhile, they must set $M=\Omega((\gamma \beta \lambda_0)^{-1})$ to ensure the first term on the RHS is sub-linear, which means $\eta^{-1}D(\mu^\ast\|\mu_0)=\Omega((\gamma \lambda_0)^{-1})$. In the meantime, they also have a term $\gamma HK=\O(\gamma K)$. Hence, the $\sqrt{K/\lambda_0}$ dependency must exist. This is actually tighter than their current $\sqrt{K\log K}$ bound as they essentially need $K=\Omega(\exp(\lambda_0^{-1}))$ for a ``large enough'' $K$.
} \\\cline{1-1}\cline{3-5}
\multirow{2}{*}{\shortstack{\textsc{Dilated Bonus}\\\citep{luo2021policy,luo2021policy_nipsver}}} &  & \multirow{3}{*}{Unknown} & None & $\Otil(d^2H^4\bm{K^{14/15}})$ \\\cline{4-5}
& & & Exploratory Policy & $\Otil\big (\text{poly}(d,H)\bm{\big (K/\lambda_0^{2/3}\big )^{6/7}}\big )$ \\\cline{1-1} \cline{4-5}
\Cref{alg:linear MDP without MGR}  (\textbf{This work}) & & & \textbf{None} & $\Otil(H^{20/9}A^{1/9}d^{2/3}\bm{K^{8/9}})$ \\\hline
\end{tabular}}
\end{savenotes}
\end{minipage}
\end{table*}

In this work, we answer this question in the affirmative by developing two algorithms that both attain $\Otil(\sqrt K)$ regret in adversarial linear-Q MDPs when a simulator is granted, closing the gap with the lower bound. 
Both of our algorithms follow the same framework of the policy optimization algorithm with \textit{dilated exploration bonuses} of~\citep{luo2021policy_nipsver}, but with important modifications.
Specifically, our first algorithm applies Follow-the-Regularized-Leader (FTRL) with the log-barrier regularizer (instead of the negative entropy regularizer used by~\citet{luo2021policy_nipsver}) at each state,
 and we develop a new analysis inspired by \citet{zimmert2022return} %carried out to the OMD framework using log-barrier regularizers 
that allows the loss estimators to be arbitrarily negative (in contrast, in the usual analyses of FTRL, the loss estimator cannot be too negative). This new analysis is the key to improving the regret to $\O(\sqrt K)$ and might be of independent interest even for multi-armed bandits.

Although our first algorithm is simple in design and analysis, the log-barrier regularizer causes an $\O(\sqrt A)$ factor in the regret bound $\Otil(H^3 \sqrt{AdK})$ where $A$ is the number of actions.
As a rescue, we develop another algorithm that uses the negative entropy regularizer with a \textit{magnitude-reduced} loss estimator. This algorithm attains an $\Otil(H^3\sqrt{dK})$ regret bound, which is optimal in $d$ and $K$ up to logarithmic factors and gets rid of the $\text{poly}(A)$ dependency.
We note that our algorithm is of interest even for the special case of adversarial linear bandits, as it removes the need of explicit John's exploration introduced by \citet{bubeck2012towards}.
%\cw{in terms of $d$ and $K$?} \daiyan{I meant the 2nd alg closes gap in $A$. Not sure how/whether to say this.}
% \daiyan{why we can get $d^{1/2}$? seems all other works are getting $d$. let me check when writing appendix} \cw{$\sqrt{d\log A}$ is correct. Those $d$ bound is when $A=\infty$}

At last, we also apply our method to \textit{simulator-free} linear MDPs (formally defined in \Cref{def:linear MDP}), yielding an efficient algorithm with $\Otil(K^{8/9})$ regret and greatly outperforming the best existing bound $\Otil(K^{14/15})$ \citep{luo2021policy}.\footnote{\citep{luo2021policy} is a refined version of \citep{luo2021policy_nipsver}.}
Remarkably, in this application, we not only use our refined analysis for FTRL with the log-barrier regularizer, but also develop a more sample-efficient alternative to the Matrix Geometric Resampling (MGR) method introduced by \citet{neu2020efficient} and later adopted by \citet{neu2021online} and \citet{luo2021policy}, which could also be of independent interest.

\subsection{Related Work}
%!TEX root=main.tex

\textbf{MDPs with Linear-Function Approximation.}
Linear function approximation has been a standard technique for handling large state spaces in RL, but only recently have researchers provided strong regret guarantees for these algorithms under precise conditions. 
\citet{yang2020reinforcement} introduced a linear function approximation scheme called embedded linear transition MDPs where the transition kernels are bilinear, i.e., the probability of reaching state $s'$ in the $h$-th step of an episode after taking action $a$ at state $s$ is $\P(s'\mid s,a)=\phi(s,a)^\trans M\psi(s')$ for some known feature mappings $\phi$ and $\psi$ and an unknown $M$.
\citet{jin2020provably} loosen the assumption to linear MDPs (\Cref{def:linear MDP}), i.e., $\P(s'\mid s,a)=\phi(s,a)^\trans \nu(s')$ where $\nu$ is unknown.
\citet{zhou2021provably} study the linear mixture MDP with $\P(s'\mid s,a)=\psi(s'\mid s,a)^\trans \theta$ where $\theta$ is unknown. This generalizes embedded linear transition MDPs but is incomparable with linear MDPs.
Another common model is linear-Q MDPs \citep{abbasi2019politex} where the Q-function with respect to any policy $\pi$ can be written as $Q_h^\pi(s,a)=\phi^\trans(s,a) \theta_h^\pi$ for some unknown $\theta_h^\pi$ (\Cref{def: linear Q}). 
Linear MDPs are special cases of linear-Q MDPs.
%The linear-Q assumption is weaker than all previous models. %Still, existing works on linear-Q MDPs make extra assumptions that every policy induces a feature covariance matrix whose eigenvalue is bounded away from zero \citep{abbasi2019politex, wei2021learning}. 

\textbf{Adversarial MDPs.}
MDPs with adversarial losses were first studied in the tabular cases where the state space has a small size $S\ll \infty$.
\citet{zimin2013online} first assume known transitions and achieve $\Otil(H\sqrt K)$ regret when full information is available and $\Otil(\sqrt{HSAK})$ regret when only bandit feedback is available.
\citet{rosenberg2019online} then study the unknown-transition case and get $\Otil(HS\sqrt{AK})$ regret with full information.
Finally, \citet{jin2020learning} tackle the hardest case with unknown transitions and bandit feedback and achieve $\Otil(HS\sqrt{AK})$ regret as well.

Results for adversarial MDPs with linear-function approximations are summarized in Table~\ref{tab:table-regret}. 
Specifically, \citet{cai2020provably} study unknown-transition, full-information linear mixture MDPs and get $\Otil(dH^{3/2} \sqrt K)$ regret; this result is further improved to $\Otil(dH\sqrt K)$ by \citet{he2022near}.
\citet{neu2021online} then study known-transition, bandit-feedback linear MDPs. Provided with a ``good'' exploratory policy,\footnote{Formally, a ``good'' exploratory policy $\pi_0$ ensures that $\lambda_{\min}(\Sigma_h^{\pi_0})\ge \lambda_0$ for all $h\in [H]$ and a positive constant $\lambda_0$, where $\Sigma_h^{\pi_0}$ means the covariance of $\pi_0$ in layer $h$ (see \Cref{eq:covariance definition}). \label{footnote:good exploratory policy}}
their algorithm achieves  an $\Otil(\text{poly}(d,H)\sqrt{K/\lambda_0})$ regret guarantee.
Later, \citet{luo2021policy_nipsver} study bandit-feedback linear-Q MDPs with a simulator, providing an $\Otil(d^{2/3} H^2 K^{2/3})$ regret bound. A refined version \citep{luo2021policy} considers simulator-free bandit-feedback linear MDPs, giving an $\Otil(d^2H^4K^{14/15})$ bound. Meanwhile, provided with a good exploratory policy (see \Cref{footnote:good exploratory policy}), these bounds improve to $\Otil(\text{poly}(d,H)\sqrt{K/\lambda_0})$ and $\Otil(\text{poly}(d,H)\lambda_0^{-4/7}K^{6/7})$, respectively.
As a final remark, we point out that exploration in MDPs with huge state spaces is challenging and assuming an exploratory policy is unrealistic  --- as far as we know, there are no results ensuring even the \textit{existence} of such a ``good'' exploratory policy, %even in the restrictive embedded linear transition settings.
let alone finding it efficiently.
We emphasize that our results \textit{do not} require such unrealistic exploratory assumptions.

\textbf{Policy Optimization Algorithms.}
Policy optimization algorithms for RL directly optimize the learner's policy.
They are more resilient to model misspecification or even adversarial manipulation. But due to their local search nature, they suffer from the notorious \emph{distribution mismatch} issue. %Therefore, they are deemed to be less sample efficient compared to model-based or value-based approaches.
Recent theoretical works address this issue by exploration bonuses; see \citep{agarwal2020pc, shani2020optimistic, zanette2021cautiously} for stochastic settings and \citep{luo2021policy_nipsver} for adversarial settings. %\hl{In Shani et al., only the stochastic algorithm has bonus I beleive.}
While the latter work achieves near-optimal regret in tabular settings, there is a huge room for improvement when considering function approximation.
Our work builds on top of their framework (especially the dilated bonus idea) and significantly improves their results in linear-function approximation settings. 

%We compare our algorithms with related ones in \Cref{tab:table-regret}.

\textbf{Concurrent Works.}
Aside from this paper, there are several concurrent submissions also studying linear-Q or linear MDPs with adversarial losses, bandit feedback, and unknown transitions.
\citet{sherman2023improved} propose computationally efficient algorithms for linear MDPs: without simulators, their algorithm achieves $\Otil(K^{6/7})$ regret and outperforms our $\Otil(K^{8/9})$; however, when simulators are made available, their $\Otil(K^{2/3})$ result becomes worse than our $\Otil(\sqrt K)$ bound for linear-Q MDPs (recall that linear MDPs are special linear-Q MDPs), albeit being more computationally efficient.
\citet{kong2023improved} propose an inefficient algorithm for linear MDPs based on recent ideas of \citet{wagenmaker2022instance} and get $\Otil(K^{4/5})$ regret.
\citet{lancewicki2023delay} consider linear-Q MDPs with delayed feedback, which recovers the $\Otil(K^{2/3})$ bound by \citet{luo2021policy_nipsver} when there are no feedback delays.

\section{Preliminaries}
%!TEX root=main.tex

\textbf{Notations.}
For $N\in \mathbb N$, $[N]$ denotes the set $\{1,2,\ldots,N\}$.
For a (possibly infinite) set $X$, we denote the probability simplex over $X$ by $\triangle(X)$.
For a random event $\mathcal E$, denote its indicator by $\mathbbm 1[\mathcal E]$.
% Unless stated otherwise, the vector inner product $\langle \cdot,\cdot\rangle$ and the induced norm $\lVert \cdot\rVert$ stands for the $\ell_2$-norm.
For two square matrices $A,B$ of the same size, $\langle A,B\rangle$ stands for $\tr(A^\trans B)$.
For $x\in \mathbb R$, define $(x)_-$ as $\min\{x,0\}$. %Let $A=\lvert \mA\rvert$.
We use $\Otil$ to hide all logarithmic factors.

\textbf{No-Regret Learning in MDPs.} An (episodic) adversarial MDP is specified by a tuple $\mathcal M=(\mS,\mA,\P,\ell)$ where
$\mS$ is  the state space (possibly infinite),
$\mA$ is  the action space (assumed to be finite with size $A = \lvert \mA\rvert$),
$\mathbb P\colon \mathcal S\times \mathcal A\to \triangle(\mathcal S)$ is the transition,
and $\ell\colon [K]\times \mathcal S\times \mathcal A\to [0,1]$ is the loss function chosen arbitrarily by an adversary.
Following \citet{luo2021policy_nipsver}, the state space is assumed to be \textit{layered}, i.e., $\mS=\mS_1 \cup \mS_2 \cup \cdots \cup \mS_H$ where $\mS_{h}\cap \mS_{h'}=\varnothing$ for any $1\le h<h'\le H$, and transition is only possible from one layer to the next one, that is, $\P(s' \mid s,a) \neq 0$ only when $s \in \mS_h$ and $s' \in \mS_{h+1}$ for some $h < H$. We also assume that there is an initial state $s_1$ such that $\mS_1=\{s_1\}$.
Note that as the regret does not contain any dependency on the size of $\mS$, this assumption is made without loss of generality.

The game lasts for $K$ episodes, each with length $H$. For each episode $k$, the agent is initialized at state $s_1$. For each step $h\in [H]$, she chooses an action $a_h\in \mA$, suffers and observes the loss $\ell_{k}(s_h,a_h)$, and transits to a new state $s_{h+1}$ independently sampled from the transition $\P(\cdot \mid s_h,a_h)$.

A policy $\pi$ of the agent is a mapping from $\mS$ to $\triangle(\mA)$.
%{\pi_h\colon \mS_h\to \triangle(\mA)\}_{h\in [H]}$.
Let $\Pi$ be the set of all policies. For each episode $k\in [K]$, let $\pi_k\in \Pi$ be the policy deployed by the agent. Its expected loss is then indicated by $V^{\pi_k}_k(s_1)$, where the \textit{state-value function} (or V-function in short) $V^{\pi}_k(s_1)$ is defined as follows for any episode $k$ and policy $\pi$:
\begin{align*}
V^{\pi}_k(s_1)\triangleq &\E \left [\sum_{h=1}^H \ell_{k}(s_h,a_h)\middle \vert (s_h,a_h)\sim \pi,\forall h\in [H] \right ],
\end{align*}
with $(s_h,a_h)\sim \pi, \forall h\in [H]$ denoting a trajectory sampled from $\pi$. %$(s_h,a_h)$ is sampled as follows:
%\begin{equation*}
%a_{h'}\sim \pi(\cdot \mid s_{h'}),
%s_{h'+1}\sim \P(\cdot \mid s_{h'},a_{h'}),
%\forall 1\le h'< h.
%\end{equation*}
%\hl{the issue above is that a_h is missing}
The agent aims to minimize the cumulative total loss collected in all episodes, or equivalently, the \textit{regret}:
\begin{definition}[Regret]
The regret of the agent is
\begin{equation*}
\mathcal R_K\triangleq \E\left [\sum_{k=1}^K V_k^{\pi_1}(s_1)\right ]- \sum_{k=1}^K V_k^{\pi^\ast}(s_1),
\end{equation*}
where the expectation is taken over the randomness of both the agent and the transition, and $\pi^\ast$ is the optimal policy in hindsight (i.e., $\pi^\ast \in \argmin_{\pi \in \Pi} \sum_{k=1}^K V_k^{\pi}(s_1)$).
\end{definition}

\subsection{Linear-Q MDP and Linear MDP}
A concept closely related to the V-function is the \textit{action-value function} (\textit{a.k.a.} Q-function), which denotes the expected loss suffered by a policy $\pi$ starting from a given state-action pair $(s,a)$. Formally, we define for all $(s,a)\in \mS\times \mA$:
\begin{align}
\scalemath{0.9}{Q_k^\pi(s,a) = \ell_k(s,a) + \mathbbm 1[s \notin \mS_H]\E_{\substack{s'\sim \P(\cdot\mid s,a)\\a'\sim \pi(\cdot\mid s')}}\left[Q_k^\pi(s',a')\right].   \label{eq: Q-function}}
\end{align}
%\begin{align*}
%Q_{k}^{\pi}(s,a)\triangleq &\E\Bigg [\sum_{h'=h}^H \ell_{k}(s_{h'},a_{h'})\Bigg \vert (s_h, a_h) = (s,a), \\
%&\qquad\qquad (s_{h'},a_{h'})\sim \pi,\forall h'>h\Bigg ],
%\end{align*}
%where $h$ is the layer that $s$ lies in (i.e., $s\in \mS_h$).
%
We can then define linear-Q MDPs as follows.
\begin{definition}[{\citep[Assumption 1]{luo2021policy_nipsver}}]\label{def: linear Q}
In a \textit{linear-Q MDP}, each state-action pair $(s,a)$ is associated with a known feature $\phi(s,a)\in \mathbb R^d$ with $\lVert \phi(s,a)\rVert_2\le 1$.
Moreover, for any policy $\pi\in \Pi$, episode $k\in [K]$, and layer $h\in [H]$, there exists a (hidden) vector $\theta_{k,h}^\pi \in \mathbb R^d$ such that
\begin{equation*}
Q_{k}^\pi(s_h,a_h)=\phi(s_h,a_h)^\trans \theta_{k,h}^\pi,\quad \forall s_h\in \mS_h,a_h\in \mA.
\end{equation*}
We assume that $\lVert \theta_{k,h}^\pi\rVert_2\le \sqrt d H$ for all $k,h,\pi$.
\end{definition}

\Cref{def: linear Q} does not specify any specific structure on the transition, making learning extremely difficult.
Consequently, prior works all assume the availability of a simulator that allows us to sample from the hidden transition:
\begin{definition}[Simulator]\label{def:simulator}
A \textit{simulator} accepts a state-action pair $(s,a)\in \mS\times \mA$ and generates a next-state output sampled from the true transition, i.e., $s'\sim \P(\cdot\mid s,a)$.
\end{definition}

When a simulator is unavailable, we consider a special case called linear MDPs which further impose a linear structure on the transition, enabling the agent to learn:
\begin{definition}[{\citep[Assumption 3]{luo2021policy_nipsver}}]\label{def:linear MDP}
A \textit{linear MDP} is a linear-Q MDP that additionally satisfies the following property:
for any $h\in [H]$, the transition from layer $h$ to layer $h+1$ can be written as follows:
\begin{align*}
&\quad \P(s_{h+1}\mid s_h,a_h)=\langle \phi(s_h,a_h),\nu(s_{h+1})\rangle,\\
&\forall s_{h+1}\in \mS_{h+1},s_h\in \mS_h,a_h\in \mA.
\end{align*}
Here, the mapping $\nu\colon \mS\to \mathbb R^d$ is also unrevealed to the agent. We also assume that $\lVert \nu(s)\rVert_2\le \sqrt d$ for all $s\in \mS$.
\end{definition}

In MDPs with linear function approximation, another important quantity associated with each policy $\pi$ is its \textit{covariance matrix} at each layer $h\in [H]$, defined as follows:
\begin{equation}\label{eq:covariance definition}
\Sigma_h^\pi=\E_{(s_h,a_h)\sim \pi}[\phi(s_h,a_h) \phi(s_h,a_h)^\trans ],\quad \forall h\in [H].
\end{equation}

\subsection{Dilated Bonuses for Policy Optimization}\label{sec:dialted_bonuses}
%!TEX root=main.tex

We now briefly introduce the policy optimization method and the key dilated bonus idea of \citet{luo2021policy_nipsver}, upon which our algorithms are built.
The foundation of policy optimization is the performance difference lemma \citep{kakade2002approximately}, which asserts that the regret of the agent can be viewed as the weighted average of the regret of some local bandit problem over each state. Formally, we have
\begin{equation*}
\scalemath{0.9}{\mathcal R_K=\sum_{h=1}^H \E_{s_h\sim \pi^\ast}\left[\sum_{k=1}^K  \sum_{a} \left(\pi_k(a|s_h) -  \pi^\ast(a|s_h)\right) Q_k^{\pi_k}(s_h,a)\right],}
\end{equation*}
where the part inside the expectation is exactly the regret of a multi-armed bandit (MAB) problem at state $s_h$ with ``loss'' $Q_k^{\pi_k}(s_h,a)$ (instead of $\ell_k(s_h,a)$) for action $a$.
Policy optimization algorithms then naturally run a bandit algorithm at each state with an appropriate Q-function estimator to learn the best policy directly. 
For example, in linear-Q MDPs, since the ``loss" $Q_k^{\pi_k}(s_h,a)$ is linear in some feature, it suggests running an adversarial linear bandit algorithm such as \textsc{Exp2}~\citep{bubeck2012towards} at each state.

However, as discussed in detail by~\citet{luo2021policy_nipsver}, the bias/variance of the Q-function estimator often leads to a regret term of the form $\sum_{k,h} \E_{(s_h,a_h)\sim  \pi^\ast} [b_k(s_h,a_h)]$ for some non-negative functions $b_1, \ldots, b_K \colon \mS\times \mA\to \mathbb R_{\ge 0}$, where $b_k(s,a)$ is often prohibitively large if $(s,a)$ is rarely visited by the agent. Hence, the expectation over $(s_h,a_h)\sim  \pi^\ast$ could be potentially large as well, while the expectation over $(s_h,a_h)\sim \pi_k$ is relatively small. This is the well-known \textit{distribution mismatch} issue:
for example, when applying a linear bandit algorithm at each state, $b_k(s_h,a_h)$ is roughly $ \beta \lVert \phi(s_h,a_h)\rVert_{(\Sigma_h^{\pi_k})^{-1}}^2$ for  some $\beta > 0$. 
Thus, $\E_{(s_h,a_h)\sim  \pi_k} [b_k(s_h,a_h)] =
\beta \big \langle (\Sigma_h^{\pi_k})^{-1},$ $\E_{(s_h,a_h)\sim \pi_k}[\phi(s_h,a_h) \phi(s_h,a_h)^\trans] \big \rangle= \beta d$; on the other hand, its counterpart with $(s_h,a_h)$ drawn from $\pi^\ast$ (i.e., $\E_{(s_h,a_h)\sim \pi^\ast}[b_k(s_h,a_h)]$) could be arbitrarily large.

To address this distribution mismatch issue and ``convert'' the measure from $\pi_k$ to $\pi_\ast$, \citet{luo2021policy_nipsver} consider treating these functions as exploration bonuses and further propose the so-called ``\textit{dilated bonus}'' functions $B_k(s,a)$:
% defined as follows for all $k\in [K]$ and $(s,a)\in \mS\times \mA$:
\begin{align}
B_k(s,a)&=b_k(s,a) +{}\label{eq: dilated bonus}\\
&\quad \mathbbm 1[s \notin \mS_H] \big (1+\tfrac{1}{H}\big ) \E_{\substack{s'\sim \P(\cdot\mid s,a)\\a'\sim \pi_k(\cdot\mid s')}}\left[B_k(s',a')\right]. \nonumber
\end{align}
Compared to Eq.~\eqref{eq: Q-function}, $B_k$ can be viewed the Q-function of $b_k$, except that it assigns slightly more weight to deeper layers via the extra weighting $1+\frac{1}{H}$ to encourage more exploration to those layers.
Their algorithms then try to minimize the regret with respect to the ``optimistic" loss $Q_k^{\pi_k} - B_k$, instead of just $Q_k^{\pi_k}$, at each state.
Their analysis relies on the following key lemma, which shows that if the regret w.r.t. $Q_k^{\pi_k}-B_k$ at each state is in some particular form, then the distribution mismatch issue can be resolved: that is, the final regret $\mathcal R_K$ is in terms of $\sum_{k,h} \E_{(s_h,a_h)\sim {\color{blue} \pi_k}} [b_k(s_h,a_h)]$ instead of $\sum_{k,h} \E_{(s_h,a_h)\sim {\color{blue} \pi^\ast}} [b_k(s_h,a_h)]$. %Formally, they conclude the following lemma:
\begin{lemma}[Lemma 3.1 by \citet{luo2021policy}]\label{lem:dilated bonus}
If $\{b_k\}_{k=1}^K$ are non-negative, $B_k(s,a)$ is defined as in \Cref{eq: dilated bonus}, and the following holds for any state $s$:
\[
\scalemath{0.9}{ 
\begin{aligned}
&\hspace{-0.2cm}\quad \E\left[\sum_{k=1}^K  \sum_{a} \left(\pi_k(a|s) -  \pi^\ast(a|s)\right) \left(Q_k^{\pi_k}(s,a) -  B_k(s,a)\right)\right]  \nonumber \\
&\hspace{-0.2cm}\le X(s) + \E\left[\sum_{k=1}^K \sum_{a\in \mA} \left(\pi^\ast(a|s)b _k(s,a)+\frac{1}{H} \pi_k(a|s)B_k(s,a)\right)\right],
\end{aligned}}
\]
where $X(s)$ is an arbitrary function, then
\begin{equation*}
\hspace{-0.1cm}\mathcal R_K\le \sum_{h=1}^H\E_{s_h\sim \pi^\ast} [X(s_h)] ~+~ 3\sum_{k=1}^K \sum_{h=1}^H \E_{(s_h,a_h)\sim \pi_k} [b_k(s_h,a_h)].
\end{equation*}
\end{lemma}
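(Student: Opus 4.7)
The plan is to apply the performance difference lemma, introduce the dilated bonus $B_k$ by an add-and-subtract trick, and then exploit the extra $(1+1/H)$ factor in the recursion \eqref{eq: dilated bonus} to cancel all the troublesome $\pi^\ast$-weighted bonus terms, leaving only a value at the initial state $s_1$ which can then be bounded by $\pi_k$-weighted quantities.

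Concretely, I first write $\mathcal R_K = \sum_{h=1}^H \E_{s_h\sim \pi^\ast}\!\bigl[\sum_{k,a}(\pi_k(a|s_h)-\pi^\ast(a|s_h))\,Q_k^{\pi_k}(s_h,a)\bigr]$ via performance difference, and then decompose $Q_k^{\pi_k} = (Q_k^{\pi_k}-B_k) + B_k$ to split $\mathcal R_K$ into a part $(I)$ involving the ``optimistic'' loss and a part $(II)$ involving $B_k$ alone. For $(I)$ I plug in the hypothesis of the lemma state by state; writing $\bar V_k^B(s):=\sum_a \pi_k(a|s)B_k(s,a)$, this yields $\sum_h \E_{s_h\sim\pi^\ast}[X(s_h)] + \sum_{k,h}\E_{(s_h,a_h)\sim\pi^\ast}[b_k(s_h,a_h)] + \tfrac{1}{H}\sum_{k,h}\E_{s_h\sim\pi^\ast}[\bar V_k^B(s_h)]$.

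The heart of the argument is $(II)$. Using $\sum_a(\pi_k-\pi^\ast)(a|s)B_k(s,a) = \bar V_k^B(s) - \sum_a \pi^\ast(a|s)B_k(s,a)$ and then unfolding $\E_{(s_h,a_h)\sim\pi^\ast}[B_k(s_h,a_h)]$ by one step of \eqref{eq: dilated bonus} (noting that transitioning through $\mathbb P$ from a $\pi^\ast$-trajectory yields a $\pi^\ast$-trajectory at the next layer), one gets
\[
\E_{(s_h,a_h)\sim\pi^\ast}[B_k(s_h,a_h)] = \E_{(s_h,a_h)\sim\pi^\ast}[b_k(s_h,a_h)] + \bigl(1+\tfrac{1}{H}\bigr)\mathbbm 1[h<H]\,\E_{s_{h+1}\sim\pi^\ast}[\bar V_k^B(s_{h+1})].
\]
Summing over $h$ and telescoping gives $(II) = \sum_k \bar V_k^B(s_1) - \tfrac{1}{H}\sum_k\sum_{h=2}^H\E_{s_h\sim\pi^\ast}[\bar V_k^B(s_h)] - \sum_{k,h}\E_{(s_h,a_h)\sim\pi^\ast}[b_k(s_h,a_h)]$. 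Adding $(I)$ and $(II)$, the $\pi^\ast$-weighted $b_k$ terms cancel exactly, and the two $\tfrac{1}{H}\sum_h\E[\bar V_k^B]$ contributions combine into $\tfrac{1}{H}\bar V_k^B(s_1)$, leaving $\mathcal R_K \le \sum_h \E_{s_h\sim\pi^\ast}[X(s_h)] + (1+\tfrac{1}{H})\sum_k \bar V_k^B(s_1)$.

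It only remains to bound $\bar V_k^B(s_1)$ in terms of $\pi_k$-expectations of $b_k$. Iterating \eqref{eq: dilated bonus} along $\pi_k$ starting from $s_1$ gives the explicit formula $\bar V_k^B(s_1) = \sum_{h=1}^H (1+\tfrac{1}{H})^{h-1}\,\E_{(s_h,a_h)\sim\pi_k}[b_k(s_h,a_h)]$, so
\[
\bigl(1+\tfrac{1}{H}\bigr)\,\bar V_k^B(s_1) \le \bigl(1+\tfrac{1}{H}\bigr)^H \sum_{h=1}^H \E_{(s_h,a_h)\sim\pi_k}[b_k(s_h,a_h)] \le e\sum_{h=1}^H \E_{(s_h,a_h)\sim\pi_k}[b_k(s_h,a_h)],
\]
and summing over $k$ and using $e\le 3$ completes the proof. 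The main (and really the only) non-obvious step is the telescoping in $(II)$: once one recognizes that the $\tfrac{1}{H}\pi_k B_k$ term in the hypothesis is precisely tuned so that, together with the $(1+\tfrac{1}{H})$ factor in the dilated Bellman equation, all interior-layer $\pi^\ast$-bonus terms cancel, the remaining calculation is routine.
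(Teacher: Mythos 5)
The paper cites this as Lemma 3.1 of \citet{luo2021policy} and does not include its own proof, so there is no in-paper argument to compare against; I will just verify yours. Your proof is correct: the performance-difference decomposition into $(I)$ (hypothesis applied state-by-state) and $(II)$, the one-step unfolding of $\E_{(s_h,a_h)\sim\pi^\ast}[B_k(s_h,a_h)]$ via \Cref{eq: dilated bonus} --- where the crucial observation that $a'\sim\pi_k(\cdot\mid s')$ (not $\pi^\ast$) is what makes the inner term equal $\bar V_k^B(s_{h+1})$ --- the telescoping that cancels the $\pi^\ast$-weighted $b_k$ terms and collapses the two $\tfrac 1H\bar V_k^B$ contributions to $\tfrac 1H\bar V_k^B(s_1)$, the unrolling $\bar V_k^B(s_1)=\sum_h(1+\tfrac1H)^{h-1}\E_{(s_h,a_h)\sim\pi_k}[b_k(s_h,a_h)]$, and the bound $(1+\tfrac1H)^H\le e\le 3$ (which is where the non-negativity of $b_k$ is used, to upper-bound each power) all check out. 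The original proof of \citet{luo2021policy} runs a backward induction over layers, of which your explicit telescoping is the equivalent unrolled form, so there is no substantive difference of approach.
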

\citet{luo2021policy_nipsver} show that the per-state regret bound in the condition of \Cref{lem:dilated bonus} indeed holds when deploying Follow-the-Regularized-Leader (FTRL) with the \textit{negative entropy} regularizer at each state.
However, for technical issues (which we will discuss and resolve in \Cref{sec:linear-q log-barrier,sec:linear-q variance-reduced}), they only achieve $\Otil(K^{2/3})$ regret in linear-Q MDPs.

% Typically, we set $b_k(s,a)\approx \beta \lVert \phi(s,a)\rVert_{(\Sigma_h^{\pi_k})^{-1}}^2$ for $(s,a)\in \mS_h\times \mA$, $\forall h\in [H]$. The second term on the RHS then translates to
% {\small $\sum\limits_{k,h} \beta \bigg \langle (\Sigma_h^{\pi_k})^{-1},\E\limits_{(s_h,a_h)\sim \pi_k}[\phi(s_h,a_h) \phi(s_h,a_h)^\trans] \bigg \rangle=\O(\beta dHK)$}.
% If we do not use \Cref{lem:dilated bonus} (and directly use the performance difference lemma \citep{kakade2002approximately}), we will encounter the term $\sum_{k,h} \E_{(s_h,a_h)\sim {\color{blue} \pi^\ast}} [b_k(s_h,a_h)]$ that is much harder to bound. \daiyan{say sth more?}

\section{$\Otil(\sqrt K)$ Regret for Linear-Q MDPs via Refined Log-Barrier Analysis}
\label{sec:linear-q log-barrier}
%!TEX root=main.tex

As mentioned, our first algorithm replaces the negative entropy regularizer in FTRL used by \citet{luo2021policy_nipsver} with another regularizer called the \textit{log-barrier}.
Specifically, FTRL applied to the action space $\mA$ maintains a sequence of distributions $x_1,x_2,\ldots,x_T\in \triangle([A])$ via the FTRL update
\begin{equation*}
x_t=\argmin_{x\in \triangle([A])} \left \{\eta \left \langle x,\sum_{\tau<t}c_\tau\right \rangle+\Psi(x)\right \},
\end{equation*}
where $c_1, \ldots, c_T \in \mathbb R^A$ is the loss sequence, $\Psi: \triangle([A]) \rightarrow \mathbb R$ is the regularizer, and $\eta > 0$ is the learning rate.
The classical MAB algorithm \textsc{Exp3}~\citep{auer2002nonstochastic} and its linear-bandit variant \textsc{Exp2}~\citep{bubeck2012towards} both use the negative entropy regularizer $\Psi(x)=\sum_{i=1}^A p_i\ln p_i$.

Starting from~\citep{foster2016learning}, a sequence of works discover many nice properties of a different regularizer called log-barrier, defined as $\Psi(p)=\sum_{i=1}^A \ln \frac{1}{p_i}$.
Here, we present yet another new and useful property of log-barrier, summarized in the following lemma.%
\footnote{When revising this manuscript, we found that a key property we use when proving this lemma (see our \Cref{eq:key property of log-barrier}) was also independently developed by \citet[Corollary 7]{putta2022scale} under the name ``new local-norm lower-bounds for Bregman divergences''. They used this property to handle scale-free adversarial Multi-Armed Bandits (MABs) where the losses are not always constantly bounded but can be arbitrarily positive or negative.}
Our proof is inspired by the analysis of the log-determinant regularizer by~\citet{zimmert2022return} and is deferred to \Cref{sec:log-barrier lemma proof}.

\begin{lemma}\label{lem:log-barrier regret bound}
Let $x_1,\ldots,x_T\in  \triangle([A])$ be defined as
\begin{equation*}
x_{t}=\argmin_{x\in \triangle([A])}\bigg \{\eta\bigg\langle x,\sum_{\tau<t}c_\tau\bigg\rangle+\Psi(x)\bigg \},\quad \forall t=1,\ldots, T,
\end{equation*}
where $c_t\in \mathbb R^A$ is an arbitrary loss vector corresponding to the $t$-th iteration and $\Psi(p)=\sum_{i=1}^A \ln \frac{1}{p_i}$ is the log-barrier regularizer.
% Additionally, if $\eta c_{t,i}\ge -\frac 12\epsilon^{-1}$ and $\E[c_{t,i}]\in [-1,1]$ for all $t\in [T]$ and $i\in [A]$, then
Then the regret against any distribution $y\in \triangle([A])$ with respect to $\{c_t\}_{t=1}^T$ is bounded as:
\begin{align*}
\sum_{t=1}^T\langle x_t-y,c_t\rangle\
\le \frac{\Psi(y)-\Psi(x_1)}{\eta}+\eta \sum_{t=1}^T\sum_{i=1}^A x_{t,i}c_{t,i}^2.
\end{align*}
\end{lemma}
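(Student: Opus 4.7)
The plan is to reduce the regret bound to a new local-norm lower bound on the log-barrier Bregman divergence; once that lower bound is in hand, everything else is standard FTRL bookkeeping.

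\textbf{Step 1: FTRL regret decomposition.} Let $F_t(x) = \eta\langle x, \sum_{\tau<t} c_\tau\rangle + \Psi(x)$, so $x_t = \argmin_{x\in\triangle([A])} F_t(x)$. A standard be-the-leader telescoping, together with the first-order optimality condition $F_t(x_{t+1}) \ge F_t(x_t) + D_\Psi(x_{t+1}, x_t)$, yields
\[
\sum_{t=1}^T\langle x_t-y,c_t\rangle
\le \frac{\Psi(y)-\Psi(x_1)}{\eta}
+ \sum_{t=1}^T \left(\langle x_t-x_{t+1},c_t\rangle - \frac{D_\Psi(x_{t+1},x_t)}{\eta}\right),
\]
so the problem reduces to bounding each per-round stability bracket by $\eta \sum_i x_{t,i} c_{t,i}^2$.

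\textbf{Step 2: per-coordinate Young's inequality.} For each $i$, apply $uv \le au^2 + v^2/(4a)$ with $u = c_{t,i}$, $v = x_{t,i}-x_{t+1,i}$, and $a = \eta x_{t,i} > 0$ (iterates stay in the interior since $\Psi$ is a barrier) to obtain
\[
(x_{t,i}-x_{t+1,i})c_{t,i} \le \eta x_{t,i} c_{t,i}^2 + \frac{(x_{t,i}-x_{t+1,i})^2}{4\eta x_{t,i}}.
\]
Summing over $i$, everything reduces to proving the Bregman-divergence local-norm lower bound
\[
D_\Psi(y,x) \;\ge\; \frac14 \sum_i \frac{(y_i - x_i)^2}{x_i}
\qquad \forall\, x,y \in \triangle([A]),
\]
which is exactly the ``new local-norm lower-bound for Bregman divergences'' referenced in the footnote.

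\textbf{Step 3: proof of the Bregman lower bound.} Writing $D_\Psi(y,x) = \sum_i \phi(y_i/x_i)$ with $\phi(z) = z-1-\ln z$, the claim reduces per coordinate to $\phi(z_i) \ge x_i(z_i-1)^2/4$ where $z_i = y_i/x_i$. I would split into two cases. If $z_i \le 1$, the Taylor expansion of $\ln(1-t)$ gives $\phi(z_i) \ge (z_i-1)^2/2$, which dominates $x_i(z_i-1)^2/4$ since $x_i \le 1$. If $z_i \ge 1$, a short monotonicity argument (differentiate $g(z) = \phi(z) - (z-1)^2/(2z)$ to get $g'(z) = (z-1)^2/(2z^2) \ge 0$ and note $g(1) = 0$) gives $\phi(z_i) \ge (z_i-1)^2/(2z_i)$; and crucially $x_i z_i = y_i \le 1 \le 2$ forces $x_i/4 \le 1/(2z_i)$, closing the case.

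\textbf{Main obstacle.} The real work is Step 3. Textbook stability analyses of log-barrier FTRL impose a small-step condition such as $\eta|c_{t,i}|\le 1/2$ and end up with the classical local norm $\sum_i x_{t,i}^2 c_{t,i}^2$; the whole point here is to drop any such assumption so that loss estimators may be arbitrarily negative. The case-split above succeeds precisely because the simplex constraint $y_i \le 1$ is exploited (only in the $z_i \ge 1$ case) to absorb the troublesome $1/z$ factor in the best quadratic lower bound of $\phi$ on $[1,\infty)$, yielding the (slightly weaker but) unconditional penalty $\sum_i x_{t,i} c_{t,i}^2$.
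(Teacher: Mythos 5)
Your proof is correct and follows essentially the same route as the paper: the standard FTRL decomposition, a local-norm lower bound of the form $D_\Psi(y,x) \gtrsim \sum_i (y_i - x_i)^2/x_i$ that crucially exploits the simplex constraint, and Young's/AM-GM on the stability term. The only difference is how the Bregman lower bound is verified --- the paper writes $\ln\frac{x}{x+\Delta}+\frac{\Delta}{x}=\int_0^\Delta \frac{\alpha}{x(x+\alpha)}\,\mathrm{d}\alpha$ and uses $x+\alpha\le 1$ to obtain the constant $\tfrac12$, whereas you split into cases $z\le 1$ and $z\ge 1$ with Taylor and a monotonicity argument to get $\tfrac14$ --- but both exploit the same simplex fact and both suffice for the stated bound.
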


Readers familiar with the \textsc{Exp2}/\textsc{Exp3} analysis would immediately recognize this regret bound since it is also the same bound that FTRL with negative entropy (also known as Hedge) enjoys.
However, the key distinction is that for log-barrier, this holds without \textit{any} requirement on the magnitude of $c_{t,i}$, while for negative entropy, one must require $\eta c_{t,i}\ge -1$ for all $t\in [T]$ and $i\in [A]$ (i.e., losses cannot be too negative; see \Cref{lem:hedge lemma} in the appendix for more details).
This turns out to be critical for improving the regret when applying it to linear-Q MDPs, as discussed later.

\vspace{2pt}
\begin{remark}\label{remark:log-barrier lemma is powerful}
Our \Cref{lem:log-barrier regret bound} also answers the open question raised by \citet{zheng2019equipping} (see their remark after Lemma 14): it is indeed possible for FTRL with log-barrier to attain a $\sum_i x_{t,i}c_{t,i}^2$-style bound without any restrictions on the losses.
As log-barrier regularizers are widely used in the literature for its better data-adaptivity \citep{wei2018more,ito2021parameter}, we expect this result to be of independent interest. %\cw{not sure how to connect this with \cite{zimmert2022return}}
\end{remark}

\textbf{Linear-Q Algorithm.}
Our final algorithm for linear-Q MDPs is shown in \Cref{alg:linear-q with log-barrier},
which is nearly the same as \citep[Algorithm 2]{luo2021policy_nipsver} except for the part marked in blue where we use the log-barrier regularizer, as mentioned.
Specifically, based on the discussions in \Cref{sec:dialted_bonuses}, the loss fed to FTRL is 
$\hat Q_{k}-B_{k}$.
Here, $\hat Q_k$ (defined in \Cref{eq:Q_hat}) is a standard estimator for $Q_k^{\pi_k}$ involving an estimate $\hat \Sigma_{k,h}^\dagger$ for the inverse of $\Sigma_h^{\pi_k}$, which is constructed via the Matrix Geometric Resampling procedure~\citep{neu2020efficient} with the help of the simulator. Meanwhile, $B_k$ is the dilated bonus following the idea of \Cref{eq: dilated bonus} with $b_k(s,a)$ defined as follows for all $s\in \mS_h$ and $a\in \mA$:
\begin{equation*}
b_k(s,a) = \beta \left (\lVert \phi(s,a)\rVert_{\hat \Sigma_{k,h}^\dagger}^2+ \E_{\tilde a\sim \pi_k(\cdot \mid s)}\left [\lVert \phi(s,\tilde a)\rVert_{\hat \Sigma_{k,h}^\dagger}^2\right ] \right ).
\end{equation*}
The calculation of $B_k$ again requires the simulator -- see \Cref{alg:bonus calculation in linear Q} for the calculation procedure and \citep{luo2021policy_nipsver} for more detailed discussions.

%!TEX root=main.tex

\begin{algorithm}[t!]
\caption{Improved Linear-Q Algorithm\\(Using Log-Barrier Regularizers)}
\label{alg:linear-q with log-barrier}
\begin{algorithmic}[1]
\REQUIRE{Learning rate $\eta$, bonus parameter $\beta$, MGR parameters $\gamma$ and $\epsilon$, FTRL regularizer $\Psi(p)=\sum_{i=1}^{\lvert \mA\rvert} \ln \frac{1}{p_i}$.}
%\STATE Let $\pi_0(\cdot\mid s)$ be uniform over $\mA$ for all $s\in \mS$.
\FOR{$k=1,2,\ldots,K$}
\STATE Let $\pi_k\in \Pi$ be defined as follows for all $s
\in \mS$:
{\color{blue}
\begin{equation}\label{eq:OMD with log-barrier regularizer expression}
\scalemath{0.97}{
\begin{aligned}
\pi_k(s)=&\argmin_{p\in \triangle(\mA)}\bigg \{\Psi(p)+\\
&\quad \eta\sum_{k'<k} \langle p(\cdot),\hat Q_{k'}(s,\cdot)-B_{k'}(s,\cdot)\rangle\bigg \},
\end{aligned}}
\end{equation}
}
where $B_k(s,a)$ is calculated in \Cref{alg:bonus calculation in linear Q}.
\STATE Execute $\pi_k$, observing the trajectory $(s_{k,h},a_{k,h})$ and losses $\ell_k(s_{k,h},a_{k,h})$ for all $h\in[H]$.
\STATE Construct covariance matrix inverse estimate $\hat \Sigma_{k,h}^\dagger$ using Matrix Geometric Resampling (\Cref{alg:matrix geometric resampling} in the appendix) s.t. $\lVert \hat \Sigma_{k,h}^\dagger\rVert_2\le \frac 1\gamma$,
\begin{equation*}
\left \lVert \E[\hat \Sigma_{k,h}^\dagger]-(\gamma I+\Sigma^{\pi_k}_h)^{-1}\right \rVert_2 \le \epsilon,
\end{equation*}
and the following holds w.p. $1-K^{-3}$:
\begin{equation*}
\left \lVert (\hat \Sigma_{k,h}^\dagger)^{1/2}\Sigma^{\pi_k}_h(\hat \Sigma_{k,h}^\dagger)^{1/2}\right \rVert_2 \le 2.
\end{equation*}
\STATE Let $L_{k,h}=\sum_{h'\ge h}\ell_k(s_{k,h'},a_{k,h'})$.
Estimate the Q-function $Q_k^{\pi_k}(s,a)$ as follows (for $s \in \mS_h$):
\begin{align}
\hat Q_k(s,a)=\phi(s,a)^\trans \hat \Sigma_{k,h}^\dagger \phi(s_{k,h},a_{k,h}) L_{k,h}. \label{eq:Q_hat}
\end{align}
\ENDFOR
\end{algorithmic}
\end{algorithm}

With the help of the property of the log-barrier regularizer stated in \Cref{lem:log-barrier regret bound}, we are able to show that our algorithm achieves $\Otil(\sqrt{K})$ regret, formally stated as follows.
\begin{theorem}\label{thm:linear-q log-barrier main theorem}
\Cref{alg:linear-q with log-barrier} when applied to a linear-Q MDP (\Cref{def: linear Q}) with a simulator ensures the following when $12\eta \beta H^2\le \gamma$, $8\eta H^2\le \beta$, and $\epsilon\le (H^2K)^{-1}$:
\begin{align*}
\mathcal R_K=\Otil\bigg (&\beta d HK+\frac \gamma \beta dH^3 K+\frac{AH}{\eta}+\\&\frac \beta \gamma AH^2+A\sqrt dH^2+\frac{\eta H^3}{\gamma^2 K^2}\bigg ).
\end{align*}
Picking $\eta=\sqrt{\frac{A}{dH^4K}}$, $\beta=8\sqrt{\frac{A}{dK}}$, $\gamma=\frac{96A}{dK}$, and $\epsilon=\frac{1}{H^2K}$, we conclude that $\mathcal R_K=\Otil(H^3\sqrt{AdK})$.
\end{theorem}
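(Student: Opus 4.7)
The plan is to establish the per-state regret bound required by the dilated bonus lemma (\Cref{lem:dilated bonus}) and then combine it with the standard bonus-magnitude calculation. At each fixed state $s$, the policies $\pi_1(\cdot|s),\ldots,\pi_K(\cdot|s)$ are produced by running FTRL with the log-barrier regularizer on the loss sequence $c_k(\cdot) = \hat Q_k(s,\cdot) - B_k(s,\cdot)$, so \Cref{lem:log-barrier regret bound} immediately yields
\begin{equation*}
\sum_k \langle \pi_k(\cdot|s)-\pi^*(\cdot|s), c_k\rangle \le \frac{\Psi(\pi^*(\cdot|s))-\Psi(\pi_1(\cdot|s))}{\eta} + \eta \sum_k \sum_a \pi_k(a|s) c_k(a)^2.
\end{equation*}
The crucial point is that \Cref{lem:log-barrier regret bound} imposes \emph{no} negativity restriction on $c_k$: this matters here because $B_k$ can be as large as $\Theta(H\beta/\gamma)$ (by unrolling \Cref{eq: dilated bonus}) while $\hat Q_k$ is typically small, making $c_k$ arbitrarily negative — a situation the usual Hedge analysis does not permit.

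First I would clip $\pi^*$ so that every action carries probability at least $1/K$, which alters the regret by at most $\O(HA)$ (absorbable into the $A\sqrt dH^2$ slack) and ensures $\Psi(\pi^*)-\Psi(\pi_1) = \O(A\log K)$, giving the $AH/\eta$ term. Next I would replace $\hat Q_k$ in the linear term by $Q_k^{\pi_k}$ using the MGR guarantee $\lVert \E[\hat\Sigma_{k,h}^\dagger]-(\gamma I+\Sigma_h^{\pi_k})^{-1}\rVert_2\le\epsilon$: the $\gamma I$ regularization produces a multiplicative bias of order $\gamma \lVert\phi(s,a)\rVert^2_{\hat\Sigma_{k,h}^\dagger}$ after contracting with the observed $\phi(s_{k,h},a_{k,h})L_{k,h}$, which once integrated over $\pi^*$ and summed over $(k,h)$ accounts for the $(\gamma/\beta)dH^3K$ term (the $1/\beta$ arising because the $\lVert\phi\rVert^2_{\hat\Sigma^\dagger}$ factor is already present in $b_k$, so it must be scaled), while the sampling error $\epsilon$ produces the negligible $\eta H^3/(\gamma^2 K^2)$ contribution.

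Then I would bound the stability sum by $2\eta\hat Q_k^2+2\eta B_k^2$. For the $\hat Q_k^2$ piece, after taking conditional expectation over the trajectory, the high-probability spectral guarantee $(\hat\Sigma_{k,h}^\dagger)^{1/2}\Sigma_h^{\pi_k}(\hat\Sigma_{k,h}^\dagger)^{1/2}\preceq 2I$ yields $\E[\hat Q_k(s,a)^2]\le 2H^2\lVert\phi(s,a)\rVert^2_{\hat\Sigma_{k,h}^\dagger}$, so under $8\eta H^2\le\beta$ this can be absorbed into the $\pi^*(a|s)b_k(s,a)$ term on the right-hand side of \Cref{lem:dilated bonus}'s hypothesis (exploiting the fact that $b_k(s,a)$ already contains a symmetric $\E_{\tilde a\sim \pi_k}[\lVert\phi(s,\tilde a)\rVert^2_{\hat\Sigma_{k,h}^\dagger}]$ term independent of $a$). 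For the $B_k^2$ piece, the crude bound $B_k\le\O(H\beta/\gamma)$ linearizes it to $\eta B_k^2 \le \frac{1}{H}B_k$ under $12\eta\beta H^2 \le \gamma$, absorbable by the $\frac{1}{H}\pi_k(a|s)B_k(s,a)$ slack on the right-hand side. The main obstacle is this bookkeeping: one must verify that after all absorptions the residual ``$X(s)$'' contains no unabsorbed $\pi^*$-dependence, while also handling the low-probability event where the spectral bound on $\hat\Sigma_{k,h}^\dagger$ fails, which I would control by the worst-case estimate $B_k,b_k\le\O(\beta/\gamma)$ times the failure probability $K^{-3}$ per episode — this yields the $(\beta/\gamma)AH^2$ term in the bound.

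Finally, invoking \Cref{lem:dilated bonus} bounds $\mathcal R_K$ by $\sum_h\E_{s_h\sim\pi^*}[X(s_h)]$ plus $3\sum_{k,h}\E_{(s_h,a_h)\sim\pi_k}[b_k(s_h,a_h)]$. The latter equals $2\beta\sum_{k,h}\E[\langle \Sigma_h^{\pi_k},\hat\Sigma_{k,h}^\dagger\rangle]$, which by the spectral guarantee is at most $4\beta dHK$, supplying the $\beta dHK$ term. Assembling every contribution reproduces the claimed bound, and the stated choices $\eta=\sqrt{A/(dH^4K)}$, $\beta=8\sqrt{A/(dK)}$, $\gamma=96A/(dK)$, $\epsilon=1/(H^2K)$ are easily verified to satisfy all three conditions $12\eta\beta H^2\le\gamma$, $8\eta H^2\le\beta$, $\epsilon\le (H^2K)^{-1}$, and to balance the dominant $(\gamma/\beta)dH^3K$ and $AH/\eta$ terms at the common order $\Otil(H^3\sqrt{AdK})$.
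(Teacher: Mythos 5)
Your high-level plan matches the paper's: per-state decomposition into bias and FTRL stability pieces, clipping the comparator, absorbing $\lVert\phi\rVert^2_{\hat\Sigma_{k,h}^\dagger}$-type quantities into the dilated bonus slack, and closing with \Cref{lem:dilated bonus}. However, you misattribute two of the six terms in the bound, and carrying your plan out literally would not reproduce the claimed bound.

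First, the $\frac{\beta}{\gamma}AH^2$ term is \emph{not} the cost of the $K^{-3}$-probability event where the spectral guarantee $(\hat\Sigma_{k,h}^\dagger)^{1/2}\Sigma_h^{\pi_k}(\hat\Sigma_{k,h}^\dagger)^{1/2}\preceq 2I$ fails. If you multiply the worst-case $O(\beta H/\gamma)$ by $K^{-3}$ and sum over $K$ episodes you get $O(\beta H K^{-2}/\gamma)$, which for $\gamma\approx 1/K$ is $O(\beta H/K)$ --- negligible and the wrong shape. In the paper's proof (\Cref{lem:Reg-Term of log-barrier}), $\frac{\beta}{\gamma}AH^2$ comes from the clipping step: the $O(A/K)$ TV distance between $\pi^*$ and $\tilde\pi^*$ is multiplied by the \emph{magnitude} of the clipped loss, $|\E[\hat Q_k(s,a)-B_k(s,a)]|=O(H(\sqrt d + \beta/\gamma))$ (dominated by $B_k\le 6\beta H/\gamma$), then summed over $K$ episodes and $H$ layers. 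Your "$O(HA)$" estimate for the clipping cost is too small by a factor of $\beta/\gamma$ because the clipped losses are unbounded --- this is precisely the place where the magnitude of $B_k$ shows up, and it would be missed if you followed your attribution.

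Second, the $\frac{\eta H^3}{\gamma^2 K^2}$ term is \emph{not} the MGR sampling error $\epsilon$; under $\epsilon\le(H^2K)^{-1}$ the $\epsilon$ contributions in both bias terms are all $O(1)$. It is exactly the low-probability spectral-failure event you invoked in the wrong place: in the stability term $\eta\sum_a\pi_k(a|s)\hat Q_k(s,a)^2$, the conditional second-moment bound $\E[\hat Q_k^2]\le 2H^2\lVert\phi\rVert^2_{\hat\Sigma_{k,h}^\dagger}$ only holds on the good event, and on the $K^{-3}$-probability bad event one falls back to $\lVert\hat\Sigma_{k,h}^\dagger\rVert_2\le\gamma^{-1}$, giving $\eta H^2\gamma^{-2}K^{-3}$ per episode, hence $\eta H^3/(\gamma^2 K^2)$ overall.

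A smaller point: your stated motivation for why log-barrier matters is inverted. The problematic negativity of $c_k=\hat Q_k - B_k$ comes primarily from $\hat Q_k$, which can reach $-H/\gamma$, not from $B_k\le 6\beta H/\gamma$; under the eventual tuning one has $\beta/\gamma\approx\tfrac{1}{12}\sqrt{dK/A}\ll 1/\gamma$. This does not change the formal steps you write, but it indicates a misreading of why $2H\eta\le\gamma$ was the bottleneck in the prior analysis.
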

% \begin{remark}
% We remark again that \Cref{lem:log-barrier regret bound} removes \textit{all} constraints on $\eta$. The two conditions in this theorem are to ensure that the coefficients before the bonuses are bounded by $\beta$, which is required by \Cref{lem:dilated bonus}. As we will see, such requirements do not stop us from an $\O(\sqrt K)$-style bound, though it results in a suboptimal dependency on $H$.
% \end{remark}

We defer the full proof to \Cref{sec:appendix linear-q log-barrier} and provide a sketch below highlighting why removing any constraints on the losses fed to FTRL is critical to improving the regret.

\begin{proof}[Proof Sketch]
To bound $\mathcal R_K$ by the dilated bonus lemma (\Cref{lem:dilated bonus}), we focus on a single $(k,s)$-pair and bound
\begin{equation}\label{eq:objective of each state}
\textstyle \sum_a (\pi_k(a\mid s)-\pi^\ast(a\mid s))(Q_k^{\pi_k}(s,a)-B_k(s,a)).
\end{equation}
As the FTRL lemma only applies to the losses fed into \Cref{eq:OMD with log-barrier regularizer expression} (namely $\hat Q_k(s,a)-B_k(s,a)$), we add and substract $\hat Q_k(s,a)$ in \Cref{eq:objective of each state}.
Hence, after summing over $k\in [K]$ and $s\sim \pi^\ast$, we need to consider the following three terms to figure out the term $X(s)$ in \Cref{lem:dilated bonus}:
\begin{itemize}
\item {\small $\textsc{Bias-1}=\sum\limits_{k,h} \E\limits_{s_h\sim \pi^\ast}\bigg [\E\limits_{a_h\sim \pi_k}\bigg [Q_k^{\pi_k}(s_h,a_h)-\hat Q_k(s_h,a_h)\bigg ]\bigg ]$}, measuring the under-estimation of $\hat Q_k$ w.r.t. $\pi_k$.
\item {\small $\textsc{Bias-2}=\sum\limits_{k,h} \E\limits_{s_h\sim \pi^\ast}\bigg [\E\limits_{a_h\sim \pi^\ast}\bigg [\hat Q_k(s_h,a_h)-Q_k^{\pi_k}(s_h,a_h)\bigg ]\bigg ]$}, measuring the over-estimation of $\hat Q_k$ w.r.t. $\pi^\ast$.
\item {\small $\textsc{Reg-Term}=\sum\limits_{k,h}\E\limits_{s_h\sim \pi^\ast}\bigg [\sum\limits_a (\pi_k(a\vert s)-\pi^\ast(a\vert s))(\hat Q_k(s,a)-B_k(s,a))\bigg ]$}, which can be tackled by \Cref{lem:log-barrier regret bound}.
\end{itemize}

As \textsc{Bias-1} and \textsc{Bias-2} are independent of the regularizer, they are handled similarly to the original analysis and both contribute $\Otil(\frac \gamma \beta dH^3 K)$ %+\epsilon H^2K+\epsilon \beta HK)$ 
to $\sum_{h}\E_{s_h\sim \pi^\ast} [X(s_h)]$ (see \Cref{lem:Bias-1 of log-barrier} in the appendix).
To handle $\textsc{Reg-Term}$, we apply \Cref{lem:log-barrier regret bound}.
As in standard log-barrier analyses, since $\Psi(\pi^\ast(\cdot \mid s))-\Psi(\pi_1(\cdot \mid s))$ is potentially infinity, we introduce a smooth version $\tilde \pi^\ast$ defined via
\begin{equation*}
\tilde \pi^\ast(a\mid s)=(1-AK^{-1}) \pi^\ast(a\mid s)+K^{-1}.
\end{equation*}

Applying \Cref{lem:log-barrier regret bound} to a given state $s\in \mS$, we derive
\begin{equation*}
\Psi(\tilde\pi^\ast(\cdot \mid s))-\Psi(\pi_1(\cdot \mid s)) \leq A\log K.
\end{equation*}

Hence, fixing the state $s$, we can write
\begin{align*}
&\scalemath{0.98}{\quad \sum_{k=1}^K\sum_{a\in \mA} (\pi_k(a\mid s)-\pi^\ast(a\mid s))(\hat Q_k(s,a)-B_k(s,a))} \\
&\scalemath{0.98}{\le \sum_{k=1}^K\sum_{a\in \mA} (\tilde \pi^\ast(a\mid s)-\pi^\ast(a\mid s))(\hat Q_k(s,a)-B_k(s,a))+}\\
%&\quad \eta^{-1}D_\Psi(\tilde \pi^\ast(\cdot \mid s),\pi_1(\cdot \mid s))+\\
&\scalemath{0.98}{\quad \frac{A\log K}{\eta} + \eta \sum_{k=1}^K \sum_{a\in \mA} \pi_k(a\mid s)^2 (\hat Q_k(s,a)-B_k(s,a))^2.}
\end{align*}
After summing over $h$ and taking expectation over $s$,
we show that the first term is of order $\Otil(AH^2(\sqrt{d}+\beta/\gamma))$, while the last term's contribution to $\sum_{h}\E_{s_h\sim \pi^\ast} [X(s_h)]$ is of order $\Otil(\tfrac{\eta H^3}{\gamma^2K^2})$ by the same analysis of~\citep{luo2021policy_nipsver}.
Finally, we combine everything, apply \Cref{lem:dilated bonus}, and show that the term $\sum_{k,h} \E_{(s_h,a_h)\sim \pi_k} [b_k(s_h,a_h)]$ in this case is $\Otil(\beta dHK)$, finishing the proof for the first bound.

While this bound looks almost identical to that of~\citep{luo2021policy_nipsver}, their analysis requires $2H\eta \leq \gamma$ because of the use of the negative-entropy regularizer.
This prevents them from picking a very small $\gamma$ and eventually leads to sub-optimal $\Otil(K^{2/3})$ regret.
On the other hand, our log-barrier analysis allows us to drop this requirement and pick a $\gamma$ as small as $K^{-1}$, though bearing some extra factors polynomial in $A$ (the number of actions).
Indeed, optimizing the bound over the parameters (see the second claim), we achieve $\mathcal R_K=\Otil(H^3\sqrt{AdK})$ which is of order $\sqrt K$.
% where the second term uses $\lvert\E[\hat Q_k(s,a)]-B_k(s,a)\rvert\approx \O(\frac \beta \gamma H)$ (as $\lVert \hat \Sigma_{k,h}^\dagger\rVert_2\le \gamma^{-1}$ and $\E[\hat Q_k(s,a)]\approx Q_k^{\pi_k}(s,a)$) and the last term is directly analyzing $\E[\hat Q_k(s,a)^2]$.
% Combining this bound with the bounds for $\E[\textsc{Bias-1}]$ and $\E[\textsc{Bias-2}]$ gives the first conclusion.
% The second conclusion then follows by direct calculation and verification.
\end{proof}

\section{Removing Polynomial Dependency on $A$ via a New Magnitude-Reduced Estimator}
\label{sec:linear-q variance-reduced}
%!TEX root=main.tex
% \hl{todo: add discussion on related work of variance reduction somewhere}

One drawback of using log-barrier is that the term $\Psi(y)-\Psi(x_1)$ from \Cref{lem:log-barrier regret bound} leads to $\text{poly}(A)$ dependency (while for negative entropy, this is only $\log A$).
%
% As illustrated in \Cref{thm:linear-q log-barrier main theorem}, although the algorithm using log-barrier regularizers allow an easy analysis (almost identical to the original one), it bears extra $\text{poly}(A)$ factors in the regret bound due to $D_\Psi$.
% As log-barrier regularizers or Tsallis-entropy regularizers always have a polynomial dependency on the number of actions, we shall go back to the negative-entropy ones for a tighter bound in terms of $A$.
%
To get around this issue, we go back to using negative entropy. Recall that the issue of this regularizer is that to ensure the same bound as \Cref{lem:log-barrier regret bound}, we require $\eta c_{t,i} \geq -1$, which translates to the requirement $\eta (\hat Q_k(s,a)-B_k(s,a)) \geq -1$ in the context of linear-Q MDPs.
The challenging part here is that $\hat Q_k(s,a)$, as defined in \Cref{eq:Q_hat}, could be as negative as $-H/\gamma$, which then restricts $\eta$ to be of order $\O(\gamma/H)$, as mentioned.

To resolve this, we propose a new Q-function estimator that has a smaller magnitude in the negative direction.
Our high-level idea is the following: for a possibly negative random variable $Z$, define another \textit{magnitude-reduced} random variable as (recall the notation that $(Z)_- = \min\{Z,0\}$):
\begin{equation*}
\hat Z = Z - (Z)_-+\E[(Z)_-].
\end{equation*}
The magnitude-reduced $\hat Z$ then has the same expectation as $Z$. They also share the same order of second moments:
\begin{equation*}
\E[\hat Z^2] \leq 2\E[Z^2] + 2(\E[(Z)_-])^2 = \O(\E[Z^2]).
\end{equation*}
More importantly, we have $\hat Z \geq \E[(Z)_-]$
and thus the smallest possible value of $\hat Z$ is $\E[(Z)_-]$: no less than (and often much larger than) the smallest possible value of $Z$. Thus, it becomes much easier to ensure $\eta c_{t,i} \geq -1$.

Applying this idea to our context, we propose a new Q-function estimator as in \Cref{eq:magnitude-reduced_Q_hat},
where $m_k(s,a)$ exactly takes the role of $\E[(Z)_-]$,
except that we have no access to the real expectation but have to approximate it with samples; see \Cref{line:m_k in variance-reduced approach} of \Cref{alg:linear-q using variance-reduced} for more details.

To make sure that these samples are ``consistent" with those used in constructing $\hat \Sigma_{k,h}^\dagger$,
we perform a check in \Cref{line:skipping criterion in variance-reduced approach} and repeat the sampling until the check passes.
Since both \Cref{eq:multiplicative error of Sigma in variance-reduced} and $\lVert \tilde \Sigma_{k,h}-\Sigma_{k,h}\rVert_2\le \gamma$ hold with probability at least $1-K^{-3}$, the expected number of trials is only $1+o(1)$.
Our algorithm is shown in \Cref{alg:linear-q using variance-reduced}, where the parts different from~\citep{luo2021policy_nipsver} are again highlighted in blue.

%!TEX root=main.tex

%\begin{minipage}{\columnwidth}
%\begin{savenotes}
\begin{algorithm}[t!]
\caption{Improved Linear-Q Algorithm\\(Using Magnitude-Reduced Loss Estimators)}
\label{alg:linear-q using variance-reduced}
\begin{algorithmic}[1]
\REQUIRE{Learning rate $\eta$, bonus parameter $\beta$, MGR parameters $\gamma$ and $\epsilon$, covariance estimation parameter $M$.}
\FOR{$k=1,2,\ldots,K$}
\STATE Let $\pi_k\in \Pi$ be defined as follows for all $s\in \mS$:
\[
\scalemath{0.95}{
\begin{aligned}
\pi_k(a\mid s)&\propto \exp \bigg (-\eta \sum_{k'<k}(\hat Q_{k'}(s,a)-B_{k'}(s,a))\bigg ),
\end{aligned}}
\]
where $B_k(s,a)$ is calculated in \Cref{alg:bonus calculation in linear Q}.
\STATE Execute $\pi_k$, observing the trajectory $(s_{k,h},a_{k,h})$ and losses $\ell_k(s_{k,h},a_{k,h})$ for all $h\in[H]$.
\STATE Construct $\hat \Sigma_{k,h}^\dagger$ using Matrix Geometric Resampling (\Cref{alg:matrix geometric resampling} in the appendix) s.t. $\lVert \hat \Sigma_{k,h}^\dagger\rVert_2\le \frac 1\gamma$,
\begin{equation}\label{eq:error of Sigma in variance-reduced}
\left \lVert \E[\hat \Sigma_{k,h}^\dagger]-(\gamma I+\Sigma^{\pi_k}_h)^{-1}\right \rVert_2 \le \epsilon,
\end{equation}
and the following holds w.p. $1-K^{-3}$:
\begin{equation}\label{eq:multiplicative error of Sigma in variance-reduced}
\left \lVert (\hat \Sigma_{k,h}^\dagger)^{1/2}\Sigma^{\pi_k}_h(\hat \Sigma_{k,h}^\dagger)^{1/2}\right \rVert_2 \le 2.
\end{equation}
\STATE {\color{blue}Simulate $\pi_k$ for $M$ times, giving $\{(s_{m,h},a_{m,h})\}_{m,h}$. Estimate the covariance matrix $\Sigma_h^{\pi_k}$ as
\begin{equation*}
\tilde \Sigma_{k,h}=\frac 1M \sum_{m=1}^M \phi(s_{m,h},a_{m,h})\phi(s_{m,h},a_{m,h})^\trans.
\end{equation*}} \alglinelabel{line:samples_for_m_k}
\STATE {\color{blue} If $\lVert (\hat \Sigma_{k,h}^\dagger)^{1/2} \tilde \Sigma_{k,h}  (\hat\Sigma_{k,h}^\dagger)^{1/2}\rVert_2 \ge 3$, \textbf{goto} Line 4.}
\alglinelabel{line:skipping criterion in variance-reduced approach}
\STATE {\color{blue}For each $s\in \mS_h$ and $a\in \mA$, define $m_k(s,a)$ as
\begin{equation*}
\scalemath{0.95}{
m_k(s,a)=\frac 1M\sum_{m=1}^M \left (\phi(s,a)^\trans \hat \Sigma_{k,h}^\dagger \phi(s_{m,h},a_{m,h})\right )_-.}
\end{equation*}}
\alglinelabel{line:m_k in variance-reduced approach}
\STATE Let $L_{k,h}=\sum_{h'\ge h}\ell_k(s_{k,h'},a_{k,h'})$.
Estimate the Q-function $Q_k^{\pi_k}(s,a)$ as follows (for $s \in \mS_h$):
\begin{equation}\label{eq:magnitude-reduced_Q_hat}
\begin{split}
\hat Q_k(s,a)&=\phi(s,a)^\trans \hat \Sigma_{k,h}^\dagger \phi(s_{k,h},a_{k,h}) L_{k,h} \\
&{\color{blue} {}-H\left (\phi(s,a)^\trans \hat \Sigma_{k,h}^\dagger \phi(s_{k,h},a_{k,h})\right )_-}\\
&{\color{blue} {}+H m_k(s,a)}.
\end{split}
\end{equation}
\alglinelabel{line:hat Q_k in variance-reduced approach}
\ENDFOR
\end{algorithmic}
\end{algorithm}

Our analysis shows that the new Q-function estimator indeed has a significantly smaller magnitude: it can only be as negative as $-H/\sqrt{\gamma}$ (as opposed to the previous $-H/\gamma$ bound).
This only restricts $\eta$ to be of order $\O(\sqrt{\gamma}/H)$, enabling us to pick $\gamma \approx 1/K$ as in \Cref{thm:linear-q log-barrier main theorem} and achieve $\Otil(\sqrt{K})$ regret again, as formalized in the next theorem.
Note that our final regret bound not only has no $\text{poly}(A)$ dependency, but is also optimal in both $d$ and $K$ as one cannot do better even in the special case of linear bandits.

% We formalize the algorithm in \Cref{alg:linear-q using variance-reduced}. The difference with the original algorithm is again in blue. Its regret guarantee is stated in \Cref{thm:linear-q variance-reduced main theorem} and proved in \Cref{sec:appendix linear-q variance-reduced}.

% One may notice that there is a skipping criterion (\Cref{line:skipping criterion in variance-reduced approach}) in \Cref{alg:linear-q using variance-reduced}.
% This is because \Cref{eq:multiplicative error of Sigma in variance-reduced}, which is critical in bounding the magnitude of $m_k(s,a)$ (which serves as a lower bound of $\hat Q_k(s,a)$), only holds with high probability instead of almost surely. This fact forbids us from directly applying the Hedge bound to all the episodes because Hedge requires $\eta c_{t,i}\ge -1$ for \textit{all} $t$ and $i$. 
% Fortunately, we can use the simulator to ensure that \Cref{eq:multiplicative error of Sigma in variance-reduced} holds with probability $1-K^{-3}$; thus only $o(1)$ regret incurs due to skipping.

\begin{theorem}\label{thm:linear-q variance-reduced main theorem}
When applied to a linear-Q MDP (\Cref{def: linear Q}) with a simulator, \Cref{alg:linear-q using variance-reduced} ensures the following when $\eta \beta \gamma^{-1}\le \frac{1}{12H^2}$, $\eta^2 \gamma^{-1}\le \frac{1}{12H^2}$, $8\eta H^2\le \beta$, $\epsilon\le (H^2K)^{-1}$, and $M=32\gamma^{-2}\log K$:
\begin{align*}
\mathcal R_K=\Otil\bigg (&\beta d HK+\frac \gamma \beta dH^3 K+\frac H\eta+\\&\eta d H^3 K+\frac{\eta}{\gamma^2 K^2}H^3 \bigg ).
\end{align*}

Picking $\eta=\frac{1}{\sqrt{dH^4K}}$, $\beta=\frac{8}{\sqrt{dK}}$, $\gamma=\frac{96}{dK}$, and $\epsilon=\frac{1}{H^2K}$, we conclude that $\mathcal R_K=\Otil(H^3\sqrt{dK})$.
\end{theorem}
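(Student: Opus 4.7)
The plan is to follow the same three-way decomposition as in the proof sketch of \Cref{thm:linear-q log-barrier main theorem}: I apply the dilated-bonus lemma (\Cref{lem:dilated bonus}) and split the per-state FTRL regret against $Q_k^{\pi_k}-B_k$ into $\textsc{Bias-1}$, $\textsc{Bias-2}$, and $\textsc{Reg-Term}$ by inserting $\pm\hat Q_k$. Compared with \Cref{alg:linear-q with log-barrier}, the two new ingredients that the proof must absorb are the magnitude-reduced estimator \Cref{eq:magnitude-reduced_Q_hat} and the return to the negative-entropy regularizer (Hedge; see \Cref{lem:hedge lemma}).

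For the bias terms, the crucial point is that \Cref{eq:magnitude-reduced_Q_hat} remains an unbiased estimator of $Q_k^{\pi_k}$ up to the $\O(\epsilon)$ MGR error of \Cref{eq:error of Sigma in variance-reduced}: conditional on $\hat\Sigma_{k,h}^\dagger$, the two extra terms $-H(\phi(s,a)^\trans\hat\Sigma_{k,h}^\dagger\phi(s_{k,h},a_{k,h}))_-$ and $+Hm_k(s,a)$ have matching conditional expectations by construction, both measuring the expected negative part of $\phi(s,a)^\trans\hat\Sigma_{k,h}^\dagger\phi(s',a')$ with $(s',a')\sim\pi_k$. To justify the rejection sampling in Line~\ref{line:skipping criterion in variance-reduced approach}, I would apply a matrix-Bernstein concentration to the $M=32\gamma^{-2}\log K$ fresh samples to show $\lVert\tilde\Sigma_{k,h}-\Sigma_h^{\pi_k}\rVert_2\le\gamma$ with probability $1-\O(K^{-3})$; combined with \Cref{eq:multiplicative error of Sigma in variance-reduced}, this makes the Line~\ref{line:skipping criterion in variance-reduced approach} test pass with the same high probability, so the acceptance-conditioning bias is negligible and the expected retry count is $1+o(1)$. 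Consequently, $\textsc{Bias-1}$ and $\textsc{Bias-2}$ will contribute $\Otil(\gamma\beta^{-1}dH^3K)$ and $\Otil(\eta H^3/(\gamma^2K^2))$ to $\sum_h\E_{s_h\sim\pi^*}[X(s_h)]$, exactly as in \citet{luo2021policy_nipsver}.

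For $\textsc{Reg-Term}$, Hedge's bound $\log A/\eta+\eta\sum_{k,a}\pi_k(a|s)(\hat Q_k-B_k)^2$ is applicable only when $\eta(\hat Q_k(s,a)-B_k(s,a))\ge -1$ pointwise. The $B_k$-side bound $B_k\le\O(\beta H/\gamma)$ follows from unrolling \Cref{eq: dilated bonus} with $b_k\le 2\beta/\gamma$, controlled by $\eta\beta\gamma^{-1}\le(12H^2)^{-1}$. The key new piece will be the magnitude lower bound $\hat Q_k(s,a)\ge -H\sqrt{3/\gamma}$, improving on the standard estimator's $-H/\gamma$ by a factor of $\sqrt\gamma$. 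To prove it, let $X:=\phi(s,a)^\trans\hat\Sigma_{k,h}^\dagger\phi(s_{k,h},a_{k,h})$; a sign-case analysis on $X$ shows $XL_{k,h}-H(X)_-\ge 0$ whenever $L_{k,h}\in[0,H]$, so $\hat Q_k(s,a)\ge Hm_k(s,a)$. Then Cauchy--Schwarz together with the Line~\ref{line:skipping criterion in variance-reduced approach} check yields
\[
m_k(s,a)^2 \le \phi(s,a)^\trans\hat\Sigma_{k,h}^\dagger\tilde\Sigma_{k,h}\hat\Sigma_{k,h}^\dagger\phi(s,a) \le 3\lVert\phi(s,a)\rVert_{\hat\Sigma_{k,h}^\dagger}^2 \le 3/\gamma.
\]
Combined with $\eta^2\gamma^{-1}\le(12H^2)^{-1}$, this verifies Hedge's condition.

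Finally, Hedge's quadratic variance must be fed into \Cref{lem:dilated bonus}. Splitting $(\hat Q-B)^2\le 2\hat Q^2+2B^2$: the $\hat Q^2$ piece, after using $\lVert(\hat\Sigma_{k,h}^\dagger)^{1/2}\Sigma_h^{\pi_k}(\hat\Sigma_{k,h}^\dagger)^{1/2}\rVert_2\le 2$ and $m_k^2\le 3\lVert\phi\rVert_{\hat\Sigma_{k,h}^\dagger}^2$, collapses to $\O(H^2)\sum_a\pi_k(a|s)\lVert\phi(s,a)\rVert_{\hat\Sigma_{k,h}^\dagger}^2$, which the trace identity $\tr(\hat\Sigma_{k,h}^\dagger\Sigma_h^{\pi_k})\le 2d$ turns into the $\Otil(\eta dH^3K)$ term of the claimed bound; the $B^2$ piece, via $B_k\le\O(\beta H/\gamma)$ and $\eta\beta\gamma^{-1}\le(12H^2)^{-1}$, fits into the $(1/H)\sum_a\pi_k(a|s)B_k(s,a)$ slot of the condition. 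Adding the $\Otil(H/\eta)$ contribution from Hedge's regularization (summed over the $H$ layers) and using $\sum_{k,h}\E_{\pi_k}[b_k]=\Otil(\beta dHK)$ (again via the trace identity) finishes the first claim; the stated parameter tuning then yields $\Otil(H^3\sqrt{dK})$. The principal technical obstacle is the magnitude bound $\hat Q_k\ge -H/\sqrt\gamma$ --- without this $\sqrt\gamma$ improvement, Hedge would force $\gamma\gtrsim H\eta$ as in \citet{luo2021policy_nipsver} and only recover their sub-optimal $K^{2/3}$ regret.
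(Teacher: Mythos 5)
Your proposal follows the paper's proof essentially step for step: the same $\textsc{Bias-1}$/$\textsc{Bias-2}$/$\textsc{Reg-Term}$ decomposition, the same observation that the new estimator remains unbiased, the same sign-case argument showing $\hat Q_k(s,a)\ge Hm_k(s,a)$, the same Jensen-type bound $m_k(s,a)^2\le 3/\gamma$ via the check in Line~\ref{line:skipping criterion in variance-reduced approach}, the same handling of the rejection sampling via a matrix concentration bound, and the same application of Hedge (\Cref{lem:hedge lemma}) followed by \Cref{lem:dilated bonus}. The one small misattribution is placing the $\Otil(\eta H^3/(\gamma^2K^2))$ term under the bias contributions; in the paper it arises inside the $\textsc{Reg-Term}$ analysis from the low-probability failure of \Cref{eq:multiplicative error of Sigma in variance-reduced}, but this does not affect the validity of the argument or the final bound.
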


% \begin{remark}
% Note that, in \Cref{alg:linear-q using variance-reduced}, the number of calls to the simulator is computationally intractable because of calculating $m_k(s,a)$ --- this calculation is different from estimating $\Sigma_h^{\pi_k}$ due to the presence of $(\cdot)_-$. We leave the efficient calculation of $m_t$ for future investigation.
% \end{remark}

\begin{proof}[Proof Sketch]
Due to space limitations, we only sketch why $\hat Q_k(s,a)$ is now at least $-\O(H/\sqrt{\gamma})$:
since $L_{k,h} \in [0,H]$, we have $\hat Q_k(s,a) \geq Hm_k(s,a)$.
By Jensen's inequality, we can bound $m_k(s,a)^2$ for all $(s,a)$ as:
% and the fact that $\Sigma_h^{\pi_k}=\E_{(s_h',a_h')\sim \pi_k}[\phi(s_h',a_h')\phi(s_h',a_h')^\trans]$,
\begin{align*}
m_k(s,a)^2
&\le \frac 1M\sum_{m=1}^M\big (\phi(s,a)^\trans \hat \Sigma_{k,h}^\dagger \phi(s_{m,h},a_{m,h})\big )^2\\
&= \phi(s,a)^\trans \hat \Sigma_{k,h}^\dagger \tilde \Sigma_{h}^{\pi_k} \hat \Sigma_{k,h}^\dagger \phi(s,a)\\
&\leq 3\big \lVert \phi(s,a)\big \rVert_{\hat \Sigma_{k,h}^\dagger}^2 \leq \frac{3}{\gamma},
\end{align*}
where the second inequality is by \Cref{line:skipping criterion in variance-reduced approach}.
Therefore, we have $\hat Q_k(s,a) \geq -\frac{\sqrt{3}H}{\sqrt{\gamma}}$, a reduced magnitude compared to that of a standard estimator (which is defined in \Cref{eq:Q_hat}). 
%  We can then bound all terms like \citep{luo2021policy_nipsver} as our new estimator preserves both expectation and second-order moment (up to constants).
% After that, noticing that our magnitudr
\end{proof}

\section{Generalizing to Linear MDPs}
\label{sec:linear-MDP without MGR}
%!TEX root=main.tex

When a simulator is unavailable, we consider a special case of linear-Q MDPs: linear MDPs (\Cref{def:linear MDP}), where the transition is also linear in the features.
As \citet{luo2021policy} show, this makes the dilated bonuses linear as well, and thus we can efficiently estimate them without using a simulator.
Due to various technical obstacles, they only achieve $\Otil(K^{14/15})$ regret.
We improve it to $\Otil(K^{8/9})$ via two key modifications to their algorithm.
As the algorithm is fairly lengthy due to the lack of simulators and the estimation of the dilated bonus function, we defer it (\Cref{alg:linear MDP without MGR}) to the appendix.
We refer the reader to \citep{luo2021policy} for a detail description of their algorithm, and only focus on our two modifications (highlighted in blue) described below.

The first obvious modification is to apply one of the techniques introduced in the last two sections.
However, since the magnitude-reduced estimator we developed in \Cref{alg:linear-q using variance-reduced} requires extra samples (see \Cref{line:samples_for_m_k}), which, without a simulator, can only be done via real episodes and in turn introduces more regret, we go with the log-barrier approach instead, even though it leads to extra $\text{poly}(A)$ factors.

% In this section, we focus on simulator-free adversarial linear MDPs. The only previous upper bound in this case is of order $\O(K^{14/15})$ \citep[Theorem 6.1]{luo2021policy}.

% As we cannot calculate $m_k$ without the help of simulators (see \Cref{line:m_k in variance-reduced approach} of \Cref{alg:linear-q using variance-reduced}), we cannot use the variance-reduced estimator discusses in \Cref{sec:appendix linear-q variance-reduced} in simulator-free settings.
% Instead, we can only follow \Cref{sec:linear-q log-barrier} and replace the negative-entropy regularizer with the log-barrier one, applying \Cref{lem:log-barrier regret bound} and paying an extra $\O(\sqrt A)$ factor.

But it turns out that this only leads to a mild improvement in the regret, since the constraint on $\eta$ is not the bottleneck of their analysis.
Instead, one important bottleneck comes from using the Matrix Geometric Resampling (MGR) procedure \citep{neu2020efficient} to estimate the covariance matrix inverse, which again, without a simulator, can only be done via running the same policy for multiple real episodes (called an \textit{epoch}) to collect samples.
It is thus critical to make this step as sample efficient as possible.

To resolve this issue, our key observation is that MGR uses too many samples to produce an estimate that is in a sense more accurate than required: specifically, it needs $\O(\epsilon^{-2}\gamma^{-3})$ samples to ensure a bound like \Cref{eq:error of Sigma in variance-reduced}.
Instead, we find that a weaker multiplicative approximation guarantee is enough, which only requires $\O(\gamma^{-2})$ samples.
Moreover, this is achieved by simply taking the (regularized) inverse of the empirical average of $\O(\gamma^{-2})$ samples.

More concretely, for a policy $\tilde \pi_j$ (which mixes $\pi_j$ with some exploration policy; see \Cref{alg:linear MDP without MGR} for more details) in epoch $j$ of the algorithm, we collect roughly $\O(\gamma^{-2})$ samples using this policy and construct an empirical average covariance matrix $\tilde \Sigma_{j,h}$ for layer $h$.
Then we let $\hat \Sigma_{j,h}^\dagger=(\gamma I+\tilde \Sigma_{j,h})^{-1}$ be the estimation for $(\gamma I+\Sigma_h^{\tilde \pi_j})^{-1}$ (see \Cref{line:covariance estimation in linear MDPs} of \Cref{alg:linear MDP without MGR}).
We then prove the following:
\begin{lemma}\label{lem:multiplicative error (no MGR)}
If the epoch length $W$ in \Cref{alg:linear MDP without MGR} is set to $(4d\log \frac d\delta) \gamma^{-2}$, then
\begin{equation*}
(1-\sqrt \gamma)(\gamma I+\Sigma_h^{\tilde \pi_j})\preceq (\gamma I+\tilde \Sigma_{j,h})\preceq (1+\sqrt \gamma)(\gamma I+\Sigma_h^{\tilde \pi_j})
\end{equation*}
holds with probability at least $1-2\delta$.
\end{lemma}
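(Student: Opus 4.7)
The plan is to reduce the claim to a standard matrix concentration inequality after an appropriate whitening transformation. Set $A := (\gamma I + \Sigma_h^{\tilde \pi_j})^{-1/2}$; since conjugation by the positive definite matrix $A$ preserves the Loewner order, the desired sandwich inequality is equivalent to
\[
\lVert A(\gamma I + \tilde\Sigma_{j,h})A - I \rVert_2 \le \sqrt{\gamma}.
\]
Writing $\tilde\Sigma_{j,h} = \frac{1}{W}\sum_{w=1}^W \phi_w \phi_w^\trans$ for the i.i.d.\ features $\phi_w := \phi(s_{w,h},a_{w,h})$ produced by rolling out the frozen policy $\tilde \pi_j$ in epoch $j$, I would rewrite $A(\gamma I + \tilde\Sigma_{j,h})A = \frac{1}{W}\sum_{w=1}^W \tilde Y_w$ where $\tilde Y_w := A\phi_w\phi_w^\trans A + \gamma A^2$.

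Next I would verify the two properties needed for Matrix Chernoff. For the expectation, $\E[\tilde Y_w] = A(\Sigma_h^{\tilde \pi_j} + \gamma I) A = I$, so the target of concentration is exactly the identity. For the operator-norm bound, $\lVert \phi_w\rVert_2 \le 1$ and $\lVert A\rVert_2^2 \le 1/\gamma$ give $0 \preceq \tilde Y_w \preceq A(\phi_w\phi_w^\trans + \gamma I)A \preceq (1/\gamma)(1 + \gamma) I \preceq (2/\gamma) I$ for $\gamma \le 1$. The conceptual point here is the additive shift by $\gamma A^2$: without it, $\E[A\phi_w\phi_w^\trans A] = I - \gamma A^2$ can have eigenvalues arbitrarily close to zero whenever $\Sigma_h^{\tilde\pi_j}$ is rank-deficient, which rules out any multiplicative bound. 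The $\gamma I$ regularization baked into $A$ is exactly what moves the target of concentration onto the well-conditioned identity, and I expect this to be the crucial algebraic manoeuvre of the proof.

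With the two ingredients above, I would apply the standard Matrix Chernoff inequality (e.g., Tropp's ``user-friendly tail bounds'') separately to the largest and smallest eigenvalues of $\frac{1}{W}\sum_w \tilde Y_w$, each with deviation parameter $\sqrt\gamma$. Using the elementary inequality $e^{-\epsilon}/(1-\epsilon)^{1-\epsilon} \le e^{-\epsilon^2/2}$, one obtains failure probability of order $d\exp(-\Omega(W\gamma^2))$ per side. Setting $W = 4 d \gamma^{-2} \log(d/\delta)$ and union-bounding bring the total failure probability below $2\delta$. Finally, multiplying the resulting high-probability bound $(1-\sqrt\gamma) I \preceq A(\gamma I + \tilde\Sigma_{j,h}) A \preceq (1+\sqrt\gamma) I$ on both sides by $A^{-1} = (\gamma I + \Sigma_h^{\tilde\pi_j})^{1/2}$ recovers the claimed sandwich.

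The only real subtlety, and what I expect to be more a bookkeeping obstacle than a mathematical one, is confirming that the $W$ samples used to form $\tilde\Sigma_{j,h}$ are genuinely i.i.d.\ conditionally on $\tilde\pi_j$ so that Matrix Chernoff applies cleanly. This should follow from the epoch structure of \Cref{alg:linear MDP without MGR} (the behavior policy is frozen throughout the epoch and covariance estimation uses fresh independent rollouts), but I would explicitly verify that these rollouts are disjoint from any trajectories used for loss-estimation or bonus-estimation, so that no hidden coupling leaks into the matrix $\tilde \Sigma_{j,h}$.
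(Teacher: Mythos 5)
Your proposal is correct but takes a genuinely different route from the paper. Where you whiten by $A=(\gamma I+\Sigma_h^{\tilde\pi_j})^{-1/2}$ and then invoke Tropp's standard Matrix Chernoff for sums of bounded PSD matrices, the paper instead proves and applies a bespoke ``relative'' concentration inequality (\Cref{lem:new matrix concentration lemma}), whose deviation scales as $\sqrt{\frac{d}{n}\log\frac{d}{\delta}}\,H^{1/2}$; that lemma is proved from scratch via a Golden--Thompson/trace argument, and the $\sqrt d$ factor is intrinsic to the trace-based bound. Your route is both more elementary (an off-the-shelf black box after a one-line conjugation) and, in fact, more sample-efficient: once one notes $\E[\tilde Y_w]=I$ and $\tilde Y_w\preceq(2/\gamma)I$, Matrix Chernoff already gives failure probability $d\exp(-\Omega(n\gamma^2))$ per side, so $n=\Theta(\gamma^{-2}\log(d/\delta))$ suffices -- a factor of $d$ fewer samples than the paper's $W/2=2d\gamma^{-2}\log(d/\delta)$. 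Two bookkeeping points do not affect validity but should be fixed in a clean write-up: (i) $\tilde\Sigma_{j,h}$ in \Cref{line:covariance estimation in linear MDPs} averages over $|\mT_j|=W/2$ episodes, not $W$; and (ii) for very small $d$ (say $d\le 3$) the displayed $\Omega(\cdot)$ constants should be checked so that the union bound yields exactly $2\delta$ (the paper's lemma has analogous small-$d$ constant slack). Your observation that the $\gamma A^2$ shift is what moves the concentration target onto the identity, circumventing the rank-deficiency of $\Sigma_h^{\tilde\pi_j}$, is exactly the point the paper's regularization-inside-$G$ construction is making, just phrased more transparently.
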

%In other words,  $(\gamma I+\tilde \Sigma_{j,h})$ is an $(1+\sqrt \gamma)$-approximation of  $(\gamma I+\Sigma_h^{\tilde \pi_j})$. 
The proof relies on a new matrix concentration bound (\Cref{lem:new matrix concentration lemma} in the appendix), which can be of independent interest.
A direct corollary of this lemma is:
%Taking inverse to both sides, we get the following corollary, recovering \Cref{eq:error of Sigma in variance-reduced,eq:multiplicative error of Sigma in variance-reduced}:
\begin{corollary}\label{corol:multiplicative error of inverse (no MGR)}
If $W=(4d\log \frac d\delta) \gamma^{-2}$ and $\gamma\le \frac 14$, then with probability $1-2\delta$, we have the following:
\begin{align*}
\scalemath{0.95}{
 (1-2\sqrt\gamma)I\preceq (\hat \Sigma_{j,h}^\dagger)^{1/2} (\gamma I+\Sigma_h^{\tilde \pi_j}) (\hat \Sigma_{j,h}^\dagger)^{1/2} \preceq (1+2\sqrt\gamma) I.}
\end{align*}
\end{corollary}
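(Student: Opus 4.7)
The plan is to derive this as a direct algebraic consequence of \Cref{lem:multiplicative error (no MGR)} by conjugating the matrix sandwich inequality with $(\hat\Sigma_{j,h}^\dagger)^{1/2}$, and then using $\gamma \le \tfrac14$ to turn reciprocal bounds of the form $\tfrac{1}{1\pm\sqrt\gamma}$ into linear bounds of the form $1\mp 2\sqrt\gamma$.

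More concretely, let me abbreviate $A \triangleq \gamma I + \Sigma_h^{\tilde\pi_j}$ and $B \triangleq \gamma I + \tilde\Sigma_{j,h}$, so that by construction $\hat\Sigma_{j,h}^\dagger = B^{-1}$. Since $W = (4d\log\tfrac{d}{\delta})\gamma^{-2}$, \Cref{lem:multiplicative error (no MGR)} applies and yields
\begin{equation*}
(1-\sqrt\gamma) A \;\preceq\; B \;\preceq\; (1+\sqrt\gamma) A
\end{equation*}
with probability at least $1-2\delta$. The first step is to condition on this event for the rest of the argument.

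The second step is to conjugate both inequalities by the symmetric matrix $B^{-1/2} = (\hat\Sigma_{j,h}^\dagger)^{1/2}$, which preserves the Loewner order. This immediately gives
\begin{equation*}
(1-\sqrt\gamma)\, B^{-1/2} A B^{-1/2} \;\preceq\; I \;\preceq\; (1+\sqrt\gamma)\, B^{-1/2} A B^{-1/2},
\end{equation*}
equivalently, $\tfrac{1}{1+\sqrt\gamma} I \preceq B^{-1/2} A B^{-1/2} \preceq \tfrac{1}{1-\sqrt\gamma} I$. The final step is the routine scalar calculation: since $\gamma \le \tfrac14$ we have $\sqrt\gamma \le \tfrac12$, whence $\tfrac{1}{1+\sqrt\gamma} \ge 1 - \sqrt\gamma \ge 1 - 2\sqrt\gamma$ and $\tfrac{1}{1-\sqrt\gamma} = 1 + \tfrac{\sqrt\gamma}{1-\sqrt\gamma} \le 1 + 2\sqrt\gamma$ (the last inequality is equivalent to $\sqrt\gamma \le 2(1-\sqrt\gamma)\sqrt\gamma$, which holds since $1-\sqrt\gamma \ge \tfrac12$).

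I do not anticipate any real obstacle here: the entire argument is a two-line reduction once \Cref{lem:multiplicative error (no MGR)} is in hand, and the only mildly delicate step is making sure the constant $\tfrac14$ in the assumption on $\gamma$ is exactly enough to convert the reciprocal bounds into the clean $1 \pm 2\sqrt\gamma$ form. Substituting $A$ and $B^{-1/2}$ back into the displayed inequality then yields the claimed two-sided bound on $(\hat\Sigma_{j,h}^\dagger)^{1/2}(\gamma I+\Sigma_h^{\tilde\pi_j})(\hat\Sigma_{j,h}^\dagger)^{1/2}$.
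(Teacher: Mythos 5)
Your proposal is correct and follows exactly the paper's own (terser) argument: conjugate the two-sided bound of \Cref{lem:multiplicative error (no MGR)} by $(\hat\Sigma_{j,h}^\dagger)^{1/2}=(\gamma I+\tilde\Sigma_{j,h})^{-1/2}$, then replace the resulting reciprocal bounds $\frac{1}{1\pm\sqrt\gamma}$ with $1\mp 2\sqrt\gamma$ using $\gamma\le\frac14$. You have merely spelled out the scalar inclusion $[\frac{1}{1+\sqrt\gamma},\frac{1}{1-\sqrt\gamma}]\subseteq[1-2\sqrt\gamma,1+2\sqrt\gamma]$ that the paper invokes without proof.
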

\begin{proof}
Simply left and right multiply $(\hat \Sigma_{j,h}^\dagger)^{1/2} = (\gamma I+\tilde \Sigma_{j,h})^{-1/2}$ in the bound of \Cref{lem:multiplicative error (no MGR)} and use the fact
 $[\frac{1}{1+\sqrt\gamma},\frac{1}{1-\sqrt \gamma}]\subseteq [1-2\sqrt\gamma,1+2\sqrt\gamma]$ when $\gamma\leq\frac 14$.
\end{proof}

% By equipping the original algorithm with these two techniques, we get \Cref{alg:linear MDP without MGR} (in the appendix), whose improvements are highlighted in blue. Its performance is stated in \Cref{thm:linear MDP without MGR main theorem} and analyzed in \Cref{sec:appendix linear-mdp}.

Together with a new analysis to bound bias terms in the regret,
such approximations turn out to be enough: the bias terms enjoy almost the same bound (up to constants) as we had in previous sections. Hence, we finally get the following theorem; see \Cref{sec:appendix linear-mdp} for a full proof.
\begin{theorem}[Informal version of \Cref{thm:linear MDP without MGR main theorem formal}]\label{thm:linear MDP without MGR main theorem}
When applied to linear MDPs, \Cref{alg:linear MDP without MGR} with proper tuning ensures $\mathcal R_K=\Otil(K^{8/9})$ (omitting all other dependencies).
\end{theorem}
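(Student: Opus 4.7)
The plan is to combine the dilated bonus framework (\Cref{lem:dilated bonus}) with the two technical ingredients developed earlier in the paper --- the refined log-barrier regret bound of \Cref{lem:log-barrier regret bound}, which allows unboundedly negative loss estimators, and the multiplicative covariance guarantee of \Cref{corol:multiplicative error of inverse (no MGR)}, which replaces MGR at a much cheaper sample cost. Since a simulator is no longer available, the algorithm must proceed in epochs: a fixed mixed policy $\tilde\pi_j$ is deployed for $W$ consecutive episodes in order to both accumulate enough samples to build $\hat\Sigma_{j,h}^\dagger$ and to perform the bandit updates. I would go with the log-barrier approach rather than the magnitude-reduced estimator of \Cref{alg:linear-q using variance-reduced}, because the latter requires \emph{additional} trajectories per epoch that, without a simulator, would be cashed out in real regret.

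I would start by invoking the performance difference lemma and applying \Cref{lem:dilated bonus} epoch by epoch, reducing $\mathcal R_K$ to the sum over states $s$ of the per-state quantity $\sum_{j} W \sum_{a}(\pi_j(a\mid s)-\pi^\ast(a\mid s))(\hat Q_j(s,a)-B_j(s,a))$, together with analogues of \textsc{Bias-1} and \textsc{Bias-2} from the proof sketch of \Cref{thm:linear-q log-barrier main theorem}, plus an error term from the fact that $B_j$ itself is now estimated (exploiting the linear-MDP structure via layerwise back-propagation, following Luo et al.). For the per-state term, \Cref{lem:log-barrier regret bound} applied to the smoothed comparator $\tilde\pi^\ast(a\mid s)=(1-A/K)\pi^\ast(a\mid s)+1/K$ converts it to $\O(A\log K/\eta)+\eta\sum_j \pi_j(a\mid s)^2(\hat Q_j(s,a)-B_j(s,a))^2$, with no requirement whatsoever on how negative the loss estimates are.

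The main obstacle is handling \textsc{Bias-1} and \textsc{Bias-2} with only the multiplicative approximation of \Cref{corol:multiplicative error of inverse (no MGR)}, in place of the additive bound $\lVert \E[\hat\Sigma^\dagger]-(\gamma I+\Sigma)^{-1}\rVert_2\le\epsilon$ used in the simulator case. The key observation is that the bias of $\hat Q_j$ splits into a $\gamma$-regularization contribution of the form $\gamma\,\phi(s,a)^\trans \E[\hat\Sigma_{j,h}^\dagger]\theta$ and a sampling-error contribution controlled by $\lVert \E[\hat\Sigma_{j,h}^\dagger]^{1/2}\Sigma_h^{\tilde\pi_j}\E[\hat\Sigma_{j,h}^\dagger]^{1/2}\rVert_2$; \Cref{corol:multiplicative error of inverse (no MGR)} shows the latter is at most a constant, so after summing against $\pi^\ast$ and using $\lVert \phi\rVert_2\le 1$, both bias terms collapse to $\Otil(\gamma dH^3 K/\beta)$, matching the simulator setting. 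The bonus-estimation error is controlled analogously via the same multiplicative bound combined with the linear-MDP back-propagation of $B_j$ through layers.

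Finally, collecting all pieces gives a bound whose dominant terms involve the epoch overhead $W$ (since effectively we have only $K/W$ bandit iterations), $1/\eta$, $\eta$ times variance terms scaling polynomially in $\gamma^{-1}$, $\gamma dH^3 K/\beta$, $\beta dHK$, and the exploration cost of mixing $\tilde\pi_j$. The crucial point is that the constraint $W=\tilde\Theta(\gamma^{-2})$ from \Cref{corol:multiplicative error of inverse (no MGR)} is much milder than the previous MGR-driven $W=\tilde\Theta(\gamma^{-3}\epsilon^{-2})$, which is what unlocks smaller $\gamma$ and pushes the rate below $K^{14/15}$. Optimizing over $(\eta,\beta,\gamma,W)$ subject to these constraints and the FTRL-consistency constraints inherited from \Cref{thm:linear-q log-barrier main theorem} should yield the stated $\Otil(K^{8/9})$ bound; nailing down the exact exponents is a routine but delicate tuning exercise rather than the source of any new difficulty.
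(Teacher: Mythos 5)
Your overall plan --- epoching, the log-barrier FTRL lemma, the multiplicative covariance guarantee in place of MGR, the decomposition into bias terms plus a \textsc{Reg-Term}, and the back-propagation of the (now estimated) dilated bonus through layers using the linear-MDP structure --- is essentially the paper's. However, there is a genuine gap in your justification of the bias bound, specifically in which consequence of \Cref{corol:multiplicative error of inverse (no MGR)} you invoke.

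You claim the sampling-error contribution is ``controlled by $\lVert (\hat\Sigma_{j,h}^\dagger)^{1/2}\Sigma_h^{\tilde\pi_j}(\hat\Sigma_{j,h}^\dagger)^{1/2}\rVert_2$'' being at most a constant, and that this yields $\Otil(\gamma dH^3 K/\beta)$. But the constant-norm statement is far too weak for that. After the split $I-\hat\Sigma_{j,h}^\dagger\Sigma_h^{\tilde\pi_j}=\gamma\hat\Sigma_{j,h}^\dagger+\hat\Sigma_{j,h}^\dagger(\tilde\Sigma_{j,h}-\Sigma_h^{\tilde\pi_j})$, the second piece produces (via Cauchy--Schwarz and AM-GM) the quantity $\frac{1}{\beta}\lVert(\tilde\Sigma_{j,h}-\Sigma_h^{\tilde\pi_j})\hat\Sigma_{j,h}^\dagger(\tilde\Sigma_{j,h}-\Sigma_h^{\tilde\pi_j})\rVert_2\,dH^2$. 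Bounding $\lVert(\hat\Sigma_{j,h}^\dagger)^{1/2}\Sigma_h^{\tilde\pi_j}(\hat\Sigma_{j,h}^\dagger)^{1/2}\rVert_2$ by a constant (together with $\lVert\hat\Sigma_{j,h}^\dagger\rVert_2\le\gamma^{-1}$) only controls this by $\O(1)$, giving a bias contribution of order $\frac{1}{\beta}dH^3\cdot (\text{number of epochs})$, which is a factor $\gamma^{-1}$ worse than what you claim and would not tune to $K^{8/9}$. What is actually needed is the sharper \emph{deviation-from-identity} bound that \Cref{corol:multiplicative error of inverse (no MGR)} also provides, namely $\lVert I-(\hat\Sigma_{j,h}^\dagger)^{1/2}(\gamma I+\Sigma_h^{\tilde\pi_j})(\hat\Sigma_{j,h}^\dagger)^{1/2}\rVert_2\le 2\sqrt\gamma$: the paper rewrites $(\tilde\Sigma_{j,h}-\Sigma_h^{\tilde\pi_j})\hat\Sigma_{j,h}^\dagger(\tilde\Sigma_{j,h}-\Sigma_h^{\tilde\pi_j})$ as $(\hat\Sigma_{j,h}^\dagger)^{-1/2}\big(I-(\hat\Sigma_{j,h}^\dagger)^{1/2}(\gamma I+\Sigma_h^{\tilde\pi_j})(\hat\Sigma_{j,h}^\dagger)^{1/2}\big)^2(\hat\Sigma_{j,h}^\dagger)^{-1/2}$, squares the $\O(\sqrt\gamma)$ bound to $\O(\gamma)$, and absorbs the outer $(\hat\Sigma_{j,h}^\dagger)^{-1/2}$ factors by $\lVert\gamma I+\tilde\Sigma_{j,h}\rVert_2\le 2$; this is exactly what produces the $\frac\gamma\beta dH^2$ per-epoch-per-layer rate. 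Your constant-norm consequence of the corollary is the right tool for bounding the second-moment/variance piece of the \textsc{Reg-Term}, but the bias requires the $\sqrt\gamma$ statement. As a secondary remark, you also omit the $\O(K_0 H)$ regret from the \textsc{PolicyCover} warm-up phase, which enters the tuning (it forces $\alpha,M_0,N_0$ into the bound), though that is a smaller bookkeeping issue.
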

\begin{proof}[Proof Sketch]
% We decompose the episodes into three parts: those executing \textsc{PolicyCover}, those doing exploration, and those playing $\pi_j$.
% For the first part, notice that \textsc{PolicyCover} only requires $K_0=\Otil(\alpha^6 d^4 H^8)$ to ensure that\footnote{The $K_0$ here is $\alpha^2$ better than the original version \citep[Lemma D.4]{luo2021policy}. This is due to a more careful analysis; see the proof of \Cref{lem:bonus magnitude in linear MDP} for more details.}
% \begin{equation*}
% \Pr_{s_h\sim \pi} \left [\exists a\in \mA,\lVert \phi(s_h,a_h)\rVert_{(\hat \Sigma_{h}^{\text{cov}})^{-1}}^2\ge \alpha \right ]=\Otil\left (\frac{dH}{\alpha}\right )
% \end{equation*}
% holds for all $\pi\in \Pi$ and $h\in [H]$ (conditioning on some good event that happens with high probability). Therefore, the first part has regret bounded by $K_0H=\Otil(\alpha^6 d^4 H^9)$.
% Meanwhile, the expected number of exploratory episodes is $\delta_e K$, so the regret due to the second part is $\O(\delta_e HK)$.

Ignoring less important parts, at a high level we still decompose the regret into several bias terms and a \textsc{Reg-Term}. In this sketch, we only provide key ideas in bounding the bias terms using \Cref{corol:multiplicative error of inverse (no MGR)}.
%They are more complicated compared to the linear-Q case
%(indicating the regret w.r.t. estimated losses).
% For the \textsc{Reg-Term} defined as {\small $W\sum\limits_{j,h}\E\limits_{s_h\sim \pi^\ast}\bigg [\sum\limits_a (\pi_j(a\vert s)-\pi^\ast(a\vert s))(\hat Q_j(s,a)-\hat B_j(s,a))\bigg ]$}, we can directly apply \Cref{lem:log-barrier regret bound} and calculate like \Cref{lem:Reg-Term of log-barrier}, except that we now apply \Cref{corol:multiplicative error of inverse (no MGR)} instead of \Cref{eq:multiplicative error of Sigma in variance-reduced} when bounding $\E[\hat Q_j(s,a)^2]$ and $\E[\hat B_j(s,a)^2]$.
%because we estimate both the Q-function and the dilated bonus function, which induces additional biases, %related to $\phi^\trans \hat \Lambda_{j,h}$. Fortunately, they can be bounded similarly to the ones caused by $\phi^\trans \hat \theta_{j,h}$; thus we omit them here for simplicity.
%second reason is the usage of the new covariance estimator. 
Specifically, we illustrate how to bound $\E[\textsc{Bias-1}]$, defined as follows:
\begin{equation*}
\scalemath{0.9}{
\textsc{Bias-1}\triangleq W\sum\limits_{j,h} \E\limits_{s_h\sim \pi^\ast}\bigg [\E\limits_{a_h\sim \pi_j}\bigg [\bar Q_j^{\pi_j}(s_h,a_h)-\hat Q_j(s_h,a_h)\bigg ]\bigg ],}
\end{equation*}
where $\bar Q_j^{\pi_j}(s,a)=\phi(s,a)^\trans \bar \theta_{j,h}^{\pi_j}$ is the average Q-functions in epoch $j$ induced by policy $\pi_j$, 
and $\hat Q_j(s,a)$ is its estimator.
By direct calculation, one may check that
\begin{align*}
&\quad 
\E\left [\bar Q_j^{\pi_j}(s,a)-\hat Q_j(s,a)\right ]\\
&=\phi(s,a)^\trans (I-\hat \Sigma_{j,h}^\dagger \Sigma_h^{\tilde\pi_j}) \bar \theta_{j,h}^{\pi_j},\\
&=\phi(s,a)^\trans \hat \Sigma_{j,h}^\dagger (\gamma I+\tilde \Sigma_{j,h}-\Sigma_h^{\tilde\pi_j}) \bar \theta_{j,h}^{\pi_j}\\
&\le \left \lVert \phi(s,a)\right \rVert_{\hat \Sigma_{j,h}^\dagger}\times \left \lVert (\gamma I+\tilde \Sigma_{j,h}-\Sigma_h^{\tilde\pi_j}) \bar \theta_{j,h}^{\pi_j}\right \rVert_{\hat \Sigma_{j,h}^\dagger}\\
&\le \left \lVert \phi(s,a)\right \rVert_{\hat \Sigma_{j,h}^\dagger} \times \bigg(\left \lVert \gamma \bar \theta_{j,h}^{\pi_j}(s,a)\right \rVert_{\hat \Sigma_{j,h}^\dagger} + \\
&\quad \hspace{2.7cm} \left \lVert (\tilde \Sigma_{j,h}-\Sigma_h^{\tilde\pi_j}) \bar \theta_{j,h}^{\pi_j}\right \rVert_{\hat \Sigma_{j,h}^\dagger}\bigg)\\
&\le \frac \beta 4 \left \lVert \phi(s,a)\right \rVert_{\hat \Sigma_{j,h}^\dagger}^2 + \frac 2\beta \left \lVert \gamma \bar \theta_{j,h}^{\pi_j}(s,a)\right \rVert_{\hat \Sigma_{j,h}^\dagger}^2+\\
&\quad \hspace{0.12cm} \frac 2\beta \left \lVert (\tilde \Sigma_{j,h}-\Sigma_h^{\tilde\pi_j}) \bar\theta_{j,h}^{\pi_j}\right \rVert_{\hat \Sigma_{j,h}^\dagger}^2,
\end{align*}
where we apply Cauchy-Schwarz inequality, triangle inequality, and AM-GM inequality (twice).
% which translates to $\O(\beta \lVert \phi(s,a)\rVert_{\hat \Sigma_{j,h}^\dagger}+\beta^{-1} \lVert \gamma \bar \theta_{j,h}^{\pi_j}\rVert_{\hat \Sigma_{j,h}^\dagger}+\beta^{-1} \lVert (\tilde \Sigma_{j,h}-\Sigma_h^{\pi_j}) \bar \theta_{j,h}^{\pi_j}\rVert_{\hat \Sigma_{j,h}^\dagger})$ by Cauchy-Schwartz and AM-GM inequalities.

The first term contributes to $\O(\beta dHK)$ after summing over $k$ and $h$ and applying \Cref{lem:dilated bonus}, while the second term is bounded by $\O(\frac \gamma \beta dH^2)$ for any $j$ because of the fact that $\lVert \hat \Sigma_{j,h}^\dagger\rVert_2\le \gamma^{-1}$.
By definition of $\bar \theta$, the last term becomes
\begin{align*}
\frac 2\beta \left \lVert (\tilde \Sigma_{j,h}-\Sigma_h^{\tilde\pi_j}) \hat \Sigma_{j,h}^\dagger (\tilde \Sigma_{j,h}-\Sigma_h^{\tilde\pi_j})\right \rVert_2 dH^2.
\end{align*}
By some algebraic manipulations, we write
\begin{align*}
&\quad (\tilde \Sigma_{j,h}-\Sigma_h^{\tilde\pi_j}) \hat \Sigma_{j,h}^\dagger (\tilde \Sigma_{j,h}-\Sigma_h^{\tilde\pi_j})\\
&=(\hat \Sigma_{j,h}^\dagger)^{-\nicefrac 12} \big (I-(\hat \Sigma_{j,h}^\dagger)^{\nicefrac 12} (\gamma I+\Sigma_{h}^{\tilde\pi_j}) (\hat \Sigma_{j,h}^\dagger)^{\nicefrac 12}\big )^2(\hat \Sigma_{j,h}^\dagger)^{-\nicefrac 12}.
\end{align*}

Thanks to \Cref{corol:multiplicative error of inverse (no MGR)}, we know that
\begin{equation*}
-2\sqrt\gamma I \preceq I-(\hat \Sigma_{j,h}^\dagger)^{1/2} (\gamma I+\Sigma_{h}^{\tilde\pi_j}) (\hat \Sigma_{j,h}^\dagger)^{1/2} \preceq 2\sqrt\gamma I.
\end{equation*}
Hence, the last term is also of order $\O(\frac \gamma \beta d H^2)$ -- same as the second term.
%Therefore, we can conclude that $\E[\textsc{Bias-1}]=\O(\beta dHK+\frac \gamma \beta dH^3 K)$. 
All other bias terms are bounded analogously.
%Writing out all the conditions (e.g., those needed to ensure a good \textsc{PolicyCover} and those ensuring the coefficient before the bonus terms are bounded by $\beta$) and carefully tuning the parameters then give $\mathcal R_K=\Otil(K^{8/9})$.

It only remains to bound the \textsc{Reg-Term}, whose calculation is the same as we did in the proof of \Cref{thm:linear-q log-barrier main theorem}. Property tuning all the parameters then gives $\Otil(K^{8/9})$ regret.
\end{proof}

\section{Conclusion}
%!TEX root=main.tex

In this paper, we study policy optimization algorithms in adversarial MDPs with linear function approximation.

Building on top of the dilated bonus approach introduced by \citet{luo2021policy_nipsver}, we derive two algorithms which both achieve \textit{the first} $\Otil(\sqrt K)$-style regret bounds in linear-Q adversarial MDPs when a simulator is accessible. Technically speaking, the first algorithm uses a refined analysis for FTRL with the log-barrier regularizer, while the second one relies on a new magnitude-reduced loss estimator.

We further generalize the first approach to simulator-free linear MDPs and get $\Otil(K^{8/9})$ regret, greatly improving over the best-known $\Otil(T^{14/15})$ bound \citep{luo2021policy}. This generalization also contains an alternative to the Matrix Geometric Resampling procedure \citep{neu2020efficient} using a new matrix concentration bound (\Cref{lem:new matrix concentration lemma}).

We expect all these techniques to be of independent interest and potentially useful for other problems.
In light of various concurrent works on linear MDPs \citep{sherman2023improved,kong2023improved,lancewicki2023delay}, further improving the result for linear MDPs is a key future direction --- either in terms of $K$ or $A$.

\section*{Acknowledgements}
We thank the anonymous reviewers for their insightful comments.
HL is supported by NSF Award IIS-1943607 and a Google Research Scholar Award.

\bibliography{references}

\begin{thebibliography}{35}
\providecommand{\natexlab}[1]{#1}
\providecommand{\url}[1]{\texttt{#1}}
\expandafter\ifx\csname urlstyle\endcsname\relax
  \providecommand{\doi}[1]{doi: #1}\else
  \providecommand{\doi}{doi: \begingroup \urlstyle{rm}\Url}\fi

\bibitem[Abbasi-Yadkori et~al.(2019)Abbasi-Yadkori, Bartlett, Bhatia, Lazic,
  Szepesvari, and Weisz]{abbasi2019politex}
Abbasi-Yadkori, Y., Bartlett, P., Bhatia, K., Lazic, N., Szepesvari, C., and
  Weisz, G.
\newblock Politex: Regret bounds for policy iteration using expert prediction.
\newblock In \emph{International Conference on Machine Learning}, pp.\
  3692--3702. PMLR, 2019.

\bibitem[Agarwal et~al.(2020)Agarwal, Henaff, Kakade, and Sun]{agarwal2020pc}
Agarwal, A., Henaff, M., Kakade, S., and Sun, W.
\newblock Pc-pg: Policy cover directed exploration for provable policy gradient
  learning.
\newblock \emph{Advances in neural information processing systems},
  33:\penalty0 13399--13412, 2020.

\bibitem[Auer et~al.(2002)Auer, Cesa-Bianchi, Freund, and
  Schapire]{auer2002nonstochastic}
Auer, P., Cesa-Bianchi, N., Freund, Y., and Schapire, R.~E.
\newblock The nonstochastic multiarmed bandit problem.
\newblock \emph{SIAM journal on computing}, 32\penalty0 (1):\penalty0 48--77,
  2002.

\bibitem[Bubeck et~al.(2012)Bubeck, Cesa-Bianchi, and
  Kakade]{bubeck2012towards}
Bubeck, S., Cesa-Bianchi, N., and Kakade, S.~M.
\newblock Towards minimax policies for online linear optimization with bandit
  feedback.
\newblock In \emph{Conference on Learning Theory}, pp.\  41--1. JMLR Workshop
  and Conference Proceedings, 2012.

\bibitem[Cai et~al.(2020)Cai, Yang, Jin, and Wang]{cai2020provably}
Cai, Q., Yang, Z., Jin, C., and Wang, Z.
\newblock Provably efficient exploration in policy optimization.
\newblock In \emph{International Conference on Machine Learning}, pp.\
  1283--1294. PMLR, 2020.

\bibitem[Dani et~al.(2008)Dani, Kakade, and Hayes]{dani2008price}
Dani, V., Kakade, S.~M., and Hayes, T.
\newblock The price of bandit information for online optimization.
\newblock \emph{Advances in Neural Information Processing Systems}, 20, 2008.

\bibitem[Foster et~al.(2016)Foster, Li, Lykouris, Sridharan, and
  Tardos]{foster2016learning}
Foster, D.~J., Li, Z., Lykouris, T., Sridharan, K., and Tardos, E.
\newblock Learning in games: Robustness of fast convergence.
\newblock \emph{Advances in Neural Information Processing Systems}, 29, 2016.

\bibitem[He et~al.(2022)He, Zhou, and Gu]{he2022near}
He, J., Zhou, D., and Gu, Q.
\newblock Near-optimal policy optimization algorithms for learning adversarial
  linear mixture mdps.
\newblock In \emph{International Conference on Artificial Intelligence and
  Statistics}, pp.\  4259--4280. PMLR, 2022.

\bibitem[Ito(2021)]{ito2021parameter}
Ito, S.
\newblock Parameter-free multi-armed bandit algorithms with hybrid
  data-dependent regret bounds.
\newblock In \emph{Conference on Learning Theory}, pp.\  2552--2583. PMLR,
  2021.

\bibitem[Jin et~al.(2020{\natexlab{a}})Jin, Jin, Luo, Sra, and
  Yu]{jin2020learning}
Jin, C., Jin, T., Luo, H., Sra, S., and Yu, T.
\newblock Learning adversarial markov decision processes with bandit feedback
  and unknown transition.
\newblock In \emph{International Conference on Machine Learning}, pp.\
  4860--4869. PMLR, 2020{\natexlab{a}}.

\bibitem[Jin et~al.(2020{\natexlab{b}})Jin, Yang, Wang, and
  Jordan]{jin2020provably}
Jin, C., Yang, Z., Wang, Z., and Jordan, M.~I.
\newblock Provably efficient reinforcement learning with linear function
  approximation.
\newblock In \emph{Conference on Learning Theory}, pp.\  2137--2143. PMLR,
  2020{\natexlab{b}}.

\bibitem[Kakade \& Langford(2002)Kakade and Langford]{kakade2002approximately}
Kakade, S. and Langford, J.
\newblock Approximately optimal approximate reinforcement learning.
\newblock In \emph{In Proc. 19th International Conference on Machine Learning}.
  Citeseer, 2002.

\bibitem[Kong et~al.(2023)Kong, Zhang, Wang, and Li]{kong2023improved}
Kong, F., Zhang, X., Wang, B., and Li, S.
\newblock Improved regret bounds for linear adversarial mdps via linear
  optimization.
\newblock \emph{arXiv preprint arXiv:2302.06834}, 2023.

\bibitem[Lancewicki et~al.(2023)Lancewicki, Rosenberg, and
  Sotnikov]{lancewicki2023delay}
Lancewicki, T., Rosenberg, A., and Sotnikov, D.
\newblock Delay-adapted policy optimization and improved regret for adversarial
  mdp with delayed bandit feedback.
\newblock \emph{arXiv preprint arXiv:2305.07911}, 2023.

\bibitem[Lattimore \& Szepesv{\'a}ri(2020)Lattimore and
  Szepesv{\'a}ri]{lattimore2020bandit}
Lattimore, T. and Szepesv{\'a}ri, C.
\newblock \emph{Bandit algorithms}.
\newblock Cambridge University Press, 2020.

\bibitem[Luo et~al.(2021{\natexlab{a}})Luo, Wei, and Lee]{luo2021policy}
Luo, H., Wei, C.-Y., and Lee, C.-W.
\newblock Policy optimization in adversarial mdps: Improved exploration via
  dilated bonuses.
\newblock \emph{arXiv preprint arXiv:2107.08346}, 2021{\natexlab{a}}.

\bibitem[Luo et~al.(2021{\natexlab{b}})Luo, Wei, and
  Lee]{luo2021policy_nipsver}
Luo, H., Wei, C.-Y., and Lee, C.-W.
\newblock Policy optimization in adversarial mdps: Improved exploration via
  dilated bonuses.
\newblock \emph{Advances in Neural Information Processing Systems},
  34:\penalty0 22931--22942, 2021{\natexlab{b}}.

\bibitem[Meng \& Zheng(2010)Meng and Zheng]{meng2010optimal}
Meng, L. and Zheng, B.
\newblock The optimal perturbation bounds of the moore--penrose inverse under
  the frobenius norm.
\newblock \emph{Linear algebra and its applications}, 432\penalty0
  (4):\penalty0 956--963, 2010.

\bibitem[Neu \& Olkhovskaya(2020)Neu and Olkhovskaya]{neu2020efficient}
Neu, G. and Olkhovskaya, J.
\newblock Efficient and robust algorithms for adversarial linear contextual
  bandits.
\newblock In \emph{Conference on Learning Theory}, pp.\  3049--3068. PMLR,
  2020.

\bibitem[Neu \& Olkhovskaya(2021)Neu and Olkhovskaya]{neu2021online}
Neu, G. and Olkhovskaya, J.
\newblock Online learning in mdps with linear function approximation and bandit
  feedback.
\newblock \emph{Advances in Neural Information Processing Systems},
  34:\penalty0 10407--10417, 2021.

\bibitem[Putta \& Agrawal(2022)Putta and Agrawal]{putta2022scale}
Putta, S.~R. and Agrawal, S.
\newblock Scale-free adversarial multi armed bandits.
\newblock In \emph{International Conference on Algorithmic Learning Theory},
  pp.\  910--930. PMLR, 2022.

\bibitem[Rosenberg \& Mansour(2019)Rosenberg and Mansour]{rosenberg2019online}
Rosenberg, A. and Mansour, Y.
\newblock Online convex optimization in adversarial markov decision processes.
\newblock In \emph{International Conference on Machine Learning}, pp.\
  5478--5486. PMLR, 2019.

\bibitem[Shani et~al.(2020)Shani, Efroni, Rosenberg, and
  Mannor]{shani2020optimistic}
Shani, L., Efroni, Y., Rosenberg, A., and Mannor, S.
\newblock Optimistic policy optimization with bandit feedback.
\newblock In \emph{International Conference on Machine Learning}, pp.\
  8604--8613. PMLR, 2020.

\bibitem[Sherman et~al.(2023)Sherman, Koren, and Mansour]{sherman2023improved}
Sherman, U., Koren, T., and Mansour, Y.
\newblock Improved regret for efficient online reinforcement learning with
  linear function approximation.
\newblock \emph{arXiv preprint arXiv:2301.13087}, 2023.

\bibitem[Tropp(2012)]{tropp2012user}
Tropp, J.~A.
\newblock User-friendly tail bounds for sums of random matrices.
\newblock \emph{Foundations of computational mathematics}, 12\penalty0
  (4):\penalty0 389--434, 2012.

\bibitem[Wagenmaker \& Jamieson(2022)Wagenmaker and
  Jamieson]{wagenmaker2022instance}
Wagenmaker, A. and Jamieson, K.~G.
\newblock Instance-dependent near-optimal policy identification in linear mdps
  via online experiment design.
\newblock \emph{Advances in Neural Information Processing Systems},
  35:\penalty0 5968--5981, 2022.

\bibitem[Wang et~al.(2020)Wang, Du, Yang, and Salakhutdinov]{wang2020reward}
Wang, R., Du, S.~S., Yang, L., and Salakhutdinov, R.~R.
\newblock On reward-free reinforcement learning with linear function
  approximation.
\newblock \emph{Advances in neural information processing systems},
  33:\penalty0 17816--17826, 2020.

\bibitem[Wei \& Luo(2018)Wei and Luo]{wei2018more}
Wei, C.-Y. and Luo, H.
\newblock More adaptive algorithms for adversarial bandits.
\newblock In \emph{Conference On Learning Theory}, pp.\  1263--1291. PMLR,
  2018.

\bibitem[Wei et~al.(2021)Wei, Jahromi, Luo, and Jain]{wei2021learning}
Wei, C.-Y., Jahromi, M.~J., Luo, H., and Jain, R.
\newblock Learning infinite-horizon average-reward mdps with linear function
  approximation.
\newblock In \emph{International Conference on Artificial Intelligence and
  Statistics}, pp.\  3007--3015. PMLR, 2021.

\bibitem[Yang \& Wang(2020)Yang and Wang]{yang2020reinforcement}
Yang, L. and Wang, M.
\newblock Reinforcement learning in feature space: Matrix bandit, kernels, and
  regret bound.
\newblock In \emph{International Conference on Machine Learning}, pp.\
  10746--10756. PMLR, 2020.

\bibitem[Zanette et~al.(2021)Zanette, Cheng, and
  Agarwal]{zanette2021cautiously}
Zanette, A., Cheng, C.-A., and Agarwal, A.
\newblock Cautiously optimistic policy optimization and exploration with linear
  function approximation.
\newblock In \emph{Conference on Learning Theory}, pp.\  4473--4525. PMLR,
  2021.

\bibitem[Zheng et~al.(2019)Zheng, Luo, Diakonikolas, and
  Wang]{zheng2019equipping}
Zheng, K., Luo, H., Diakonikolas, I., and Wang, L.
\newblock Equipping experts/bandits with long-term memory.
\newblock \emph{Advances in Neural Information Processing Systems}, 32, 2019.

\bibitem[Zhou et~al.(2021)Zhou, He, and Gu]{zhou2021provably}
Zhou, D., He, J., and Gu, Q.
\newblock Provably efficient reinforcement learning for discounted mdps with
  feature mapping.
\newblock In \emph{International Conference on Machine Learning}, pp.\
  12793--12802. PMLR, 2021.

\bibitem[Zimin \& Neu(2013)Zimin and Neu]{zimin2013online}
Zimin, A. and Neu, G.
\newblock Online learning in episodic markovian decision processes by relative
  entropy policy search.
\newblock \emph{Advances in neural information processing systems}, 26, 2013.

\bibitem[Zimmert \& Lattimore(2022)Zimmert and Lattimore]{zimmert2022return}
Zimmert, J. and Lattimore, T.
\newblock Return of the bias: Almost minimax optimal high probability bounds
  for adversarial linear bandits.
\newblock In \emph{Conference on Learning Theory}, pp.\  3285--3312. PMLR,
  2022.

\end{thebibliography}
\bibliographystyle{icml2023}

%%%%%%%%%%%%%%%%%%%%%%%%%%%%%%%%%%%%%%%%%%%%%%%%%%%%%%%%%%%%%%%%%%%%%%%%%%%%%%%
%%%%%%%%%%%%%%%%%%%%%%%%%%%%%%%%%%%%%%%%%%%%%%%%%%%%%%%%%%%%%%%%%%%%%%%%%%%%%%%
% APPENDIX
%%%%%%%%%%%%%%%%%%%%%%%%%%%%%%%%%%%%%%%%%%%%%%%%%%%%%%%%%%%%%%%%%%%%%%%%%%%%%%%
%%%%%%%%%%%%%%%%%%%%%%%%%%%%%%%%%%%%%%%%%%%%%%%%%%%%%%%%%%%%%%%%%%%%%%%%%%%%%%%
\onecolumn
\newpage
\appendix
\renewcommand{\appendixpagename}{\centering \LARGE Supplementary Materials}
\appendixpage

% \startcontents[section]
% \printcontents[section]{l}{1}{\setcounter{tocdepth}{2}}

% \newpage
\section{Auxiliary Lemmas and Omitted Algorithms}
%!TEX root=main.tex

\subsection{Matrix Geometric Resampling Procedure}
Matrix Geometric Resampling algorithm, introduced by \citet{neu2020efficient} and improved by \citet{luo2021policy_nipsver}, generates a close estimation of $(\gamma I+\Sigma_h^{\pi})^{-1}$ for given $\pi\in \Pi$ and $h\in [H]$. Formally, we present it in \Cref{alg:matrix geometric resampling} and state its performance guarantees in \Cref{lem:MGR lemma}.
\begin{algorithm}[htb]
\caption{Matrix Geometric Resampling Algorithm}
\label{alg:matrix geometric resampling}
\begin{algorithmic}[1]
\REQUIRE Policy $\pi$, regularization parameter $\gamma$, desired precision $\epsilon$.
\ENSURE A set of matrices $\{\hat \Sigma_h^\dagger\}_{h=1}^H$.
\STATE Set $M\ge \frac{24
\ln(dHT)}{\epsilon^2 \gamma^2}$ and $N\ge \frac 2\gamma \ln \frac{1}{\epsilon \gamma}$. Set $c=\frac 12$.
\STATE Draw $MN$ trajectories of $\pi$ using the simulator; denote them by $\{(s_{m,n,h},a_{m,n,h})\}_{m\in [M],n\in [N],h\in [H]}$.
\FOR{$m=1,2,\ldots,M$}
\FOR{$n=1,2,\ldots,N$}
\STATE Set $Y_{n,h}=\gamma I+\phi(s_{m,n,h},a_{m,n,h})\phi(s_{m,n,h},a_{m,n,h})^\trans$ and $Z_{n,h}=\prod_{n'=1}^n (I-c Y_{n',h})$ for all $h\in [H]$.
\ENDFOR
\STATE Calculate $\hat \Sigma_{m,h}^\dagger=cI+c\sum_{n=1}^N Z_{n,h}$ for all $h\in [H]$.
\ENDFOR
\STATE Set $\hat \Sigma_h^\dagger = \frac 1M\sum_{m=1}^M \hat \Sigma_{m,h}^\dagger$ for all $h\in [H]$.
\end{algorithmic}
\end{algorithm}

\begin{lemma}[Lemma D.1 of \citet{luo2021policy_nipsver}]\label{lem:MGR lemma}
With the configurations of $M$ and $N$ stated in \Cref{alg:matrix geometric resampling}, for any policy $\pi$, we have $\lVert \hat \Sigma_h^\dagger\rVert_2\le \gamma^{-1}$, $\lVert \E[\hat \Sigma_h^\dagger]-(\gamma I+\Sigma_h^\pi)^{-1}\rVert_2\le \epsilon$. Moreover, there exists a good event that happens with probability $1-\frac{1}{K^3}$, under which the following two properties hold:
\begin{equation*}
\lVert \E[\hat \Sigma_h^\dagger]-(\gamma I+\Sigma_h^\pi)^{-1}\rVert_2\le 2\epsilon,\quad \lVert \hat \Sigma_h^\dagger \Sigma_h^\pi \rVert_2\le 1+2\epsilon.
\end{equation*}
\end{lemma}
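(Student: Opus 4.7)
The plan is to exploit the Neumann series identity $(\gamma I + \Sigma)^{-1} = c \sum_{n=0}^\infty (I - c(\gamma I + \Sigma))^n$, valid whenever the spectral radius of $I - c(\gamma I + \Sigma)$ is strictly below one. For a fixed outer index $m$ and layer $h$, the inner samples $(s_{m,n,h}, a_{m,n,h})$ are drawn independently across $n$ from the same policy, so $Y_{1,h}, \ldots, Y_{N,h}$ are i.i.d.\ with $\mathbb{E}[Y_{n,h}] = \gamma I + \Sigma_h^\pi$. By independence of the factors in the product, $\mathbb{E}[Z_{n,h}] = \prod_{n'=1}^n \mathbb{E}[I - c Y_{n',h}] = (I - c(\gamma I + \Sigma_h^\pi))^n$, and hence $\mathbb{E}[\hat \Sigma_{m,h}^\dagger] = cI + c\sum_{n=1}^N (I - c(\gamma I + \Sigma_h^\pi))^n$ is precisely the $N$-th partial sum of the Neumann series for $(\gamma I + \Sigma_h^\pi)^{-1}$.

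For both the spectral norm bound and the truncation bias, the key observation is that with $c = 1/2$ and $\lVert \phi \rVert_2 \le 1$, the eigenvalues of $c Y_{n,h}$ lie in $[\gamma/2, (\gamma + 1)/2] \subset [\gamma/2, 1]$, so $\lVert I - c Y_{n,h} \rVert_2 \le 1 - \gamma/2$ deterministically. This yields $\lVert Z_{n,h} \rVert_2 \le (1 - \gamma/2)^n$ and therefore $\lVert \hat \Sigma_h^\dagger \rVert_2 \le c\sum_{n=0}^\infty (1 - \gamma/2)^n = 1/\gamma$ deterministically. The same exponential decay bounds the truncated tail of the Neumann series by $(2/\gamma)(1 - c\gamma)^{N+1}$; the chosen $N \ge (2/\gamma) \ln(1/(\epsilon\gamma))$ then forces this to be at most $\epsilon$, yielding the desired in-expectation approximation $\lVert \mathbb{E}[\hat \Sigma_h^\dagger] - (\gamma I + \Sigma_h^\pi)^{-1}\rVert_2 \le \epsilon$.

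For the high-probability claim, I would apply a matrix concentration inequality (matrix Bernstein or Hoeffding suffices) to the outer average $\hat \Sigma_h^\dagger = \frac{1}{M}\sum_{m=1}^M \hat \Sigma_{m,h}^\dagger$ of $M$ independent copies, each bounded in operator norm by $1/\gamma$. With $M = \Omega(\gamma^{-2}\epsilon^{-2}\log(dHK))$, a union bound over layers and episodes gives $\lVert \hat \Sigma_h^\dagger - \mathbb{E}[\hat \Sigma_h^\dagger] \rVert_2 \le \epsilon$ with probability at least $1 - 1/K^3$; combining with the bias bound yields the $2\epsilon$ approximation. The multiplicative-style bound $\lVert \hat \Sigma_h^\dagger \Sigma_h^\pi \rVert_2 \le 1 + 2\epsilon$ then follows by the triangle inequality, writing $\hat \Sigma_h^\dagger \Sigma_h^\pi = (\gamma I + \Sigma_h^\pi)^{-1} \Sigma_h^\pi + (\hat \Sigma_h^\dagger - (\gamma I + \Sigma_h^\pi)^{-1}) \Sigma_h^\pi$ and using $\lVert (\gamma I + \Sigma_h^\pi)^{-1} \Sigma_h^\pi \rVert_2 \le 1$ together with $\lVert \Sigma_h^\pi \rVert_2 \le 1$.

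The main obstacle is ensuring the Neumann-series expansion is legitimate \emph{in expectation} despite matrix noncommutativity. This works precisely because the independence of samples within a single $m$ across $n$ lets the product expectation factorize into a product of expectations; if one shared samples across $m$ inside a single product, this factorization would fail and the estimator would acquire uncontrolled bias. The spectral radius bound $1 - \gamma/2$, obtained from the explicit eigenvalue analysis of $c(\gamma I + \phi\phi^\trans)$ rather than some looser operator-norm argument, is what keeps the step count $N$ polynomial in $1/\gamma$ and logarithmic in $1/\epsilon$; everything else is a standard application of matrix concentration.
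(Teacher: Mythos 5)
Your proposal is correct and matches the standard proof of Matrix Geometric Resampling guarantees (which this paper does not re-prove but imports as Lemma D.1 of \citet{luo2021policy_nipsver}). The structure you lay out — (i) deterministic spectral control $\lVert I - cY_{n,h}\rVert_2 \le 1-\gamma/2$ from the eigenvalue range $[\gamma/2,(\gamma+1)/2]$ of $cY_{n,h}$ (valid since $\gamma<1$ in the application), giving $\lVert \hat\Sigma_{m,h}^\dagger\rVert_2 \le c\sum_{n\ge 0}(1-\gamma/2)^n = 1/\gamma$; (ii) factorization of $\E[Z_{n,h}]$ across independent inner samples to recover the $N$-term truncation of the Neumann series, with truncation error controlled by geometric decay; (iii) matrix concentration of the outer $M$-average plus a union bound over $(h,k)$ — is precisely the intended argument.

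Two small quantitative remarks. First, the truncation tail is $c\sum_{n>N}(1-c\gamma)^n = (1-c\gamma)^{N+1}/\gamma$, not $(2/\gamma)(1-c\gamma)^{N+1}$ as you wrote; the tighter constant is what makes $N \ge \frac{2}{\gamma}\ln\frac{1}{\epsilon\gamma}$ with $c=1/2$ suffice for the $\epsilon$ bias bound exactly. Second, the high-probability clause in the lemma as printed literally repeats $\lVert \E[\hat\Sigma_h^\dagger] - (\gamma I + \Sigma_h^\pi)^{-1}\rVert_2 \le 2\epsilon$, a quantity that is deterministic; this is almost certainly a typo for $\lVert \hat\Sigma_h^\dagger - (\gamma I + \Sigma_h^\pi)^{-1}\rVert_2 \le 2\epsilon$ (no expectation), which is what your concentration step proves and what the downstream $\lVert \hat\Sigma_h^\dagger\Sigma_h^\pi\rVert_2 \le 1+2\epsilon$ bound needs. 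You handled the intended statement correctly, including the observation that $\lVert (\gamma I + \Sigma_h^\pi)^{-1}\Sigma_h^\pi\rVert_2 \le 1$ because the two matrices commute and the relevant eigenvalues are $\lambda/(\gamma+\lambda) \le 1$, and $\lVert \Sigma_h^\pi\rVert_2 \le 1$ from $\lVert\phi\rVert_2\le 1$.
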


\subsection{Dilated Bonus Calculation in Linear-Q MDP Algorithms}
The following algorithm (which is Algorithm 3 of \citet{luo2021policy_nipsver}) indicates how we calculate the dilated bonus function $B_k(s,a)$ with the help of the simulator in \Cref{alg:linear-q with log-barrier,alg:linear-q using variance-reduced}. In both algorithms, we define the bonus as
\begin{equation*}
b_k(s,a)=\beta \bigg (\lVert \phi(s,a)\rVert_{\hat \Sigma_{j,h}^\dagger}^2+ \E_{\tilde a\sim \pi_k(\cdot \mid s)}\big [\lVert \phi(s,\tilde a)\rVert_{\hat \Sigma_{j,h}^\dagger}^2\big ] \bigg ).
\end{equation*}
\begin{algorithm}[htb]
\caption{Dilated Bonus Calculation in Linear-Q Algorithms}
\label{alg:bonus calculation in linear Q}
\begin{algorithmic}[1]
\REQUIRE Episode $k\in [K]$, state $s\in \mS$, action $a\in \mA$.
\ENSURE The dilated bonus function $B_k(s,a)$.
\STATE \textbf{if} $\textsc{Bonus}(k,s,a)$ is called before \textbf{then} \textbf{return} the value calculated at that time.
\STATE Let $h$ be the layer that $s$ lies in, i.e., $s\in \mS_h$. \textbf{if} $h=H+1$ \textbf{then} \textbf{return} 0.
\STATE Call the simulator for a next state $s'\sim \P(\cdot \mid s,a)$. Calculate $\pi_k(\cdot \mid s)$ and $\pi_k(\cdot \mid s')$ according to the algorithm.
\STATE \textbf{return} $\beta \bigg (\lVert \phi(s,a)\rVert_{\hat \Sigma_{k,h}^\dagger}^2+ \E_{\tilde a\sim \pi_k(\cdot \mid s)}\big [\lVert \phi(s,\tilde a)\rVert_{\hat \Sigma_{k,h}^\dagger}^2\big ] \bigg )+ (1+\frac 1H) B_k(s',a')$ where $a'$ is a sample from $\pi_k(\cdot \mid s')$.
\end{algorithmic}
\end{algorithm}

Note that, during the calculation of $\pi_k$ and $B_k(s',a')$, we are essentially recursively calling this algorithm.

\subsection{\textsc{PolicyCover} Algorithm in Linear MDP Algorithms}
The following algorithm is Algorithm 6 of \citet{luo2021policy} (which itself builds upon Algorithm 1 of \citet{wang2020reward}; we refer the readers to the original paper for more details).
This algorithm generates a mixture of policies, which we call the policy cover $\pi_{\text{cov}}$, together with an estimate of the (regularized) inverse of its covariance matrix, namely $\{\hat \Sigma_h^{\text{cov}}\}_{h=1}^H$.
It ensures the following property:
\begin{lemma}[Lemma D.4 by \citet{luo2021policy}]\label{lem:PolicyCover}
With probability $1-\delta$, we have the following for all policies $\pi$ and any $h\in [H]$:
\begin{equation*}
\Pr_{s_h\sim \pi}\left [s_h\not \in \mathcal K\right ]=\Otil\left (\frac{dH}{\alpha}\right ),
\end{equation*}
where $\mathcal K$ is set of known states defined as follows:
\begin{equation*}
\mathcal K=\left \{s
\in \mS\middle \vert \forall a\in \mA,\lVert \phi(s,a)\rVert_{(\hat \Sigma_h^{\text{cov}})^{-1}}^2\le \alpha\right \},
\end{equation*}
where $h$ is the layer that $s$ lies in.
\end{lemma}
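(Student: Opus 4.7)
The plan is to prove \Cref{lem:PolicyCover} by leveraging the reward-free exploration framework underlying the PolicyCover algorithm of \citet{wang2020reward}, adapted to the linear MDP setting. At a high level, the statement holds because $\pi_{\text{cov}}$ is constructed as an approximate optimizer of a reward function that deliberately seeks out directions uncovered by the current empirical covariance, so any competing policy $\pi$ inherits the same uniform coverage bound by optimality.

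First, I would reduce the probability bound to an expectation bound via Markov's inequality. Letting $U(s)\triangleq \max_a \lVert \phi(s,a)\rVert_{(\hat\Sigma_h^{\text{cov}})^{-1}}^2$, the target quantity is $\Pr_{s_h\sim\pi}[U(s_h)>\alpha]\le \alpha^{-1}\,\E_{s_h\sim\pi}[U(s_h)]$. Since $U(s)\le \sum_a \lVert\phi(s,a)\rVert^2_{(\hat\Sigma_h^{\text{cov}})^{-1}}$, this further reduces to controlling a trace of the form $\tr\bigl((\hat\Sigma_h^{\text{cov}})^{-1}\,\E_{s_h\sim\pi}[\sum_a \phi(s_h,a)\phi(s_h,a)^\trans]\bigr)$ uniformly over $\pi$, at the cost of an $A$-factor absorbed into $\Otil$.

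Next, I would invoke the algorithmic construction. Over its outer iterations $t=1,\ldots,T$, PolicyCover produces near-optimal policies $\pi^t$ for the clipped bonus reward $r_h^t(s,a)=\min\{1,\beta\lVert\phi(s,a)\rVert^2_{(\hat\Sigma_h^t)^{-1}}\}$ and augments $\hat\Sigma_h^{t+1}$ with fresh samples drawn from $\pi^t$. The elliptical potential lemma then bounds $\sum_t \E_{\pi^t}[\sum_h \lVert\phi(s_h,a_h)\rVert^2_{(\hat\Sigma_h^t)^{-1}}]=\Otil(dH)$, so by pigeonhole some iteration $t^\star$ achieves per-layer average bonus $\Otil(dH/T)$. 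Planning optimality of $\pi^{t^\star}$ for its own reward then transfers this bound to \emph{any} policy $\pi$ evaluated against the final $\hat\Sigma_h^{\text{cov}}$: no $\pi$ can out-earn $\pi^{t^\star}$ by more than a planning-slack term, yielding $\max_\pi \E_\pi[U(s_h)]=\Otil(dH/T)$. Choosing $T$ on the order of $\alpha^{-1}$ and combining with the Markov step of the previous paragraph produces the claimed $\Otil(dH/\alpha)$.

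Finally, two pieces of auxiliary concentration are needed: (i) relating the empirical covariance $\hat\Sigma_h^t$ built from finitely many trajectories per iteration to the expected covariance $\Sigma_h^{\pi^t}$, using a matrix Bernstein-type inequality analogous to our \Cref{lem:new matrix concentration lemma}; and (ii) bounding the gap between the bonus-reward planner's approximate value and the true value via the standard linear-MDP simulation lemma, which exploits $\mathbb P(s'\mid s,a)=\langle\phi(s,a),\nu(s')\rangle$ to keep value-function errors linear in the features. The main obstacle I anticipate is reconciling the clipping $\min\{1,\cdot\}$ in $r_h^t$ with the quadratic structure of the elliptical potential: the clipping is essential so that the planning subroutine sees bounded rewards, but it breaks the telescoping identity $\log\det(\hat\Sigma_h^{t+1})-\log\det(\hat\Sigma_h^t)=\lVert\phi\rVert^2_{(\hat\Sigma_h^t)^{-1}}+O(\lVert\phi\rVert^4)$ that drives the potential bound. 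The standard resolution is to partition state-action pairs into ``small bonus'' ones (where clipping is inactive and the potential argument applies verbatim) and ``large bonus'' ones (which, by an inductive argument across layers $1,\ldots,h-1$, are reached with probability at most $\Otil(dH/\alpha)$ under any policy and therefore contribute negligibly to the regret from reward clipping).
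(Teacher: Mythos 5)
This paper never proves \Cref{lem:PolicyCover}: it is imported verbatim as Lemma D.4 of \citet{luo2021policy}, whose proof instantiates the reward-free exploration analysis of \citet{wang2020reward} for \Cref{alg:PolicyCover}, so there is no in-paper argument to compare against. Your sketch is in the right spirit (optimistic planning against an uncertainty reward, elliptical potential, per-iteration concentration), but the main chain has a genuine gap. The reduction via Markov's inequality to bounding $\max_\pi \E_{s_h\sim\pi}[U(s_h)]$ for the \emph{unclipped} quantity $U(s)=\max_a\lVert\phi(s,a)\rVert^2_{(\hat\Sigma_h^{\text{cov}})^{-1}}$ does not go through. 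Since $\hat\Sigma_h^{\text{cov}}\succeq M_0^{-1}I$, in directions the cover never visited one has $U(s)$ as large as $M_0$, so for a policy $\pi$ that deliberately steers toward such states, $\E_\pi[U(s_h)]\approx M_0\cdot\Pr_\pi[s_h\notin\mathcal K]$; bounding this expectation uniformly over $\pi$ is therefore essentially the statement being proved, and it is not $\Otil(dH/T)$ in general. The step where this is smuggled in is ``planning optimality of $\pi^{t^\star}$ transfers this bound to any policy'': the planner in \Cref{alg:PolicyCover} is (near-)optimal only for the clipped/ramp reward $r_m$, so optimality transfers an upper bound on the value of that bounded reward, not on the expectation of the raw quadratic bonus, while the elliptical potential only controls the algorithm's own on-policy sums. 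The correct route --- which is what \citet{wang2020reward} and \citet{luo2021policy} actually do, and which your closing remark about partitioning into small/large-bonus regions is groping toward --- keeps the threshold inside from the start: bound $\max_\pi\Pr_\pi\big[\exists a:\lVert\phi(s_h,a)\rVert^2_{\Gamma_{M_0+1,h}^{-1}}>\alpha/M_0\big]$ by the optimistic value of the ramp reward, use $\Gamma_{M_0+1,h}\succeq\Gamma_{m,h}$ to compare against every iteration $m$, and show the averaged optimistic values are small via elliptical potential plus the $N_0$-rollout concentration, under the stated choices of $M_0,N_0$.

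Two further accounting errors would also need fixing even if the structure were repaired. First, bounding $U(s)\le\sum_a\lVert\phi(s,a)\rVert^2_{(\hat\Sigma_h^{\text{cov}})^{-1}}$ costs a multiplicative $A$, which cannot be ``absorbed into $\Otil$'' (the paper's $\Otil$ hides only logarithmic factors, and the lemma's bound has no $A$ dependence); the max over actions is handled for free by the planner's $\max_a$ in \Cref{alg:PolicyCover}, not by a union or trace bound. Second, your final arithmetic does not yield the claim: Markov gives $\Pr[U>\alpha]\le\E[U]/\alpha$, so plugging in your own bound $\E[U]=\Otil(dH/T)$ with $T\asymp\alpha^{-1}$ produces the vacuous $\Otil(dH)$ rather than $\Otil(dH/\alpha)$; in the actual argument $\alpha$ enters through the threshold defining $\mathcal K$ (and through the conditions on $M_0$), not as a choice of the number of planning iterations.
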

\begin{algorithm}[htb]
\caption{\textsc{PolicyCover} for Linear MDPs}
\label{alg:PolicyCover}
\begin{algorithmic}[1]
\REQUIRE Configurations $M_0,N_0,\alpha$, failure probability $\delta$.
\STATE Set $\Gamma_{1,h}\gets I$ for all $h\in [H]$. Set $\tilde \beta=60dH\sqrt{\ln \frac K\delta}$.
\FOR{$m=1,2,\ldots,M_0$}
\STATE Set $\hat V_m(x_{h+1})\gets 0$.
\FOR{$h=H,H-1,\ldots,1$}
\STATE For all $(s,a)\in \mS_h\times \mA$, let
\begin{align*}
\hat Q_m(s,a)&=\min\{r_m(s,a)+\tilde \beta \lVert \phi(s,a)\rVert_{\Gamma_{m,h}^{-1}}+\phi(s,a)^\trans \hat \theta_{m,h},H\},\\
\hat V_m(s)&=\max_{a\in \mA}\hat Q_m(s,a),\\
\pi_m(a\mid s)&=\mathbbm 1[a=\argmax_{a'\in \mA}\hat Q_m(s,a')],
\end{align*}
where
\begin{align*}
r_m(s,a)&=\text{ramp}_{\frac 1K}(\lVert \phi(s,a)\rVert_{\Gamma_{m,h}^{-1}}-\frac{\alpha}{M_0}),\\
\hat \theta_{m,h}&=\Gamma_{m,h}^{-1}(\frac{1}{N_0}\sum_{m'=1}^{m-1}\sum_{n'=1}^{N_0}\phi(s_{m',n',h},a_{m',n',h})\hat V_m(s_{m',n',h+1})).
\end{align*}
Here, $\text{ramp}_z(y)$ is $0$ if $y\le -z$, $1$ if $y\ge 0$, and $\frac yz+1$ otherwise.
\ENDFOR
\FOR{$n=1,2,\ldots,N_0$}
\STATE Execute $\pi_m$ and get trajectory $\{(s_{m,n,h},a_{m,n,h})\}_{h=1}^H$.
\ENDFOR
\STATE Compute $\Gamma_{m+1,h}$ as
\begin{equation*}
\Gamma_{m+1,h}\gets \Gamma_{m,h}+\frac{1}{N_0}\sum_{n=1}^{N_0}\phi(s_{m,n,h},a_{m,n,h})\phi(s_{m,n,h},a_{m,n,h})^\trans.
\end{equation*}
\STATE \textbf{return} $\pi_{cov}$ defined as the uniform mixture of $\pi_1,\pi_2,\ldots,\pi_{M_0}$ and $\hat \Sigma_h^{\text{cov}}$ defined as $\frac{1}{M_0}\Gamma_{M_0+1,h}$ (for all $h$).
\ENDFOR
\end{algorithmic}
\end{algorithm}

\subsection{Stochastic Matrix Concentration}
We first state the well-known Matrix Azuma inequality.
\begin{lemma}[Matrix Azuma; see {\citep[Theorem 7.1]{tropp2012user}}]\label{lem:matrix azuma}
Let $\{X_k\}_{k=1}^n$ be a adapted sequence of $d\times d$ self-adjoint matrices. Let $\{A_k\}_{k=1}^n$ be a fixed sequence of self-adjoint matrices such that
\begin{equation*}
\E\nolimits_k[X_k]=0,\quad X_k^2\preceq A_k^2\text{ almost surely}.
\end{equation*}

Let $\sigma^2 =\lVert \frac 1n\sum_{k=1}^n A_k^2\rVert_2$. Then for all $\epsilon>0$:
\begin{equation*}
\Pr\left \{\left \lVert \frac 1n \sum_{k=1}^n X_k \right \rVert_2 \ge \epsilon\right \}\le d\exp\left (-\frac{n\epsilon^2}{8\sigma^2}\right ).
\end{equation*}
\end{lemma}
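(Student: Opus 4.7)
The plan is to follow Tropp's matrix Laplace transform method, the template used to prove Theorem 7.1 of \citet{tropp2012user}. First, because each $X_k$ is self-adjoint, the operator norm satisfies $\bigl\lVert\tfrac 1n\sum_k X_k\bigr\rVert_2 = \max\{\lambda_{\max}(\tfrac 1n\sum_k X_k),\,\lambda_{\max}(-\tfrac 1n\sum_k X_k)\}$, and the sequence $\{-X_k\}$ obeys exactly the same hypotheses (the constraint is on $X_k^2$). A union bound therefore reduces the task to a one-sided estimate $\Pr[\lambda_{\max}(S_n)\ge n\epsilon]$, where $S_n := \sum_{k=1}^n X_k$, at the cost of only a constant factor.

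Second, I would apply the matrix Chernoff inequality: for any $\theta>0$, using $\exp(\lambda_{\max}(\theta S_n)) = \lambda_{\max}(\exp(\theta S_n)) \le \tr\exp(\theta S_n)$,
\begin{equation*}
\Pr[\lambda_{\max}(S_n)\ge n\epsilon] \le e^{-\theta n\epsilon}\,\E[\tr\exp(\theta S_n)].
\end{equation*}
The heart of the argument is then to bound the expected trace exponential via Tropp's subadditivity of matrix cumulant generating functions: whenever $\log\E_{k-1}[e^{\theta X_k}] \preceq g_k(\theta)$ almost surely for deterministic self-adjoint $g_k(\theta)$, one has
\begin{equation*}
\E[\tr\exp(\theta S_n)] \le d\cdot \exp\Bigl(\lambda_{\max}\Bigl(\textstyle\sum_k g_k(\theta)\Bigr)\Bigr).
\end{equation*}
This subadditivity is obtained by iterating conditional expectations through the filtration and invoking Lieb's concavity theorem for $A\mapsto \tr\exp(H+\log A)$, which I would cite as a black box.

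Third, I would establish the per-step matrix Hoeffding MGF bound: under $\E_{k-1}[X_k] = 0$ and $X_k^2 \preceq A_k^2$ almost surely,
\begin{equation*}
\log \E_{k-1}[e^{\theta X_k}] \preceq \tfrac{\theta^2}{2}\,A_k^2, \quad \forall\theta\in\mathbb{R}.
\end{equation*}
The idea is a matrix analogue of the scalar Hoeffding proof: interpolate $e^{\theta x}$ between its values at the two extremes permitted by the variance envelope and use $\cosh(x)\le e^{x^2/2}$, combined with operator monotonicity of $\log$. Plugging $g_k(\theta) = \tfrac{\theta^2}{2}A_k^2$ into the subadditivity bound and noting $\lVert\sum_k A_k^2\rVert_2 = n\sigma^2$ yields $\E[\tr\exp(\theta S_n)] \le d\cdot\exp(\tfrac{\theta^2 n\sigma^2}{2})$, and then optimizing $\theta = \epsilon/\sigma^2$ in the Chernoff step gives $d\cdot e^{-n\epsilon^2/(2\sigma^2)}$. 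The looser constant $8$ stated in the lemma absorbs the two-sidedness factor and a slacker per-step MGF bookkeeping used in Tropp's original presentation.

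The main obstacle is Step 3, the matrix Hoeffding MGF bound, because the scalar proof via convexity and $\cosh$ does not transfer verbatim: $X_k$ and $A_k^2$ need not commute, so $e^{\theta X_k}$ cannot be factored through $X_k^2$. Tropp resolves this via a semidefinite comparison argument that mimics the scalar symmetrization in operator form. Everything else --- the reduction, the Chernoff step, and the subadditivity via Lieb --- is the by-now well-oiled matrix Chernoff pipeline, so modulo citing Lieb's theorem the proof is routine once Step 3 is in place.
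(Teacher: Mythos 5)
First, a point of context: the paper does not prove \Cref{lem:matrix azuma} at all --- it is imported verbatim as Theorem 7.1 of \citet{tropp2012user} --- so there is no in-paper argument to compare against, and your proposal is really a reconstruction of Tropp's proof. Structurally your outline is the right one and matches the cited source: reduce the operator-norm event to two one-sided $\lambda_{\max}$ events, run the matrix Laplace-transform/Chernoff step via $\exp(\lambda_{\max}(\theta S_n))\le\tr\exp(\theta S_n)$, and control $\E[\tr\exp(\theta S_n)]$ by iterating conditional expectations and invoking the subadditivity of matrix cumulant generating functions, which rests on Lieb's concavity theorem.

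The genuine gap is your Step 3. The per-step bound $\log\E_{k-1}[e^{\theta X_k}]\preceq\tfrac{\theta^2}{2}A_k^2$ with the scalar Hoeffding constant is not what Tropp establishes in the non-commuting case, and the derivation you sketch (chord interpolation of $e^{\theta x}$ plus $\cosh(x)\le e^{x^2/2}$ and operator monotonicity of $\log$) only goes through when $A_k$ commutes with $X_k$ (e.g.\ $A_k=a_kI$), because the transfer rule lets you dominate $e^{\theta X_k}$ by a function of $X_k$ alone, not by an expression mixing $X_k$ with a non-commuting $A_k$. In the general adapted setting Tropp's semidefinite comparison yields only the weaker conditional CGF bound $\log\E_{k-1}[e^{\theta X_k}]\preceq 2\theta^2 A_k^2$, and optimizing $-\theta t+2\theta^2\lVert\sum_k A_k^2\rVert_2$ in the Chernoff step is precisely where the constant $8$ in the exponent comes from; this is why Tropp remarks the constant improves to $2$ only under commutativity. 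Your accounting is therefore off: the two-sided reduction costs only a doubling of the dimensional prefactor ($d\to 2d$), which cannot be uniformly absorbed into the exponent for all $\epsilon>0$, and it is not the source of the $8$. Since you defer the one step that carries all the difficulty to Tropp anyway --- and state it with a constant he does not prove --- the sketch as written does not deliver the lemma self-containedly; either prove the per-step CGF bound with its correct (weaker) constant and redo the optimization, or simply cite Theorem 7.1 as the paper does.
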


Then, we state the following lemma, which we use to replace the MGR procedure in Linear MDPs.
\begin{lemma}\label{lem:new matrix concentration lemma}
Let $H_1,H_2,\ldots,H_n$ be i.i.d. PSD matrices s.t. $\E[H_i]=H$, $H_i\preceq I$ a.s., and $H\succeq \frac{1}{dn}\log \frac d\delta I$, then with probability $1-2\delta$,
\begin{equation*}
-\sqrt{\frac dn\log \frac d\delta}H^{1/2}\preceq \frac 1n\sum_{i=1}^n H_i-H\preceq \sqrt{\frac dn\log \frac d\delta}H^{1/2}.
\end{equation*}
\end{lemma}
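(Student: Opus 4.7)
The target is a two-sided Loewner bound whose right-hand side scales with $H^{1/2}$, which is more refined than what a vanilla matrix-concentration inequality yields directly. My plan is to reduce it to a standard spectral-norm bound by symmetric conjugation, and then apply matrix Bernstein to a well-chosen centered sequence.

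Since the hypothesis $H \succeq \frac{\log(d/\delta)}{dn} I$ makes $H$ strictly positive-definite, the matrices $H^{\pm 1/4}$ are well-defined. Multiplying the target inequality on both sides by $H^{-1/4}$ turns it into the equivalent spectral-norm statement $\big\lVert \tfrac{1}{n}\sum_i Y_i \big\rVert_2 \le \sqrt{d\log(d/\delta)/n}$, where $Y_i := H^{-1/4}(H_i - H)H^{-1/4}$ is a zero-mean, i.i.d., self-adjoint random sequence. The original two-sided Loewner bound then follows by applying the one-sided spectral-norm bound to $\{Y_i\}$ and to $\{-Y_i\}$ separately and union-bounding, which accounts for the factor $2$ in the failure probability.

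To invoke matrix Bernstein I need an almost-sure operator-norm bound on $Y_i$ and a matrix-variance proxy for $\sum_i \E[Y_i^2]$. From $0 \preceq H_i \preceq I$ and $H \preceq I$, one gets $\lVert Y_i \rVert_2 \le \lVert H^{-1/2} \rVert_2 + \lVert H^{1/2} \rVert_2 \le 2\lambda_{\min}(H)^{-1/2}$. For the variance, expanding $\E[Y_i^2]$ and using the chain $H_i H^{-1/2} H_i \preceq \lVert H^{-1/2} \rVert_2 H_i^2 \preceq \lambda_{\min}(H)^{-1/2} H_i$ (where the last step uses $H_i^2 \preceq H_i$, itself a consequence of $H_i \preceq I$) gives the key estimate $\E[Y_i^2] \preceq \lambda_{\min}(H)^{-1/2} H^{1/2}$, so the variance proxy has operator norm at most $n\lambda_{\min}(H)^{-1/2}$. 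Plugging these into matrix Bernstein and substituting the assumption $\lambda_{\min}(H)^{-1/2} \le \sqrt{dn/\log(d/\delta)}$ into both the sub-Gaussian and the sub-exponential terms reduces everything to an absolute constant multiple of $\sqrt{d\log(d/\delta)/n}$ after dividing by $n$, matching the claim.

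The critical obstacle---and the reason the exponent in the conjugation is $1/4$ rather than something else---is balancing the variance proxy against the $H^{1/2}$ scaling on the right-hand side of the target. Conjugating by $H^{-1/2}$ instead of $H^{-1/4}$ would inflate both the range and the variance by $\lambda_{\min}(H)^{-1}$ instead of $\lambda_{\min}(H)^{-1/2}$, which the assumption cannot absorb; conversely, no conjugation at all loses a full factor of $\sqrt{\kappa(H)}$ upon converting an $I$-scaled operator-norm bound back to an $H^{1/2}$-scaled Loewner bound. The $1/4$ exponent is the unique choice producing a variance proxy that is proportional to $H^{1/2}$, and the lower-bound hypothesis $\lambda_{\min}(H) \ge \log(d/\delta)/(dn)$ is calibrated so that the $\sqrt{d}$ factor produced by substituting $\lambda_{\min}(H)^{-1/2}$ combines cleanly with the standard Bernstein $\sqrt{\log(d/\delta)/n}$ into the claimed $\sqrt{d\log(d/\delta)/n}$.
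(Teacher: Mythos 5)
Your reduction is correct: conjugating the target by $H^{-1/4}$ turns it into the equivalent spectral-norm statement $\lVert \tfrac{1}{n}\sum_i Y_i\rVert_2 \le \sqrt{d\log(d/\delta)/n}$ for $Y_i = H^{-1/4}(H_i-H)H^{-1/4}$, and the a.s.\ range bound $\lVert Y_i\rVert_2 \le 2\lambda_{\min}(H)^{-1/2}$ is fine. The gap is in the variance estimate. From $\E[Y_i^2]\preceq \lambda_{\min}(H)^{-1/2}H^{1/2}$ you take $\sigma^2 = \lVert\sum_i\E[Y_i^2]\rVert_2 \le n\lambda_{\min}(H)^{-1/2}$. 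At the threshold $\lambda_{\min}(H)=\tfrac{1}{dn}\log\tfrac{d}{\delta}$ this is $\sigma^2 \approx n\sqrt{dn/\log(d/\delta)}$, which grows like $n^{3/2}$. Matrix Bernstein with $t = C\sqrt{nd\log(d/\delta)}$ requires $\sigma^2 + Lt/3 \lesssim t^2/\log(d/\delta) = C^2 nd$, which is linear in $n$, so the sub-Gaussian term fails once $n \gtrsim d\log(d/\delta)$. The sub-exponential term has the same problem: $Lt/3 \le \tfrac{2}{3}\lambda_{\min}(H)^{-1/2}\sqrt{nd\log(d/\delta)} \le \tfrac{2C}{3}nd$, which is a constant fraction of $C^2 nd$, leaving no slack for $\sigma^2$ when $C$ is an absolute constant. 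In short, nothing about your bound ``reduces to an absolute constant multiple of $\sqrt{d\log(d/\delta)/n}$'' — the inequality you need to close fails outright in the regime $n \gg d\log(d/\delta)$, which is the regime of interest.

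The fix, and what the paper's Golden--Thompson argument is secretly doing, is to control the \emph{trace} of the variance rather than push a Loewner bound through to operator norm. Using the Araki--Lieb--Thirring inequality $\tr((ABA)^2)\le\tr(A^2B^2A^2)$ with $A = H^{-1/4}$, $B=H_i$, together with $H_i^2\preceq H_i$ (from $H_i\preceq I$) and $\E[H_i]=H$, one gets
$\tr\bigl(\E[(H^{-1/4}H_iH^{-1/4})^2]\bigr) \le \E[\tr(H^{-1/2}H_i^2H^{-1/2})] \le \E[\tr(H^{-1/2}H_iH^{-1/2})] = \tr(I) = d$.
Since $\E[Y_i^2]\preceq\E[(H^{-1/4}H_iH^{-1/4})^2]$ and the latter is PSD, its operator norm is dominated by its trace, so $\lVert\E[Y_i^2]\rVert_2\le d$ \emph{independently} of $\lambda_{\min}(H)$. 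With $\sigma^2 \le nd$ (and the asymmetric range bounds $\lambda_{\max}(Y_i)\le\lambda_{\min}(H)^{-1/2}$, $-\lambda_{\min}(Y_i)\le 1$), matrix Bernstein does close the argument up to a constant of roughly $2$, which matches what the paper's proof actually produces. Your chain $H_iH^{-1/2}H_i \preceq \lVert H^{-1/2}\rVert_2 H_i^2$ is correct but discards precisely the factor of $\lambda_{\min}(H)^{-1/2}$ that the trace computation avoids, and that loss is exactly what breaks your Bernstein step.
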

\begin{proof}
Let $G=\sqrt{\frac{1}{dn}\log \frac d\delta}H^{-1/2}$.
We first show that 
\begin{align*}
-\frac{2\log \frac{d}{\delta}}{n}G^{-1}\preceq\frac{1}{n}\sum_{i=1}^n H_i-H 
\end{align*}
holds with probability $1-\delta$. We have
\begin{align*}
\Pr\left\{-\frac{2\log(\frac{d}{\delta})}{n}G^{-1}\not\preceq\frac{1}{n}\sum_{i=1}^n H_i-H \right\}
=\Pr\left\{\sum_{i=1}^nG^{1/2}\left(H-H_i\right)G^{1/2}-\log(\frac{d}{\delta})I\not\preceq \log(\frac{d}{\delta})I\right\}.
\end{align*}
Since $\log$ is operator monotone, we have for PSD matrices $A,B$: $\exp(A)\preceq \exp(B) \,\Rightarrow\log(\exp(A))\preceq \log(\exp(B))$ (note that the reverse does not generally hold). We have
$\Pr\{\exp(A)\preceq \exp(B)\} \leq \Pr\{A\preceq B\}$ and hence $\Pr\{A\not\preceq B\}\leq \Pr\{\exp(A)\not\preceq \exp(B)\}$. Hence
\begin{align*}
    &\Pr\left\{\sum_{i=1}^nG^{1/2}\left(H-H_i\right)G^{1/2}-\log(\frac{d}{\delta})I\not\preceq \log(\frac{d}{\delta})I\right\}\\
    &\leq\Pr\left\{\exp\left(\sum_{i=1}^nG^{1/2}\left(H-H_i\right)G^{1/2}-\log(\frac{d}{\delta})I\right)\not\preceq \frac{d}{\delta}I\right\}\\
    &\leq\Pr\left\{\tr\left(\exp\left(\sum_{i=1}^nG^{1/2}\left(H-H_i\right)G^{1/2}-\log(\frac{d}{\delta})I\right)\right)>\frac{d}{\delta}\right\}\\
    &\leq \frac{\E\left[\tr\left(\exp\left(\sum_{i=1}^nG^{1/2}\left(H-H_i\right)G^{1/2}-\log(\frac{d}{\delta})I\right)\right)\right]}{d}\delta\tag{Markov's inequality}
\end{align*}
Using the Golden-Thompson inequality, we have
\begin{align*}
    &\E\left[\tr\left(\exp\left(\sum_{i=1}^nG^{1/2}\left(H-H_i\right)G^{1/2}-\log(\frac{d}{\delta})I\right)\right)\right]\\
    &\leq \E\left[ \tr\left(\exp\left(\sum_{i=1}^{n-1}G^{1/2}\left(H-H_i\right)G^{1/2}-\frac{\log(\frac{d}{\delta})}{n}I\right)\exp(G^{1/2}\left(H-H_n\right)G^{1/2}-\frac{\log(\frac{d}{\delta})}{n}I)\right)\right]\\
    &= \tr\left(\E\left[\exp\left(\sum_{i=1}^{n-1}G^{1/2}\left(H-H_i\right)G^{1/2}-\frac{\log(\frac{d}{\delta})}{n}I\right)\right]\E\left[\exp(G^{1/2}\left(H-H_n\right)G^{1/2}\right]\exp\left(-\frac{\log(\frac{d}{\delta})}{n}I)\right)\right)
\end{align*}
We have due to $G^{1/2}(H-H_n)G^{1/2}\preceq I$:\,
\begin{align*}
\E\left[\exp(G^{1/2}\left(H-H_n\right)G^{1/2})\right]
\preceq \E[I + (G^{1/2}\left(H-H_n\right)G^{1/2})+(G^{1/2}\left(H-H_n\right)G^{1/2})^2]
\preceq I+\E\left [(G^{1/2}H_nG^{1/2})^2\right ]\,.
\end{align*}
By the Araki–Lieb–Thirring inequality, we have $\tr((ABA)^2)\leq \tr(A^2B^2A^2)$, hence
\begin{align*}
\tr\left(\E[(G^{1/2}H_nG^{1/2})^2]\right)
&\leq \E[\tr(GH_n^2G)]\leq \E[\tr(GH_nG)]\tag{$H_n\preceq I$}\\
&=\tr(GHG) \leq \frac{\log(\frac{d}{\delta})}{n}.
\end{align*}
Combining this with the previous result yields
\begin{align*}
    \E\left[\exp(G^{1/2}\left(H-H_n\right)G^{1/2})\right]\exp\left(-\frac{\log(\frac{d}{\delta})}{n}I)\right)\preceq \left (1+\frac{\log(\frac{d}{\delta})}{n}\right )I\exp\left(-\frac{\log(\frac{d}{\delta})}{n}I\right)\preceq I\,.
\end{align*}
Applying this recursively yields
\begin{align*}
\E\left[\tr\left(\exp\left(\sum_{i=1}^nG^{1/2}\left(H-H_i\right)G^{1/2}-\log(\frac{d}{\delta})I\right)\right)\right]\leq \tr\left(I\right) =d \,,
\end{align*}
which concludes the proof of the claim. By symmetry and union bound, we have the following with probability $1-2\delta$:
\begin{equation*}
-\frac{2\log \frac{d}{\delta}}{n}G^{-1}\preceq\frac{1}{n}\sum_{i=1}^n H_i-H \preceq \frac{2\log \frac{d}{\delta}}{n}G^{-1}.
\end{equation*}

This then directly simplifies to our conclusion given $H\succeq \frac{1}{dn}\log \frac d\delta I$.
\end{proof}

\newpage
\section{Omitted Proofs in \Cref{sec:linear-q log-barrier} (Linear-Q Algorithm Using Log-Barrier Regularizers)}
\label{sec:appendix linear-q log-barrier}
%!TEX root=main.tex

\subsection{Property of FTRL with Log-Barrier Regularizers}\label{sec:log-barrier lemma proof}
\begin{proof}[Proof of \Cref{lem:log-barrier regret bound}]
% Fixing $\epsilon=\frac{1}{2LT}$. Define $y^\epsilon$ as $y^\epsilon_i=(1-\epsilon) y_i+\frac \epsilon A$.
We first introduce the notation of Bregman divergences $D_\Psi(y,x)=\Psi(y)-\Psi(x)-\langle \nabla \Psi(x),y-x\rangle$, which is heavily used in FTRL analyses. 
By standard FTRL analysis (see, e.g., Theorem 28.5 by \citet{lattimore2020bandit}), we know
\begin{align}
\sum_{t=1}^T \langle x_t-y,c_t\rangle\le \frac{\Psi(y)-\Psi(x_1)}{\eta}+\sum_{t=1}^T \left (\langle x_t-x_{t+1},c_t\rangle-\eta^{-1}D_\Psi(x_{t+1},x_t)\right ) .\label{eq:standard OMD analysis}
\end{align}

Consider $D_\Psi(x_{t+1},x_t)$. By definition of $D_\Psi$ and our choice that $\Psi(p)=\sum_{i=1}^A \ln \frac{1}{p_i}$, we have
\begin{align*}
D_\Psi(x_{t+1},x_t)&=\Psi(x_{t+1})-\Psi(x_t)-\langle \nabla \Psi(x_t),x_{t+1}-x_t\rangle\\
&=\sum_{i=1}^A \left (\ln \frac{x_t}{x_{t+1}}+\frac{x_{t+1}-x_t}{x_t}\right ).
\end{align*}

Observe that the following holds for all $x\in (0,1)$ and $x+\Delta\in (0,1)$:
\begin{equation*}
\ln \frac{x}{x+\Delta}+\frac \Delta x=\int_0^\Delta \frac{\alpha}{x(x+\alpha)}\mathrm{d}\alpha\ge \int_0^\Delta \frac{\alpha}{x}\mathrm{d}\alpha=\frac{\Delta^2}{2x},
\end{equation*}

For all $i$, we apply the inequality about with $x=x_{t,i}$ and $\Delta=x_{t+1,i}-x_{t,i}$  (as $x_t$ and $x_{t+1}$ both belong to the simplex, the conditions $x\in (0,1)$ and $(x+\Delta)\in (0,1)$ indeed hold). We then get the following lower bound on $D_\Psi(x_{t+1},x_t)$:
\begin{equation}\label{eq:key property of log-barrier}
D_\Psi(x_{t+1},x_t)\ge \sum_{i=1}^A \frac{(x_{t+1}-x_t)^2}{2x_t}.
\end{equation}

We further have
\begin{align*}
\langle x_t-x_{t+1},c_t\rangle-\eta^{-1} D_\Psi(x_{t+1},x_t)
\le \sum_{i=1}^A \left ((x_{t,i}-x_{t+1,i})c_{t,i}-\frac{(x_{t+1}-x_t)^2}{2x_t}\right )
\le \sum_{i=1}^A \frac 12 x_{t,i} \eta c_{t,i}^2,
\end{align*}
where the last step uses AM-GM inequality $-a^2+2ab\le b^2$. Plugging this back to \Cref{eq:standard OMD analysis} gives our conclusion.
\end{proof}

\subsection{Proof of Main Theorem} \label{sec:appendix linear-q log-barrier proof}
\begin{proof}[Proof of \Cref{thm:linear-q log-barrier main theorem}]
As sketched in the main text, we consider the following expression in order to apply the dilated bonus lemma (\Cref{lem:dilated bonus}):
\begin{align}
&\quad \sum_{k=1}^K\sum_{h=1}^H\E_{s_h\sim \pi^\ast}\left [\sum_{a_h\in \mA}(\pi_k(a_h\mid s_h)-\pi^\ast(a_h\mid s_h))(Q_k^{\pi_k}(s_h,a_h)-B_k(s_h,a_h))\right ] \nonumber\\
&=\underbrace{\sum_{k=1}^K\sum_{h=1}^H \E_{s_h\sim \pi^\ast} \left [\E_{a_h\sim \pi_k(\cdot \mid s_h)}\left [Q_k^{\pi_k}(s_h,a_h)-\hat Q_k(s_h,a_h)\right ]\right ]}_{\textsc{Bias-1}}+ \nonumber\\
&\quad \underbrace{\sum_{k=1}^K\sum_{h=1}^H \E_{s_h\sim \pi^\ast} \left [\E_{a_h\sim \pi^\ast(\cdot \mid s_h)}\left [\hat Q_k(s_h,a_h)-Q_k^{\pi_k}(s_h,a_h)\right ]\right ]}_{\textsc{Bias-2}}+ \nonumber\\
&\quad \underbrace{\sum_{k=1}^K\sum_{h=1}^H \E_{s_h\sim \pi^\ast} \left [\left \langle \pi_k(\cdot\mid s_h)-\pi^\ast(\cdot\mid s_h),\hat Q_k(s_h,\cdot)-B_k(s_h,\cdot) \right \rangle\right ]}_{\textsc{Reg-Term}}.\label{eq:regret decomposition}
\end{align}

According to \Cref{lem:Bias-1 of log-barrier}, we know that
\begin{align*}
\E[\textsc{Bias-1}]+\E[\textsc{Bias-2}]
&\le\frac \beta 4\E\left [\sum_{k=1}^K \sum_{h=1}^{H} \E_{s_h\sim \pi^\ast} \left [\E_{a_h\sim \pi_k(\cdot \mid s_h)}[\lVert \phi(s_h,a_h)\rVert_{\hat \Sigma_{k,h}^\dagger}^2] \right ]\right ]+\\
&\quad \frac \beta 4\E\left [\sum_{k=1}^K \sum_{h=1}^{H} \E_{s_h\sim \pi^\ast} \left [\E_{a_h\sim \pi^\ast(\cdot \mid s_h)}[\lVert \phi(s_h,a_h)\rVert_{\hat \Sigma_{k,h}^\dagger}^2] \right ]\right ]+ \O\left (\frac \gamma \beta dH^3 K+\epsilon(H+\beta)H K\right ).
\end{align*}

Moreover, by \Cref{lem:Reg-Term of log-barrier}, we can see that
\begin{align*}
\E[\textsc{Reg-Term}]
&\le H\eta^{-1}A\ln K+\O\left (AH^2\left (\epsilon+\sqrt d+\frac \beta \gamma\right )\right )+\\
&\quad 2\eta H^2 \sum_{k=1}^K \sum_{h=1}^H \E_{s_h\sim \pi^\ast}\left [\E_{a_h\sim \pi_k(\cdot \mid s_h)}\left [\lVert \phi(s_h,a_h)\rVert_{\hat \Sigma_{k,h}^\dagger}\right ]\right ]+\O\left (\frac{\eta H^3}{\gamma^2 K^2}\right )+\\
&\quad \frac 1H\sum_{k=1}^K \sum_{h=1}^H \E_{s_h\sim \pi^\ast}\left [\E_{a_h\sim \pi_k(\cdot \mid s_h)}\left [B_k(s_h,a_h)\right ]\right ].
\end{align*}

Plugging back into \Cref{eq:regret decomposition}, we know that
\begin{align*}
&\quad \sum_{k=1}^K\sum_{h=1}^H\E_{s_h\sim \pi^\ast}\left [\sum_{a_h\in \mA}(\pi_k(a_h\mid s_h)-\pi^\ast(a_h\mid s_h))(Q_k^{\pi_k}(s_h,a_h)-B_k(s_h,a_h))\right ]\\
&\le\frac \beta 4\E\left [\sum_{k=1}^K \sum_{h=1}^{H} \E_{s_h\sim \pi^\ast} \left [\E_{a_h\sim \pi_k(\cdot \mid s_h)}[\lVert \phi(s_h,a_h)\rVert_{\hat \Sigma_{k,h}^\dagger}^2] \right ]\right ]+\\
&\quad \frac \beta 4\E\left [\sum_{k=1}^K \sum_{h=1}^{H} \E_{s_h\sim \pi^\ast} \left [\E_{a_h\sim \pi^\ast(\cdot \mid s_h)}[\lVert \phi(s_h,a_h)\rVert_{\hat \Sigma_{k,h}^\dagger}^2] \right ]\right ]+\\
&\quad 2\eta H^2 \sum_{k=1}^K \sum_{h=1}^H \E_{s_h\sim \pi^\ast}\left [\E_{a_h\sim \pi_k(\cdot \mid s_h)}\left [\lVert \phi(s_h,a_h)\rVert_{\hat \Sigma_{k,h}^\dagger}^2\right ]\right ]+\\
&\quad \frac 1H\sum_{k=1}^K \sum_{h=1}^H \E_{s_h\sim \pi^\ast}\left [\E_{a_h\sim \pi_k(\cdot \mid s_h)}\left [B_k(s_h,a_h)\right ]\right ]+\\
&\quad \Otil \left (\frac \gamma \beta dH^3K+\epsilon (H+\beta)HK+\frac H\eta A+AH^2 \left (\epsilon +\sqrt d+\frac \beta \gamma\right )+\frac{\eta H^3}{\gamma^2K^2}\right ).
\end{align*}

Using $(6\eta H^2+\frac \beta 4)\le \beta$, we apply \Cref{lem:dilated bonus} with $b_k(s,a)=\lVert \phi(s,a)\rVert_{\hat \Sigma_{k,h}^\dagger}^2+\E_{a'\sim \pi_k(\cdot \mid s)}[\lVert \phi(s,a')\rVert_{\hat \Sigma_{k,h}^\dagger}^2]$, giving
\begin{align*}
\mathcal R_K&\le \beta \E\left [\sum_{k=1}^K \sum_{h=1}^{H} \E_{s_h\sim \pi_k} \left [\E_{a_h\sim \pi_k(\cdot \mid s_h)}[\lVert \phi(s_h,a_h)\rVert_{\hat \Sigma_{k,h}^\dagger}^2] \right ]\right ]+\\
&\quad \Otil \left (\frac \gamma \beta dH^3K+\epsilon (H+\beta)HK+\frac H\eta A+AH^2 \left (\epsilon +\sqrt d+\frac \beta \gamma\right )+\frac{\eta H^3}{\gamma^2K^2}\right ).
\end{align*}

Fixing an episode $k\in [K]$ and $h\in [H]$, we have
\begin{align}
&\quad \E_{s_h\sim \pi_k}\left [\sum_{a} \pi_k(a\mid s) \lVert \phi(s,a)\rVert_{\hat \Sigma_{k,h}^\dagger}^2\right ]\nonumber\\
&\le \E_{s_h\sim \pi_k}\left [\sum_{a} \pi_k(a\mid s) \lVert \phi(s,a)\rVert_{(\gamma I+\Sigma_h^{\pi_k})^{-1}}^2\right ]+\O(\epsilon)\nonumber\\
&\le\E_{s_h\sim \pi_k}\left [\sum_{a} \pi_k(a\mid s) \lVert \phi(s,a)\rVert_{(\Sigma_h^{\pi_k})^{-1}}^2\right ]+\O(\epsilon)\nonumber\\
&=\left \langle (\Sigma_h^{\pi_k})^{-1},\E_{(s_h,a_h)\sim \pi_k}[\phi(s_h,a_h) \phi(s_h,a_h)^\trans] \right \rangle=d.\label{eq:sum of bonus}
\end{align}

Therefore, we can conclude the following if $12\eta \beta H^2\le \gamma$ and $8\eta H^2\le \beta$:
\begin{equation*}
\mathcal R_K=\Otil \left (\beta dHK+\frac \gamma \beta dH^3K+\epsilon (H+\beta)HK+\frac H\eta A+AH^2 \left (\epsilon +\sqrt d+\frac \beta \gamma\right )+\frac{\eta H^3}{\gamma^2K^2}\right ).
\end{equation*}

It remains to tune the parameters. We first pick $\epsilon=\frac{1}{H^2K}$, which makes all terms related to $\epsilon$ constantly-bounded. Setting $\eta\approx K^{-1/2}$ and $\gamma\approx K^{-1}$, the last term is also $o(1)$. Hence, removing all constantly-bounded terms give
\begin{equation*}
\mathcal R_K=\Otil \left (AH\frac 1\eta+\beta dHK+\frac \gamma \beta dH^3K+AH^2 \frac \beta \gamma\right ).
\end{equation*}

We then get $\mathcal R_K=\Otil(\sqrt{AdH^6K})=\Otil(A^{1/2}d^{1/2}H^3K^{1/2})$ by picking:
\begin{equation*}
\eta=\sqrt{\frac{A}{dH^4K}},\beta=8\sqrt{\frac{A}{dK}},\gamma=96\frac{A}{dK}.
\end{equation*}

It's straightforward to verify that they satisfy $12\eta \beta H^2\le \gamma$ and $8\eta H^2\le \beta$.
\end{proof}

\subsection{Bounding \textsc{Bias-1} and \textsc{Bias-2}}
\begin{lemma}\label{lem:Bias-1 of log-barrier}
In \Cref{alg:linear-q with log-barrier}, we have
\begin{align*}
\E[\textsc{Bias-1}]+\E[\textsc{Bias-2}]
&\le\frac \beta 4\E\left [\sum_{k=1}^K \sum_{h=1}^{H} \E_{s_h\sim \pi^\ast} \left [\E_{a_h\sim \pi_k(\cdot \mid s_h)}[\lVert \phi(s_h,a_h)\rVert_{\hat \Sigma_{k,h}^\dagger}^2] \right ]\right ]+\\
&\quad \frac \beta 4\E\left [\sum_{k=1}^K \sum_{h=1}^{H} \E_{s_h\sim \pi^\ast} \left [\E_{a_h\sim \pi^\ast(\cdot \mid s_h)}[\lVert \phi(s_h,a_h)\rVert_{\hat \Sigma_{k,h}^\dagger}^2] \right ]\right ]+ \O\left (\frac \gamma \beta dH^3 K+\epsilon(H+\beta)H K\right ).
\end{align*}
\end{lemma}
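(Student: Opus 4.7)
The plan is to handle $\textsc{Bias-1}$ and $\textsc{Bias-2}$ in parallel, since both reduce to the same conditional-bias formula for $\hat{Q}_k$. The key structural fact I would exploit is that the MGR procedure draws fresh simulator samples, so $\hat{\Sigma}_{k,h}^\dagger$ is independent of $\pi_k$, of the episode-$k$ trajectory $(s_{k,h},a_{k,h})$, and of the $\pi^\ast$-sampled state $s_h$. This independence lets the conditional expectation of $\hat{Q}_k$ factor cleanly, reducing the lemma to a matrix-algebra computation plus one AM--GM step.

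The first step is to condition on $\pi_k$ and the query $(s,a)$ and compute
\[
\E[\hat{Q}_k(s,a) \mid \pi_k,s,a] = \phi(s,a)^\trans \E[\hat{\Sigma}_{k,h}^\dagger \mid \pi_k]\,\Sigma_h^{\pi_k}\,\theta_{k,h}^{\pi_k},
\]
using $\E[L_{k,h}\mid s_{k,h},a_{k,h},\pi_k]=\phi(s_{k,h},a_{k,h})^\trans \theta_{k,h}^{\pi_k}$ and $\E[\phi(s_{k,h},a_{k,h})\phi(s_{k,h},a_{k,h})^\trans\mid \pi_k]=\Sigma_h^{\pi_k}$. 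I would then replace $\E[\hat{\Sigma}_{k,h}^\dagger\mid\pi_k]$ by $(\gamma I+\Sigma_h^{\pi_k})^{-1}$ at an $\O(\epsilon)$ cost (bounded using $\|\phi\|_2\le 1$, $\|\Sigma_h^{\pi_k}\|_2\le 1$, and $\|\theta_{k,h}^{\pi_k}\|_2\le\sqrt{d}H$), and use the algebraic identity $I-(\gamma I+\Sigma_h^{\pi_k})^{-1}\Sigma_h^{\pi_k}=\gamma(\gamma I+\Sigma_h^{\pi_k})^{-1}$ to reduce the leading term to $\gamma\,\phi(s,a)^\trans(\gamma I+\Sigma_h^{\pi_k})^{-1}\theta_{k,h}^{\pi_k}$. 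Cauchy--Schwarz in the $(\gamma I+\Sigma_h^{\pi_k})^{-1}$-weighted norm followed by AM--GM with constant $\beta/4$ then gives an upper bound of $\tfrac{\beta}{4}\|\phi(s,a)\|_{(\gamma I+\Sigma_h^{\pi_k})^{-1}}^2 + \tfrac{\gamma^2}{\beta}\|\theta_{k,h}^{\pi_k}\|_{(\gamma I+\Sigma_h^{\pi_k})^{-1}}^2$, in which the second summand is at most $\gamma dH^2/\beta$ because $\|\theta\|_2^2\le dH^2$ and $(\gamma I+\Sigma)^{-1}\preceq\gamma^{-1}I$. Summing over $k,h$ yields the $\O(\gamma dH^3K/\beta)$ and $\O(\epsilon HK)$ contributions in the target bound. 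The identical derivation with $a_h\sim\pi^\ast$ in place of $a_h\sim\pi_k$ handles $\textsc{Bias-2}$: the sign of the leading term flips, but the AM--GM upper bound is sign-insensitive.

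The step I expect to be the main obstacle is converting the $\tfrac{\beta}{4}\|\phi\|_{(\gamma I+\Sigma_h^{\pi_k})^{-1}}^2$ term into the form $\tfrac{\beta}{4}\E[\|\phi\|_{\hat{\Sigma}_{k,h}^\dagger}^2]$ required by the statement, because the high-probability multiplicative guarantee from MGR produces a PSD inequality in the wrong direction for a direct norm comparison. The clean fix is to invoke the \emph{expectation} identity $\E[\hat{\Sigma}_{k,h}^\dagger\mid\pi_k]=(\gamma I+\Sigma_h^{\pi_k})^{-1}+E_k$ with $\|E_k\|_2\le\epsilon$, which yields $\|\phi\|_{(\gamma I+\Sigma_h^{\pi_k})^{-1}}^2 \le \E[\|\phi\|_{\hat{\Sigma}_{k,h}^\dagger}^2\mid\pi_k]+\epsilon$; since the outer $\E_{s_h\sim\pi^\ast}[\cdot]$ already averages over the MGR randomness, this produces exactly the required target term plus an $\O(\beta\epsilon HK)$ slack that merges into the stated $\O(\epsilon(H+\beta)HK)$ error. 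Lastly, I would discard the failure event of the multiplicative guarantee (probability $\le K^{-3}$ per episode): since $|\hat{Q}_k|\le H/\gamma$ and $|Q_k^{\pi_k}|\le H$, its total contribution to $\E[\textsc{Bias-1}]+\E[\textsc{Bias-2}]$ is $\O(HK\cdot H/\gamma\cdot K^{-3})=o(1)$, which is absorbed into the stated error.
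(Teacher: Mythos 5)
Your proposal is correct and follows essentially the same route as the paper's proof: factor the expectation of $\hat Q_k$ using the independence of $\hat\Sigma_{k,h}^\dagger$ from the trajectory, approximate $\E[\hat\Sigma_{k,h}^\dagger]$ by $(\gamma I+\Sigma_h^{\pi_k})^{-1}$ at $\O(\epsilon)$ cost, apply the identity $I-(\gamma I+\Sigma)^{-1}\Sigma=\gamma(\gamma I+\Sigma)^{-1}$, Cauchy--Schwarz, AM--GM with constant $\beta/4$, and finally convert back to the $\hat\Sigma_{k,h}^\dagger$-norm via the same $\epsilon$-expectation bound. The only superfluous step is your final paragraph on the failure event of the multiplicative guarantee: this lemma never invokes the high-probability multiplicative bound, only the always-valid expectation bound $\lVert\E[\hat\Sigma_{k,h}^\dagger]-(\gamma I+\Sigma_h^{\pi_k})^{-1}\rVert_2\le\epsilon$, so no failure-event accounting is needed here.
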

\begin{proof}
As \textsc{Bias-1} has nothing to do with the choice of the regularizer, it can be bounded the same as the original algorithm \citep[Lemma D.2]{luo2021policy_nipsver}, which we also include below for completeness: fixing a specific $(k,s,a)$ and suppose that $s\in \mS_h$. Then we have the following, where every expectation is taken to the randomness in the $k$-th episode:
\begin{align*}
\E\left [Q_k^{\pi_k}(s,a)-\hat Q_k(s,a)\right ]
&=\phi(s,a)^\trans \theta_{k,h}^{\pi_k}-\phi(s,a)^\trans \E\left [\hat \Sigma_{k,h}^\dagger\phi(s_{k,h},a_{k,h})L_k\right ]\\
&=\phi(s,a)^\trans \theta_{k,h}^{\pi_k}-\phi(s,a)^\trans \E\left [\hat \Sigma_{k,h}^\dagger\right ] \E\left [\phi(s_{k,h},a_{k,h})\phi(s_{k,h},a_{k,h})^\trans \theta_{k,h}^{\pi_k}\right ]\\
&\overset{(a)}{\le }\phi(s,a)^\trans \left (I-(\gamma I+\Sigma_h^{\pi_k})^{-1}\Sigma_h^{\pi_k}\right )\theta_{k,h}^{\pi_k}+\epsilon H\\
&=\gamma \phi(s,a)^\trans (\gamma I+\Sigma_h^{\pi_k})^{-1}\theta_{k,h}^{\pi_k}+\epsilon H\\
&\overset{(b)}{\le} \gamma \lVert \phi(s,a)\rVert_{(\gamma I+\Sigma_h^{\pi_k})^{-1}}\lVert \theta_{k,h}^{\pi_k}\rVert_{(\gamma I+\Sigma_h^{\pi_k})^{-1}}+\epsilon H\\
&\overset{(c)}{\le}\frac \beta 4 \lVert \phi(s,a)\rVert_{(\gamma I+\Sigma_h^{\pi_k})^{-1}}^2+\frac{\gamma^2}{\beta}\lVert \theta_{k,h}^{\pi_k}\rVert_{(\gamma I+\Sigma_h^{\pi_k})^{-1}}^2+\epsilon H\\
&\overset{(d)}{\le} \frac \beta 4 \E\left [\lVert \phi(s,a)\rVert_{\hat \Sigma_{k,h}^\dagger}^2\right ] + \frac \gamma \beta dH^2+\epsilon H+\epsilon \frac \beta 4.
\end{align*}
where (a) used \Cref{eq:error of Sigma in variance-reduced} (which follows from \Cref{lem:MGR lemma}) and the assumption that $\lVert \phi(s,a)\rVert\le 1$ and $L_k\le H$, (b) used Cauchy-Schwartz inequality, (c) used AM-GM inequality, and (d) used the assumption that $\lVert \theta_{k,h}^{\pi_k}\rVert\le \sqrt dH$ (see \Cref{def: linear Q}) and again \Cref{eq:error of Sigma in variance-reduced}. Hence, we have
\begin{equation*}
\E[\textsc{Bias-1}]
\le\frac \beta 4\E\left [\sum_{k=1}^K \sum_{h=1}^{H} \E_{s_h\sim \pi^\ast} \left [\E_{a_h\sim \pi_k(\cdot \mid s_h)}[\lVert \phi(s_h,a_h)\rVert_{\hat \Sigma_{k,h}^\dagger}^2] \right ]\right ]+\O\left (\frac \gamma \beta dH^3 K+\epsilon(H+\beta)H K\right ).
\end{equation*}

Similarly, we have
\begin{equation*}
\E[\textsc{Bias-2}]
\le\frac \beta 4\E\left [\sum_{k=1}^K \sum_{h=1}^{H} \E_{s_h\sim \pi^\ast} \left [\E_{a_h\sim \pi^\ast(\cdot \mid s_h)}[\lVert \phi(s_h,a_h)\rVert_{\hat \Sigma_{k,h}^\dagger}^2] \right ]\right ]+\O\left (\frac \gamma \beta dH^3 K+\epsilon(H+\beta)H K\right ).
\end{equation*}

Combining these two parts together gives our conclusion.
\end{proof}

\subsection{Bounding \textsc{Reg-Term}}
\begin{lemma}\label{lem:Reg-Term of log-barrier}
Under the assumption that $12\eta \beta H^2\le \gamma$, we have the following in \Cref{alg:linear-q with log-barrier}:
\begin{align*}
\E[\textsc{Reg-Term}]
&\le H\eta^{-1}A\ln K+\O\left (AH^2\left (\epsilon+\sqrt d+\frac \beta \gamma\right )\right )+\\
&\quad 2\eta H^2 \sum_{k=1}^K \sum_{h=1}^H \E_{s_h\sim \pi^\ast}\left [\E_{a_h\sim \pi_k(\cdot \mid s_h)}\left [\lVert \phi(s_h,a_h)\rVert_{\hat \Sigma_{k,h}^\dagger}^2\right ]\right ]+\O\left (\frac{\eta H^3}{\gamma^2 K^2}\right )+\\
&\quad \frac 1H\sum_{k=1}^K \sum_{h=1}^H \E_{s_h\sim \pi^\ast}\left [\E_{a_h\sim \pi_k(\cdot \mid s_h)}\left [B_k(s_h,a_h)\right ]\right ].
\end{align*}
\end{lemma}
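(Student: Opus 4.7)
The plan is to mirror the forward decomposition used in the proof sketch of \Cref{thm:linear-q log-barrier main theorem}. The first step will be, for each fixed state $s\in\mS$, to introduce the smoothed comparator $\tilde\pi^\ast(a\mid s)=(1-AK^{-1})\pi^\ast(a\mid s)+K^{-1}$ and write
\[
\sum_{k}\big\langle \pi_k-\pi^\ast,\,\hat Q_k-B_k\big\rangle
 = \underbrace{\sum_{k}\big\langle \pi_k-\tilde\pi^\ast,\,\hat Q_k-B_k\big\rangle}_{\textsc{Smoothed-Reg}}
 + \underbrace{\sum_{k}\big\langle \tilde\pi^\ast-\pi^\ast,\,\hat Q_k-B_k\big\rangle}_{\textsc{Smooth-Bias}}.
\]
I will handle these two pieces separately, and only at the end sum over $h\in[H]$ and take expectation over $s_h\sim\pi^\ast$.

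For $\textsc{Smoothed-Reg}$, my plan is to invoke \Cref{lem:log-barrier regret bound} with losses $c_k=\hat Q_k(s,\cdot)-B_k(s,\cdot)$ and comparator $\tilde\pi^\ast(\cdot\mid s)$. Since $\pi_1(\cdot\mid s)$ is uniform and $\tilde\pi^\ast(a\mid s)\ge K^{-1}$ by construction, the log-barrier penalty is controlled: $\Psi(\tilde\pi^\ast)-\Psi(\pi_1)\le A\ln K$. This delivers a leading $\eta^{-1}A\ln K$ term plus a stability term $\eta\sum_k\sum_a\pi_k(a\mid s)(\hat Q_k(s,a)-B_k(s,a))^2$. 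Importantly, this step uses \emph{no} lower bound on $\hat Q_k-B_k$, which is precisely the advantage of log-barrier highlighted in \Cref{remark:log-barrier lemma is powerful} and is what lets us tolerate $\hat Q_k$ being as negative as $-H/\gamma$. I will then split the stability term via $(\hat Q_k-B_k)^2\le 2\hat Q_k^2+2B_k^2$. The $B_k^2$ piece can be absorbed into $\tfrac{1}{H}\pi_k B_k$: unrolling the dilated-bonus recursion in \Cref{eq: dilated bonus} with $b_k\le 2\beta/\gamma$ (using $\|\hat\Sigma_{k,h}^\dagger\|_2\le 1/\gamma$) yields $B_k\le 2eH\beta/\gamma$, so $2\eta B_k\le 1/H$ under the hypothesis $12\eta\beta H^2\le\gamma$ (since $4e<12$). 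For the $\hat Q_k^2$ piece, I will work on the high-probability event $\|(\hat\Sigma_{k,h}^\dagger)^{1/2}\Sigma_h^{\pi_k}(\hat\Sigma_{k,h}^\dagger)^{1/2}\|_2\le 2$ of \Cref{eq:multiplicative error of Sigma in variance-reduced}, equivalently $\hat\Sigma_{k,h}^\dagger\Sigma_h^{\pi_k}\hat\Sigma_{k,h}^\dagger\preceq 2\hat\Sigma_{k,h}^\dagger$. Since the MGR samples used to build $\hat\Sigma_{k,h}^\dagger$ are independent of the episode trajectory, conditioning on $\hat\Sigma_{k,h}^\dagger$ together with $L_{k,h}\le H$ yields $\E[\hat Q_k(s,a)^2\mid\hat\Sigma_{k,h}^\dagger]\le 2H^2\|\phi(s,a)\|^2_{\hat\Sigma_{k,h}^\dagger}$, which supplies the claimed $2\eta H^2\sum\pi_k\|\phi\|^2_{\hat\Sigma^\dagger}$ term. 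On the complementary bad event (probability $\le K^{-3}$) I will fall back to the deterministic bound $|\hat Q_k|\le H/\gamma$; the resulting contribution $\eta K\cdot H^2/\gamma^2$, deflated by $K^{-3}$ and summed over $h$, yields the $\O(\eta H^3/(\gamma^2K^2))$ residual.

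For $\textsc{Smooth-Bias}$, the plan is to use $\tilde\pi^\ast-\pi^\ast=K^{-1}(\mathbf 1-A\pi^\ast)$, so each summand is bounded by $2AK^{-1}\max_a(|\E[\hat Q_k(s,a)]|+\E[B_k(s,a)])$. I will bound $|\E[\hat Q_k(s,a)]|=|\phi(s,a)^\trans\E[\hat\Sigma_{k,h}^\dagger]\Sigma_h^{\pi_k}\theta_{k,h}^{\pi_k}|\le(1+\epsilon)\sqrt d\,H$ via the MGR accuracy $\|\E[\hat\Sigma_{k,h}^\dagger]-(\gamma I+\Sigma_h^{\pi_k})^{-1}\|_2\le\epsilon$ together with $\|\theta_{k,h}^{\pi_k}\|_2\le\sqrt d H$, and $\E[B_k]\le 2eH\beta/\gamma$ as before. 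Summing over $k$ cancels $K^{-1}$, and summing over $h$ produces the $\O(AH^2(\epsilon+\sqrt d+\beta/\gamma))$ term.

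I expect the main obstacle to be the $\hat Q_k^2$ stability step: one must carefully separate the good/bad events for $\hat\Sigma_{k,h}^\dagger$, verify that the failure-event contribution really does collapse to the tame $\O(\eta H^3/(\gamma^2K^2))$ term (rather than blow up as $\Omega(\eta KH^2/\gamma^2)$), and check that the $B_k^2$ absorption into $\tfrac1H\pi_k B_k$ fits within the single hypothesis $12\eta\beta H^2\le\gamma$, which already has to swallow both the factor $2e$ from the dilated-bonus geometric series and the extra factor of $2$ from the $(x-y)^2\le 2x^2+2y^2$ split.
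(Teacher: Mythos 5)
The proposal is correct and takes essentially the same approach as the paper's proof of the lemma: apply \Cref{lem:log-barrier regret bound} with the smoothed comparator $\tilde\pi^\ast$, bound the smoothing bias via the magnitude bounds $|\E[\hat Q_k]|=\O(\sqrt d H)$ and $B_k\le \O(\beta H/\gamma)$, split the stability term into $\hat Q_k^2$ and $B_k^2$ pieces, control the former using the high-probability multiplicative-error event for $\hat\Sigma_{k,h}^\dagger$ with the $H^2/\gamma^2$ fallback on its $K^{-3}$-probability complement, and absorb the $B_k^2$ piece into $\tfrac 1H\pi_k B_k$ via the hypothesis $12\eta\beta H^2\le\gamma$. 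The only cosmetic differences (the constant $2e$ versus $6$ in the bound on $B_k$, and bounding $|\E[\hat Q_k]|$ directly rather than via $\E[\hat Q_k-Q_k^{\pi_k}]$ and $Q_k^{\pi_k}\in[0,H]$) are immaterial.
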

\begin{proof}
Using \Cref{lem:log-barrier regret bound}, we get the following for all $k\in [K]$, $s\in \mS_h$ (where $h\in [H]$), and $\tilde \pi^\ast\in \Pi$:
\begin{align*}
&\quad \sum_{k=1}^K\sum_{a\in \mA} (\pi_k(a\mid s)-\pi^\ast(a\mid s))(\hat Q_k(s,a)-B_k(s,a))\\
&\le \frac{\Psi(\tilde \pi^\ast(\cdot \mid s))-\Psi(\pi_1(\cdot \mid s))}{\eta}+\\
&\quad \sum_{k=1}^K\sum_{a\in \mA} (\tilde \pi^\ast(a\mid s)-\pi^\ast(a\mid s))(\hat Q_k(s,a)-B_k(s,a))+\\
&\quad \eta \sum_{k=1}^K \sum_{a\in \mA} \pi_k(a\mid s) (\hat Q_k(s,a)-B_k(s,a))^2.
\end{align*}

By picking $\tilde \pi^\ast(a\mid s)=(1-AK^{-1})\pi^\ast(a\mid s)+K^{-1}$, the first term is bounded by
\begin{align*}
\frac{\Psi(\tilde \pi^\ast(\cdot \mid s))-\Psi(\pi_1(\cdot \mid s))}{\eta}
&=\frac 1\eta\sum_{a\in \mA} \ln \frac{\pi_1(a\mid s)}{\tilde \pi^\ast(a\mid s)}
\le \frac 1\eta\sum_{a\in \mA} \ln \frac{\pi_1(a\mid s)}{K^{-1}}
\le \eta^{-1} A\ln K.
\end{align*}

Meanwhile, the second term is bounded by
\begin{align*}
&\quad \sum_{k=1}^K\sum_{a\in \mA} (\tilde \pi^\ast(a\mid s)-\pi^\ast(a\mid s))(\hat Q_k(s,a)-B_k(s,a))\\
&=\sum_{k=1}^K \sum_{a\in \mA} (-AK^{-1}\pi^\ast(a\mid s)+K^{-1})(\hat Q_k(s,a)-B_k(s,a)).
\end{align*}

Firstly, we have the following as $\lVert \hat \Sigma_{k,h}^\dagger\rVert_2\le \gamma^{-1}$:
\begin{align}\label{eq:magnitude of bonus in log-barrier}
B_k(s,a)&\le H\left (1+\frac 1H\right )^H \times 2\beta \sup_{s,a,h}\lVert \phi(s,a)\rVert_{\hat \Sigma_{k,h}^\dagger}^2\le 6 \beta H \gamma^{-1}.
\end{align}

Moreover, according to the calculation in \Cref{lem:Bias-1 of log-barrier}, we know that
\begin{equation*}
\E[\hat Q_k(s,a)-Q_k^{\pi_k}(s,a)]\le \gamma \phi(s,a)^\trans (\gamma I+\Sigma_h^{\pi_k})^{-1} \theta_{k,h}^{\pi_k}+\epsilon H\le \O((\epsilon+\sqrt d)H),
\end{equation*}
while, at the same time, $Q_k^{\pi_k}(s,a)\in [0,H]$.

Hence, after taking expectations on both sides, we know that
\begin{align*}
&\quad \E\left [\sum_{k=1}^K\sum_{a\in \mA} (\tilde \pi^\ast(a\mid s)-\pi^\ast(a\mid s))(\hat Q_k(s,a)-B_k(s,a))\right ]\\
&\le \sum_{k=1}^K \sum_{a\in \mA} (\lvert -AK^{-1}\pi^\ast(a\mid s)\rvert+\lvert K^{-1}\rvert)\lvert \E[\hat Q_k(s,a)-B_k(s,a)]\rvert\\
&\le K \times 2AK^{-1}\times \O\left ((\epsilon+\sqrt d)H+H+\frac \beta \gamma H\right )=\O\left (AH\left (\epsilon+\sqrt d+\frac \beta \gamma\right )\right ).
\end{align*}

Then consider the last term, which is directly bounded by
\begin{align*}
&\quad \eta \sum_{k=1}^K \sum_{a\in \mA} \pi_k(a\mid s)^2 (\hat Q_k(s,a)-B_k(s,a))^2\\
&\le 2\eta \sum_{k=1}^K \sum_{a\in \mA} \pi_k(a\mid s) \hat Q_k(s,a)^2+
2\eta \sum_{k=1}^K \sum_{a\in \mA} \pi_k(a\mid s) B_k(s,a)^2.
\end{align*}

The first term can be calculated as follows, following the original proof \citep[Lemma D.3]{luo2021policy_nipsver}:
\begin{align*}
\E[\hat Q_k(s,a)^2]
&\le H^2 \E[\phi(s,a)^\trans \hat \Sigma_{k,h}^\dagger \phi(s_{k,h},a_{k,h})\phi(s_{k,h},a_{k,h})^\trans \hat \Sigma_{k,h}^\dagger \phi(s,a)]\\
&=H^2 \E[\phi(s,a)^\trans \hat \Sigma_{k,h}^\dagger \Sigma_h^{\pi_k} \hat \Sigma_{k,h}^\dagger \phi(s,a)]\\
&\overset{(a)}{\le} 2H^2 \E[\phi(s,a)^\trans \hat \Sigma_{k,h}^\dagger \phi(s,a)]+\O\left (\frac{H^2}{\gamma^2 K^3}\right )=2H^2\E\left [\lVert \phi(s,a)\rVert_{\hat \Sigma_{k,h}^\dagger}^2\right ]+\O\left (\frac{H^2}{\gamma^2 K^3}\right ),
\end{align*}
where (a) used \Cref{eq:multiplicative error of Sigma in variance-reduced}, which happens with probability $1-K^{-3}$ for each $k$ (when it does not hold, we simply use the bound $\lVert \hat \Sigma_{k,h}^\dagger\rVert_2\le \gamma^{-1}$). Then we can conclude the following by adding back the summation over $h\in [H]$ and $s_h\sim \pi^\ast$:
\begin{align*}
\E[\textsc{Reg-Term}]
&\le H\eta^{-1}A\ln K+\O\left (AH^2\left (\epsilon+\sqrt d+\frac \beta \gamma\right )\right )+\\
&\quad 2\eta H^2 \sum_{k=1}^K \sum_{h=1}^H \E_{s_h\sim \pi^\ast}\left [\E_{a_h\sim \pi_k(\cdot \mid s_h)}\left [\lVert \phi(s_h,a_h)\rVert_{\hat \Sigma_{k,h}^\dagger}^2\right ]\right ]+\O\left (\frac{\eta H^3}{\gamma^2 K^2}\right )+\\
&\quad \frac 1H\sum_{k=1}^K \sum_{h=1}^H \E_{s_h\sim \pi^\ast}\left [\E_{a_h\sim \pi_k(\cdot \mid s_h)}\left [B_k(s_h,a_h)\right ]\right ],
\end{align*}
where the last term comes from the magnitude of $B_k$ (\Cref{eq:magnitude of bonus in log-barrier}) and the assumption that $12\eta \beta H^2\le \gamma$.
\end{proof}

\newpage
\section{Omitted Proofs in \Cref{sec:linear-q variance-reduced} (Linear-Q Algorithm Using Magnitude-Reduced Estimators)}
\label{sec:appendix linear-q variance-reduced}
%!TEX root=main.tex

\subsection{Property of FTRL with Negative-Entropy Regularizers (a.k.a. Hedge)}
The following result is a classic result for the Hedge algorithm. For the sake of completeness, we also include a proof here.
\begin{lemma}\label{lem:hedge lemma}
Let $x_0,x_1,x_2,\ldots,x_T\in \mathbb R^A$ be defined as
\begin{equation*}
x_{t+1,i}=\left .\left (x_{t,i}\exp(-\eta c_{t,i})\right )\middle /\left (\sum_{i'=1}^A x_{t,i'}\exp(-\eta c_{t,i'})\right )\right .,\quad \forall 0\le t<T,
\end{equation*}
where $c_t\in \mathbb R^A$ is the loss corresponding to the $t$-th iteration. Suppose that $\eta c_{t,i}\ge -1$ for all $t\in [T]$ and $i\in [A]$. Then
\begin{equation*}
\sum_{t=1}^T \langle x_t-y,c_t\rangle\le \frac{\log A}{\eta}+\eta \sum_{t=1}^T \sum_{i=1}^A x_{t,i}c_{t,i}^2
\end{equation*}
holds for any distribution $y\in \triangle([A])$ when $x_0=(\frac 1A,\frac 1A,\ldots,\frac 1A)$.
\end{lemma}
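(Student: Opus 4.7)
The plan is to carry out the standard potential-function analysis of Hedge, making precise where the hypothesis $\eta c_{t,i}\ge -1$ enters.

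First, I would introduce the unnormalized potential $W_t=\sum_{i=1}^{A}\exp\bigl(-\eta \sum_{\tau<t}c_{\tau,i}\bigr)$, so that by the recursive definition of $x_t$ we have $x_{t,i}=\exp\bigl(-\eta \sum_{\tau<t}c_{\tau,i}\bigr)/W_t$ and the uniform initialization gives $W_1=A$. A direct computation then yields the one-step identity
\[
\frac{W_{t+1}}{W_t}=\sum_{i=1}^{A} x_{t,i}\exp(-\eta c_{t,i}).
\]

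Next, I would derive an upper bound on $\log(W_{T+1}/W_1)$. The key scalar inequality is $e^{-u}\le 1-u+u^2$ valid for all $u\ge -1$, which is exactly where the hypothesis $\eta c_{t,i}\ge -1$ is invoked (and notably this is the very restriction that \Cref{lem:log-barrier regret bound} manages to avoid). Applied termwise, it gives $W_{t+1}/W_t\le 1-\eta\langle x_t,c_t\rangle+\eta^2\sum_i x_{t,i}c_{t,i}^2$. Taking logarithms, using $\log(1+z)\le z$, and telescoping from $t=1$ to $T$ yields
\[
\log\frac{W_{T+1}}{W_1}\le -\eta\sum_{t=1}^{T}\langle x_t,c_t\rangle+\eta^2\sum_{t=1}^{T}\sum_{i=1}^{A}x_{t,i}c_{t,i}^2.
\]

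For the matching lower bound, I would apply the Gibbs variational principle: for any $y\in\triangle([A])$, concavity of $\log$ gives
\[
\log W_{T+1}=\log\sum_{i}y_i\cdot\frac{\exp\bigl(-\eta \sum_{\tau\le T}c_{\tau,i}\bigr)}{y_i}\ge -\eta\Bigl\langle y,\sum_{t=1}^{T}c_t\Bigr\rangle-\sum_{i}y_i\log y_i.
\]
Since $W_1=A$ and the entropy term $-\sum_i y_i\log y_i\ge 0$, subtracting $\log W_1=\log A$ and combining the upper and lower bounds on $\log(W_{T+1}/W_1)$ gives, after dividing by $\eta$ and rearranging, exactly the claimed regret inequality.

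The proof has no real obstacle: the only moving part is the quadratic upper bound on $e^{-u}$, and the threshold $u\ge -1$ is sharp for that bound. This is precisely the constraint that forces $\eta\bigl(\hat Q_k(s,a)-B_k(s,a)\bigr)\ge -1$ in the negative-entropy analysis of linear-Q MDPs, thereby motivating both the log-barrier regularizer of \Cref{sec:linear-q log-barrier} and the magnitude-reduced estimator of \Cref{sec:linear-q variance-reduced}.
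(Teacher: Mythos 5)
Your proof is correct and follows essentially the same potential-function argument as the paper: both track $\log\sum_i \exp(-\eta\sum_{\tau<t}c_{\tau,i})$, bound its one-step increase via $e^{-u}\le 1-u+u^2$ (valid for $u\ge -1$, the only place the hypothesis is used) together with $\log(1+z)\le z$, and then compare to a lower bound on the final potential. The only cosmetic difference is at the comparison step — you invoke the Gibbs variational inequality directly for an arbitrary $y\in\triangle([A])$, while the paper first reduces to one-hot $y$ by linearity and then lower bounds the potential by the single coordinate $i^\ast$; these are interchangeable.
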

\begin{proof}
By linearity, it suffices to prove the inequality for all one-hot $y$'s. Without loss of generality, let $y=\bm{1}_{i^\ast}$ where $i^\ast\in [A]$. Define $C_{t,i}=\sum_{t'=1}^t c_{t',i}$ as the prefix sum of $c_{t,i}$. Let
\begin{equation*}
    \Phi_t=\frac 1\eta \ln \left (\sum_{i=1}^A \exp\left (-\eta C_{t,i}\right )\right ),
\end{equation*}

then by definition of $x_t$, we have
\begin{align*}
    \Phi_t-\Phi_{t-1}&=\frac 1\eta \ln \left (\frac{\sum_{i=1}^A \exp(-\eta C_{t,i})}{\sum_{i=1}^A \exp(-\eta C_{t-1,i})}\right )
    =\frac 1\eta \ln \left (\sum_{i=1}^A x_{t,i} \exp(-\eta c_{t,i})\right )\\
    &\overset{(a)}{\le} \frac 1\eta \ln \left (\sum_{i=1}^A x_{t,i} (1-\eta c_{t,i}+\eta^2 c_{t,i}^2)\right )
    =\frac 1\eta \ln \left (1-\eta \langle x_t,c_t\rangle+\eta^2 \sum_{i=1}^A x_{t,i}c_{t,i}^2\right )\\
    &\overset{(b)}{\le}-\langle x_t,c_t\rangle+\eta \sum_{i=1}^A x_{t,i}c_{t,i}^2,
\end{align*}

where (a) used $\exp(-x)\le 1-x+x^2$ for all $x\ge -1$ and (b) used $\ln(1+x)\le x$ (again for all $x\ge -1$). Therefore, summing over $t=1,2,\ldots,T$ gives
\begin{align*}
    \sum_{t=1}^T \langle x_t,c_t\rangle&\le \Phi_0-\Phi_T+\eta \sum_{t=1}^T \sum_{i=1}^A x_{t,i}c_{t,i}^2\\
    &\le \frac{\ln N}{\eta}-\frac 1\eta \ln \left (\exp(-\eta C_{T,i^\ast})\right )+\eta \sum_{t=1}^T \sum_{i=1}^N p_t(i)\ell_t^2(i)\\
    &\le \frac{\ln A}{\eta}+L_T(i^\ast)+\eta \sum_{t=1}^T \sum_{i=1}^A x_{t,i}c_{t,i}^2.
\end{align*}

Moving $C_{t,i^\ast}$ to the LHS then shows the inequality for $y=\bm{1}_{i^\ast}$. The result then extends to all $y\in \triangle([A])$ by linearity.
\end{proof}

\subsection{Proof of Main Theorem}
\begin{proof}[Proof of \Cref{thm:linear-q variance-reduced main theorem}]
We first consider \Cref{line:skipping criterion in variance-reduced approach} in the algorithm. As sketched in the main text, we shall expect such an operation to be repeated for $1+o(1)$ times because \Cref{eq:multiplicative error of Sigma in variance-reduced} and $\lVert \tilde \Sigma_{k,h}-\Sigma_{h}^{\pi_k}\rVert_2\le \gamma$ both happens with probability $1-K^{-3}$ --- the first claim follows from \Cref{lem:MGR lemma} and the second one comes from \Cref{lem:matrix azuma} (where we set $X_m=\phi(s_{m,h},a_{m,h})\phi(s_{m,h},a_{m,h})^\trans-\Sigma_h^{\pi_k}$ and $A_{m,h}=I\succeq X_{m,h}$). With these two conditions and the fact that $\lVert \hat \Sigma_{k,h}^\dagger\rVert_2\le \gamma^{-1}$, the desired condition trivially holds.
Therefore, such an operation brings neither extra regret nor extra computational complexity, and we focus on the regret analysis from now on.

We define \textsc{Bias-1}, \textsc{Bias-2}, and \textsc{Reg-Term} exactly the same as \Cref{thm:linear-q log-barrier main theorem} (i.e., \Cref{eq:regret decomposition}). The \textsc{Bias-1} and \textsc{Bias-2} terms are bounded by \Cref{lem:Bias-1 of variance-reduced}, as follows:
\begin{align*}
\E[\textsc{Bias-1}]+\E[\textsc{Bias-2}]
&\le\frac \beta 4\E\left [\sum_{k=1}^K \sum_{h=1}^{H} \E_{s_h\sim \pi^\ast} \left [\E_{a_h\sim \pi_k(\cdot \mid s_h)}[\lVert \phi(s_h,a_h)\rVert_{\hat \Sigma_{k,h}^\dagger}^2] \right ]\right ]+\\
&\quad \frac \beta 4\E\left [\sum_{k=1}^K \sum_{h=1}^{H} \E_{s_h\sim \pi^\ast} \left [\E_{a_h\sim \pi^\ast(\cdot \mid s_h)}[\lVert \phi(s_h,a_h)\rVert_{\hat \Sigma_{k,h}^\dagger}^2] \right ]\right ]+ \O\left (\frac \gamma \beta dH^3 K+\epsilon(H+\beta) HK\right ).
\end{align*}

Assuming $12\eta^2H^2\le \gamma$ and $12\eta \beta H^2\le \gamma$, \textsc{Reg-Term} is bounded by \Cref{lem:Reg-Term of variance-reduced} as
\begin{align*}
\E[\textsc{Reg-Term}]\le H\eta^{-1}\ln A&+
6\eta H^2 \sum_{k=1}^K \sum_{h=1}^{H} \E_{s_h\sim \pi^\ast} \left [\E_{a_h\sim \pi_k(\cdot \mid s_h)}\left [\lVert \phi(s_h,a_h)\rVert_{\hat \Sigma_{k,h}^\dagger}^2\right ]\right ]\\
&+ \frac 1H \sum_{k=1}^K \sum_{h=1}^{H} \E_{s_h\sim \pi^\ast} \left [\E_{a_h\sim \pi_k(\cdot \mid s_h)}\left [B_k(s_h,a_h)\right ]\right ].
\end{align*}

Plugging into the regret decomposition, we get
\begin{align*}
&\quad \sum_{k=1}^K\sum_{h=1}^H\E_{s_h\sim \pi^\ast}\left [\sum_{a_h\in \mA}(\pi_k(a_h\mid s_h)-\pi^\ast(a_h\mid s_h))(Q_k^{\pi_k}(s_h,a_h)-B_k(s_h,a_h))\right ]\\
&\le \frac \beta 4\E\left [\sum_{k=1}^K \sum_{h=1}^{H} \E_{s_h\sim \pi^\ast} \left [\E_{a_h\sim \pi_k(\cdot \mid s_h)}[\lVert \phi(s_h,a_h)\rVert_{\hat \Sigma_{k,h}^\dagger}^2] \right ]\right ]+\\
&\quad \frac \beta 4\E\left [\sum_{k=1}^K \sum_{h=1}^{H} \E_{s_h\sim \pi^\ast} \left [\E_{a_h\sim \pi^\ast(\cdot \mid s_h)}[\lVert \phi(s_h,a_h)\rVert_{\hat \Sigma_{k,h}^\dagger}^2] \right ]\right ]+\\
&\quad 6\eta H^2 \E\left [\sum_{k=1}^K \sum_{h=1}^{H} \E_{s_h\sim \pi^\ast} \left [\E_{a_h\sim \pi_k(\cdot \mid s_h)}\left [\lVert \phi(s_h,a_h)\rVert_{\hat \Sigma_{k,h}^\dagger}^2\right ]\right ]\right ]+\\
&\quad \frac 1H \E\left [\sum_{k=1}^K \sum_{h=1}^{H} \E_{s_h\sim \pi^\ast} \left [\E_{a_h\sim \pi_k(\cdot \mid s_h)}\left [B_k(s_h,a_h)\right ]\right ]\right ]+\Otil\left (\frac H\eta+\frac \gamma \beta dH^3 K+\epsilon(H+\beta) HK\right ).
\end{align*}

Using the condition that $(6\eta H^2+\frac \beta 4)\le \beta$, we can apply \Cref{lem:dilated bonus} to conclude that
\begin{equation*}
\mathcal R_K=\Otil\left (\E\left [\beta \sum_{h=1}^H \E_{s_h\sim \pi_k}\left [\sum_{k=1}^K \sum_{a} \pi_k(a\mid s) \lVert \phi(s,a)\rVert_{\hat \Sigma_{k,h}^\dagger}^2\right ]\right ]+\frac H\eta+\frac \gamma \beta dH^3 K+\epsilon(H+\beta) HK\right ).
\end{equation*}

By \Cref{eq:sum of bonus}, we can conclude the following when assuming $8\eta H^2\le \beta$:
\begin{equation*}
\mathcal R_K=\Otil\left (\frac H\eta + \frac \gamma \beta dH^3 K+\epsilon(H+\beta)HK+\beta dHK\right ).
\end{equation*}

Plugging in the configurations that (again, one can see that this configuration satisfies all the conditions)
\begin{equation*}
\eta=\frac{1}{\sqrt{dK}H^2},\beta=\frac{8}{\sqrt{dK}}, \gamma=\frac{96}{dK},\epsilon=\frac{1}{H^2K},
\end{equation*}
we then conclude that $\mathcal R_K=\Otil(\sqrt{dH^6K})$.
\end{proof}

\subsection{Bounding \textsc{Bias-1} and \textsc{Bias-2}}
\begin{lemma}\label{lem:Bias-1 of variance-reduced}
In \Cref{alg:linear-q using variance-reduced}, we have
\begin{align*}
\E[\textsc{Bias-1}]+\E[\textsc{Bias-2}]
&\le\frac \beta 4\E\left [\sum_{k=1}^K \sum_{h=1}^{H} \E_{s_h\sim \pi^\ast} \left [\E_{a_h\sim \pi_k(\cdot \mid s_h)}[\lVert \phi(s_h,a_h)\rVert_{\hat \Sigma_{k,h}^\dagger}^2] \right ]\right ]+\\
&\quad \frac \beta 4\E\left [\sum_{k=1}^K \sum_{h=1}^{H} \E_{s_h\sim \pi^\ast} \left [\E_{a_h\sim \pi^\ast(\cdot \mid s_h)}[\lVert \phi(s_h,a_h)\rVert_{\hat \Sigma_{k,h}^\dagger}^2] \right ]\right ]+ \O\left (\frac \gamma \beta dH^3 K+\epsilon(H+\beta) HK\right ).
\end{align*}
\end{lemma}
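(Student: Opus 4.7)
The plan is to mimic the proof of \Cref{lem:Bias-1 of log-barrier} (the analogous bound for \Cref{alg:linear-q with log-barrier}), showing that the additional magnitude-reduction terms in the new estimator \Cref{eq:magnitude-reduced_Q_hat} contribute only negligible bias, so that the final bound takes exactly the same form. Concretely, writing the new estimator as $\hat Q_k(s,a) = \hat Q_k^{\text{orig}}(s,a) + M_k(s,a)$ with $\hat Q_k^{\text{orig}}(s,a) = \phi(s,a)^\trans \hat\Sigma_{k,h}^\dagger \phi(s_{k,h},a_{k,h}) L_{k,h}$ being the standard estimator used in \Cref{alg:linear-q with log-barrier} and
\[
M_k(s,a) = -H\bigl(\phi(s,a)^\trans \hat\Sigma_{k,h}^\dagger \phi(s_{k,h},a_{k,h})\bigr)_- + H\,m_k(s,a),
\]
I would split $\E[Q_k^{\pi_k}(s,a) - \hat Q_k(s,a)]$ into the original bias $\E[Q_k^{\pi_k}(s,a) - \hat Q_k^{\text{orig}}(s,a)]$ plus the magnitude-reduction bias $-\E[M_k(s,a)]$, handle the former exactly as in \Cref{lem:Bias-1 of log-barrier}, and show the latter is of order $\O(1/K^2)$ per $(k,h)$.

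The heart of the argument is step two, which relies on the following independence structure. The trajectory pair $(s_{k,h},a_{k,h})$ from \Cref{line:samples_for_m_k}'s preceding Line 3 and the $M$ simulator rollouts $\{(s_{m,h},a_{m,h})\}_{m=1}^M$ from \Cref{line:samples_for_m_k} are drawn from the same layer-$h$ marginal of $\pi_k$, and both are independent of $\hat\Sigma_{k,h}^\dagger$ (since MGR uses its own fresh samples via the simulator). Conditioning on $\hat\Sigma_{k,h}^\dagger$ and \emph{ignoring the rejection} in \Cref{line:skipping criterion in variance-reduced approach}, every $(\phi(s,a)^\trans \hat\Sigma_{k,h}^\dagger \phi(s_{m,h},a_{m,h}))_-$ has the same conditional distribution as $(\phi(s,a)^\trans \hat\Sigma_{k,h}^\dagger \phi(s_{k,h},a_{k,h}))_-$, so the two terms in $M_k(s,a)$ are expectation-matched and $\E[M_k(s,a) \mid \hat\Sigma_{k,h}^\dagger] = 0$. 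The rejection step only induces a bias equal to $\Pr(\text{fail})/\Pr(\text{pass})$ times the uniform bound on $|M_k(s,a)|$; since the check fails with probability at most $K^{-3}$ (by \Cref{eq:multiplicative error of Sigma in variance-reduced} from \Cref{lem:MGR lemma} together with a \Cref{lem:matrix azuma}-based concentration for $\tilde\Sigma_{k,h}$ with $M=32\gamma^{-2}\log K$, which is essentially the same argument invoked at the start of \Cref{thm:linear-q variance-reduced main theorem}'s proof), and since $|M_k(s,a)| \leq 2H/\gamma$ because $\lVert \hat\Sigma_{k,h}^\dagger\rVert_2\le \gamma^{-1}$, each such bias is $\O(H/(\gamma K^3))$. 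Summing over $k$, $h$ and over $a\sim\pi_k$ or $a\sim\pi^*$ and $s_h\sim\pi^*$ yields a total contribution of order $\O(H^2/(\gamma K^2))$, which is absorbed into the $\O(\frac{\gamma}{\beta} d H^3 K)$ slack in the stated bound.

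For the original bias $\E[Q_k^{\pi_k}(s,a) - \hat Q_k^{\text{orig}}(s,a)]$, the derivation is line-for-line identical to that inside \Cref{lem:Bias-1 of log-barrier}: use $\E[\hat Q_k^{\text{orig}}(s,a)\mid \hat\Sigma_{k,h}^\dagger] = \phi(s,a)^\trans \hat\Sigma_{k,h}^\dagger \Sigma_h^{\pi_k}\theta_{k,h}^{\pi_k}$, \Cref{eq:error of Sigma in variance-reduced}, Cauchy--Schwarz, AM-GM with weight $\beta/4$, the norm bound $\lVert\theta_{k,h}^{\pi_k}\rVert_2 \le \sqrt d\,H$, and another application of \Cref{eq:error of Sigma in variance-reduced} to replace $(\gamma I+\Sigma_h^{\pi_k})^{-1}$ by $\hat\Sigma_{k,h}^\dagger$ at the cost of an $O(\epsilon\beta)$ error. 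This yields, per $(k,h,s,a)$,
\[
\E\bigl[Q_k^{\pi_k}(s,a) - \hat Q_k^{\text{orig}}(s,a)\bigr] \le \tfrac{\beta}{4}\E\bigl[\lVert \phi(s,a)\rVert_{\hat\Sigma_{k,h}^\dagger}^2\bigr] + \tfrac{\gamma}{\beta} d H^2 + \epsilon H + \tfrac{\epsilon\beta}{4}.
\]
Summing over $k\in[K]$, $h\in[H]$ and taking expectation under $s_h\sim\pi^*$ and $a_h\sim\pi_k$ produces the first $\frac{\beta}{4}$-term in the lemma statement, and the identical computation under $a_h\sim\pi^*$ yields the second; the accumulated $O(\frac{\gamma}{\beta}dH^3K + \epsilon(H+\beta)HK)$ remainder matches the stated bound.

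The main obstacle will be justifying step two rigorously: one has to track the conditioning carefully so that the rejection in \Cref{line:skipping criterion in variance-reduced approach} is handled cleanly (it couples the samples used for $m_k$ with $\hat\Sigma_{k,h}^\dagger$ through $\tilde\Sigma_{k,h}$). I would package this via a single inequality of the form $|\E[f\mid \text{pass}] - \E[f]| \le \frac{\Pr(\text{fail})}{\Pr(\text{pass})}\sup|f|$ applied to $f = M_k(s,a)$, using $\sup|f|\le 2H/\gamma$ and $\Pr(\text{fail})\le 2K^{-3}$, so that the rejection contributes only $\O(H^2/(\gamma K^2))$ after summation and can be absorbed into the stated slack.
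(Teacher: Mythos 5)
Your proposal is correct and takes essentially the same approach as the paper: decompose $\hat Q_k$ into the original estimator plus the magnitude-reduction terms, argue the latter have expectation zero because $(s_{k,h},a_{k,h})$ and $(s_{m,h},a_{m,h})$ are both drawn from $\pi_k$, and then reduce to the argument in \Cref{lem:Bias-1 of log-barrier}. If anything you are more careful than the paper about the bias induced by the rejection check in \Cref{line:skipping criterion in variance-reduced approach} (which couples the simulator samples to $\hat\Sigma_{k,h}^\dagger$); the paper's proof simply asserts the cancellation without tracking that conditioning, whereas you explicitly bound its effect by $\Otil(H^2/(\gamma K^2))$ and absorb it into the slack.
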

\begin{proof}
Fixing a specific $(k,s,a)$ and assume $s\in \mS_h$. Then we have (again, all expectations are taken w.r.t. randomness in the $k$-th episode)
\begin{align*}
\E\left [\hat Q_k(s,a)\right ]
&=\E\left [\phi(s,a)^\trans \hat \Sigma_{k,h}^\dagger \phi(s_{k,h},a_{k,h})L_{k,h}\right ]-H\E\left [\left (\phi(s,a)^\trans \hat \Sigma_{k,h}^\dagger\phi(s_{k,h},a_{k,h})\right )_-\right ]+\\
&\quad H\E\left [\frac 1M\sum_{m=1}^M\left (\phi(s,a)^\trans \hat \Sigma_{k,h}^\dagger\phi(s_{m,h},a_{m,h})\right )_-\right ]\\
&=\E\left [\phi(s,a)^\trans \hat \Sigma_{k,h}^\dagger\phi(s_{k,h},a_{k,h})L_{k,h}\right ],
\end{align*}

where the last equality is because $(s_{k,h},a_{k,h})$ and $(s_{m,h},a_{m,h})$ are both sampled from $\pi_k$. The rest of the proof is then identical to \Cref{lem:Bias-1 of log-barrier}.
\end{proof}

\subsection{Bounding \textsc{Reg-Term}}
\begin{lemma}\label{lem:Reg-Term of variance-reduced}
Suppose that $12 H^2 \eta^2\le \gamma$, $12\eta \beta H^2\le \gamma$, $8\eta H^2\le \gamma$, and $M=32\gamma^{-2}\log K$. Then in \Cref{alg:linear-q using variance-reduced}, we have
\begin{align*}
\E[\textsc{Reg-Term}]\le H\eta^{-1}\ln A&+
6\eta H^2 \E\left [\sum_{k=1}^K \sum_{h=1}^{H} \E_{s_h\sim \pi^\ast} \left [\E_{a_h\sim \pi_k(\cdot \mid s_h)}\left [\lVert \phi(s_h,a_h)\rVert_{\hat \Sigma_{k,h}^\dagger}^2\right ]\right ]\right ]\\
&+ \frac 1H \E\left [\sum_{k=1}^K \sum_{h=1}^{H} \E_{s_h\sim \pi^\ast} \left [\E_{a_h\sim \pi_k(\cdot \mid s_h)}\left [B_k(s_h,a_h)\right ]\right ]\right ].
\end{align*}
\end{lemma}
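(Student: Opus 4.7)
The plan is to invoke the classical Hedge lemma (\Cref{lem:hedge lemma}) at each state $s$ with the loss sequence $c_k = \hat Q_k(s,\cdot) - B_k(s,\cdot)$, and then sum the resulting per-state bounds over $h\in [H]$ and $s_h\sim \pi^\ast$. The crucial prerequisite is the lower-bound condition $\eta(\hat Q_k(s,a) - B_k(s,a)) \ge -1$ for every $k,s,a$, which is precisely where our magnitude-reduced estimator pays off: since $B_k \ge 0$ it suffices to show $\hat Q_k(s,a) \ge -1/\eta$. The proof of \Cref{thm:linear-q variance-reduced main theorem} already establishes $\hat Q_k(s,a) \ge -\sqrt 3 H/\sqrt\gamma$ using the skip criterion in \Cref{line:skipping criterion in variance-reduced approach} and Jensen's inequality applied to $m_k(s,a)^2$. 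Combined with the hypothesis $12 H^2\eta^2 \le \gamma$, this yields $\eta\hat Q_k(s,a) \ge -\eta\sqrt 3 H/\sqrt\gamma \ge -1$, so \Cref{lem:hedge lemma} applies.

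With the lemma in hand, at each state $s$ we get
\[
\sum_{k=1}^K \langle \pi_k(\cdot\mid s) - \pi^\ast(\cdot\mid s), \hat Q_k(s,\cdot) - B_k(s,\cdot)\rangle \le \frac{\log A}{\eta} + \eta \sum_{k=1}^K \sum_{a} \pi_k(a\mid s) (\hat Q_k(s,a)-B_k(s,a))^2.
\]
Summing over $h\in [H]$ and $s_h\sim \pi^\ast$, the first term contributes $H\log A/\eta$. For the quadratic term, I would split it via $(\hat Q_k - B_k)^2 \le 2\hat Q_k^2 + 2 B_k^2$ and treat the two pieces separately.

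For the $B_k^2$ piece, I would use the standard bound $B_k(s,a) \le 6\beta H/\gamma$ (which follows by unrolling the dilated recursion and using $\lVert \hat\Sigma_{k,h}^\dagger\rVert_2 \le 1/\gamma$, exactly as in \Cref{eq:magnitude of bonus in log-barrier}). Then $2\eta \pi_k(a\mid s) B_k(s,a)^2 \le (12\eta \beta H/\gamma)\cdot \pi_k(a\mid s) B_k(s,a) \le (1/H)\pi_k(a\mid s) B_k(s,a)$ using $12\eta\beta H^2 \le \gamma$, which produces the third summand of the claimed bound.

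For the $\hat Q_k^2$ piece, I would expand the three components of the magnitude-reduced estimator (\Cref{eq:magnitude-reduced_Q_hat}) and apply $(x+y+z)^2 \le 3(x^2+y^2+z^2)$. The first component's expected square is bounded by $2H^2\lVert \phi(s,a)\rVert_{\hat\Sigma_{k,h}^\dagger}^2$ on the high-probability event \Cref{eq:multiplicative error of Sigma in variance-reduced} exactly as in the log-barrier analysis (\Cref{lem:Reg-Term of log-barrier}); the second component is dominated by the same quantity since $(x)_-^2 \le x^2$; and the third satisfies $H^2 m_k(s,a)^2 \le 3H^2\lVert \phi(s,a)\rVert_{\hat\Sigma_{k,h}^\dagger}^2$ by the same Jensen-plus-skip-criterion calculation used in the theorem sketch. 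On the low-probability event (mass $\le K^{-3}$) where the multiplicative control fails, the crude bound $\lVert \hat\Sigma_{k,h}^\dagger\rVert_2 \le 1/\gamma$ plus the choice $M = 32\gamma^{-2}\log K$ (which makes the failure of the skip criterion equally rare by Matrix Azuma) contributes only lower-order additive terms that are absorbed into $\Otil(\cdot)$. Collecting constants yields at most $6\eta H^2\lVert\phi(s,a)\rVert_{\hat\Sigma_{k,h}^\dagger}^2$ per $(k,s,a)$ after taking expectations, matching the middle summand of the statement.

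The main obstacle is the careful second-moment bookkeeping of the three-term magnitude-reduced estimator: specifically, showing that the newly added terms $-H(\cdot)_-$ and $+H m_k(s,a)$ do not inflate $\E[\hat Q_k^2]$ beyond a constant factor of the standard estimator. This is where the skip criterion in \Cref{line:skipping criterion in variance-reduced approach} is essential, since it guarantees $m_k(s,a)^2 \le 3\lVert\phi(s,a)\rVert_{\hat\Sigma_{k,h}^\dagger}^2$ deterministically after the loop terminates. The parameter $M=32\gamma^{-2}\log K$ is chosen (via Matrix Azuma and \Cref{lem:MGR lemma}) so that this skip criterion passes on the first try with probability $1 - O(K^{-3})$, so the repeated-sampling procedure terminates in $1 + o(1)$ rounds in expectation and does not distort the analysis.
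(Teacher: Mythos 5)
Your overall structure matches the paper's proof closely: invoke \Cref{lem:hedge lemma} per state, split the quadratic term via $(\hat Q_k-B_k)^2\le 2\hat Q_k^2+2B_k^2$, bound the $\hat Q_k^2$ part by expanding the three components of the magnitude-reduced estimator, and absorb the $B_k^2$ part using $B_k\le 6\beta H/\gamma$ and $12\eta\beta H^2\le\gamma$. However, there is a genuine logical error in how you justify the Hedge prerequisite.

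You write that since $B_k\ge 0$, it suffices to show $\hat Q_k(s,a)\ge -1/\eta$. This inference is backwards: the loss fed to Hedge is $c_k=\hat Q_k-B_k$, and $B_k\ge 0$ makes $\hat Q_k-B_k$ \emph{more} negative, not less. From $\hat Q_k\ge -1/\eta$ alone you cannot conclude $\hat Q_k-B_k\ge -1/\eta$; you also need an upper bound on $B_k$. The paper handles this by showing the two halves separately: $\eta\hat Q_k\ge -1/2$ (which is exactly what $12H^2\eta^2\le\gamma$ buys you, since it forces $\eta\sqrt{3}H/\sqrt\gamma\le 1/2$, a sharper claim than the $\ge -1$ you state) \emph{and} $\eta B_k\le 6\eta\beta H/\gamma\le 1/(2H)\le 1/2$ from $12\eta\beta H^2\le\gamma$, so that $\eta(\hat Q_k-B_k)\ge -1$. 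You actually derive the required $B_k$ bound later in the proof when treating the $B_k^2$ piece, so the fix is to pull that bound forward and replace the incorrect ``$B_k\ge 0$ suffices'' step with the two-sided argument. With that repair, the proposal essentially reproduces the paper's proof.
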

\begin{proof}
To apply the Hedge lemma (\Cref{lem:hedge lemma}), we need to ensure that
\begin{equation*}
\eta (\hat Q_k(s,a)-B_k(s,a))\ge -1,\quad \forall (s,a)\in \mS\times \mA,k\in [K].
\end{equation*}

Fix an $(s,a,k)$ tuple and assume that $s\in \mS_h$. We have
\begin{align*}
\hat Q_k(s,a)-Hm_k(s,a)
&=\phi(s,a)^\trans \hat \Sigma_{k,h}^\dagger \phi(s_{k,h},a_{k,h})L_{k,h}-H (\phi(s,a)^\trans \hat \Sigma_{k,h}^\dagger \phi(s_{k,h},a_{k,h}))_-\\
&=\begin{cases}
\phi(s,a)^\trans \hat \Sigma_{k,h}^\dagger \phi(s_{k,h},a_{k,h})L_{k,h},&\phi(s,a)^\trans \hat \Sigma_{k,h}^\dagger \phi(s_{k,h},a_{k,h})\ge 0\\
-\phi(s,a)^\trans \hat \Sigma_{k,h}^\dagger \phi(s_{k,h},a_{k,h})(H-L_{k,h}),&\phi(s,a)^\trans \hat \Sigma_{k,h}^\dagger \phi(s_{k,h},a_{k,h})<0
\end{cases}\\
&\ge 0,
\end{align*}
as $L_{k,h}\in [0,H]$. Hence, $\hat Q_k(s,a)\ge H m_k(s,a)$ always holds. We consider $\hat Q_k(s,a)\ge H m_k(s,a)$ and $B_k(s,a)$ separately.

We first claim that $\eta H m_k(s,a)\ge -\frac 12$, which ensures $\eta \hat Q_k(s,a)\ge -\frac 12$. To see this, we only need to show $\eta^2 H^2 m_k(s,a)^2\le \frac 14$. By definition, $m_k(s,a)^2$ can be written as the following:
\begin{align*}
m_k(s,a)^2&=\left (\frac 1M\sum_{m=1}^M\left (\phi(s,a)^\trans \hat \Sigma_{k,h}^\dagger \phi(s_{m,h},a_{m,h})\right )_-\right )^2\\
&\overset{(a)}{\le} \frac 1M\sum_{m=1}^M\left (\phi(s,a)^\trans \hat \Sigma_{k,h}^\dagger \phi(s_{m,h},a_{m,h})\right )_-^2\\
&= \phi(s,a)^\trans \hat \Sigma_{k,h}^\dagger \left (\frac 1M\sum_{m=1}^M \phi(s_{m,h},a_{m,h})\phi(s_{m,h},a_{m,h})^\trans \right ) \hat \Sigma_{k,h}^\dagger \phi(s,a)\\
&= \phi(s,a)^\trans \hat \Sigma_{k,h}^\dagger \tilde \Sigma_{k,h} \hat \Sigma_{k,h}^\dagger \phi(s,a)\\
&\overset{(b)}{\le} 3 \phi(s,a)^\trans \hat \Sigma_{k,h}^\dagger \phi(s,a)\le 3\gamma^{-1}.
\end{align*}
where (a) used Jensen inequality, (b) used \Cref{line:skipping criterion in variance-reduced approach} of \Cref{alg:linear-q using variance-reduced}, and the last step uses the fact that $\lVert \hat \Sigma_{k,h}^\dagger\rVert_2\le \gamma^{-1}$.
Hence, it only remains to ensure that $12 H^2 \eta^2 \gamma^{-1}\le 1$, which is guaranteed by the assumption.

% According to \Cref{eq:multiplicative error of Sigma in variance-reduced}, we bound this as follows: if $\lVert \hat \Sigma_{k,h}^\dagger \Sigma_h^{\pi_k}\rVert\le 1+2\epsilon$, then we apply \Cref{eq:error of Sigma in variance-reduced} to replace one $\hat \Sigma_{k,h}^\dagger$ with $(\gamma I+\Sigma_h^{\pi_k})^{-1}$; otherwise (which happens w.p. no more than $T^{-1}$), we bound the two $\lVert \hat \Sigma_{k,h}^\dagger\rVert$'s trivially by $\gamma^{-1}$. 

% \daiyan{the probability argument does not work here. To apply the Hedge bound, we need to bound $m_k^2$ for all cases, instead of w.h.p. or in expectation}

Meanwhile, we claim that $\eta B_k(s,a)\le \frac 12$. By definition of $B_k(s,a)$ and the fact that $\lVert \hat \Sigma_{k,h}^\dagger\rVert_2\le \gamma^{-1}$, we have
\begin{align}\label{eq:upper bound on B_k in linear-q using variance-reduced}
\eta B_k(s,a)&\le \eta H\left (1+\frac 1H\right )^H \times 2\beta \sup_{s,a,h}\lVert \phi(s,a)\rVert_{\hat \Sigma_{k,h}^\dagger}^2\le 6\eta \beta H \gamma^{-1},
\end{align}
which is bounded by $\frac{1}{2H}$ according to the condition that $12\eta \beta H^2\le \gamma$.

Therefore, fixing $h\in [H]$ and $s\in \mS_h$, we can apply the Hedge lemma (\Cref{lem:hedge lemma}):
\begin{align*}
&\quad \E\left [\sum_{k=1}^K \sum_a (\pi_k(a\mid s)-\pi^\ast(a\mid s))(\hat Q_k(s,a)-B_k(s,a))\right ]\\
&\le \frac{\ln A}{\eta}+
2\eta \E\left [\sum_{k=1}^K \sum_{a} \pi_k(a\mid s)\hat Q_k(s,a)^2\right ]+
2\eta \E\left [\sum_{k=1}^K \sum_{a} \pi_k(a\mid s) B_k(s,a)^2\right ].
\end{align*}

For the second term, we can write
\begin{align*}
\hat Q_k(s,a)^2&=(\phi(s,a)^\trans \hat \Sigma_{k,h}^\dagger \phi(s_{k,h},a_{k,h}) L_{k,h}-H(\phi(s,a)^\trans \hat \Sigma_{k,h}^\dagger \phi(s_{k,h},a_{k,h}))_-+Hm_k(s,a))^2\\
&\le 2H^2(\phi(s,a)^\trans \hat \Sigma_{k,h}^\dagger \phi(s_{k,h},a_{k,h}))^2+2H^2(\phi(s,a)^\trans \hat \Sigma_{k,h}^\dagger \phi(s_{k,h},a_{k,h}))_-^2+2H^2m_k^2(s,a).
\end{align*}

After taking expectations on both sides, we have
\begin{align*}
\E[\hat Q_k^2(s,a)]&\le 2H^2 \E[(\phi(s,a)^\trans \hat \Sigma_{k,h}^\dagger \phi(s_{k,h},a_{k,h}))^2]+2H^2 \E[(\phi(s,a)^\trans \hat \Sigma_{k,h}^\dagger \phi(s_{k,h},a_{k,h}))_-^2]+\\
&\quad 2 H^2\E\left [\frac 1M\sum_{m=1}^M(\phi(s,a)^\trans \hat \Sigma_{k,h}^\dagger \phi(s_{m,h},a_{m,h}))_-\right ]^2\\&\le 6H^2 \E[(\phi(s,a)^\trans \hat \Sigma_{k,h}^\dagger \phi(s_{k,h},a_{k,h}))^2],
\end{align*}
where we used $(X)_-^2\le X^2$, Jensen's inequality, and the fact that $(s_{k,h},a_{k ,h})$ and $(s_{m,h},a_{m,h})$ are both sampled from $\pi_k$. Meanwhile, we can also calculate that
\begin{align*}
&\quad \E[(\phi(s,a)^\trans \hat \Sigma_{k,h}^\dagger \phi(s_{k,h},a_{k,h}))^2]\\
&= \E \left [\phi(s,a)^\trans \hat \Sigma_{k,h}^\dagger \phi(s_{k,h},a_{k,h}) \phi(s_{k,h},a_{k,h})^\trans \hat \Sigma_{k,h}^\dagger \phi(s,a)\right ]\\
&=\E \left [\phi(s,a)^\trans \hat \Sigma_{k,h}^\dagger \Sigma_k^{\pi_k} \hat \Sigma_{k,h}^\dagger \phi(s,a)\right ]
\le \E\left [\lVert \phi(s,a)\rVert_{\hat \Sigma_{k,h}^\dagger}^2\right ],
\end{align*}
where the last inequality again uses \Cref{line:skipping criterion in variance-reduced approach} of \Cref{alg:linear-q using variance-reduced}. Hence, after summing up over the expectations when $s_h\sim \pi^\ast$ for all $h\in[H]$, we can conclude that
\begin{align*}
\E[\textsc{Reg-Term}]\le H\eta^{-1}\ln A&+
6\eta H^2 \sum_{k=1}^K \sum_{h=1}^{H} \E_{s_h\sim \pi^\ast} \left [\E_{a_h\sim \pi_k(\cdot \mid s_h)}\left [\lVert \phi(s_h,a_h)\rVert_{\hat \Sigma_{k,h}^\dagger}^2\right ]\right ]\\
&+ \frac 1H \sum_{k=1}^K \sum_{h=1}^{H} \E_{s_h\sim \pi^\ast} \left [\E_{a_h\sim \pi_k(\cdot \mid s_h)}\left [B_k(s_h,a_h)\right ]\right ],
\end{align*}
where the last term comes from \Cref{eq:upper bound on B_k in linear-q using variance-reduced} and the condition that $12\eta \beta H^2\le \gamma$.
\end{proof}

%!TEX root=main.tex

\begin{algorithm}[!]
\caption{Improved Linear MDP Algorithm}
\label{alg:linear MDP without MGR}
\begin{algorithmic}[1]
\REQUIRE{Learning rate $\eta$, \textsc{PolicyCover} parameters $\alpha$ and $T_0$, bonus parameter $\beta$, covariance estimation parameter $\gamma\in (0,\frac 14)$, epoch length $W$, FTRL regularizer $\Psi(p)=\sum_{i=1}^{A} \ln \frac{1}{p_i}$, exploration probability $\delta_e$}
\STATE Let $\pi_{\text{cov}}$ and $\hat \Sigma_h^{\text{cov}}$ be the outputs of the \textsc{PolicyCover} algorithm (see \Cref{alg:PolicyCover}).
\STATE Let $\mathcal K=\{s\in \mS\mid \lVert \phi(s,a)\rVert_{(\hat \Sigma_h^{\text{cov}})^{-1}}^2\le \alpha,\forall a\in \mA\}$ be all the ``known'' states (defined in \Cref{lem:PolicyCover}).
\FOR{$j=1,2,\ldots,J=(T-T_0)/W$}
\STATE Calculate $\pi_j\in \Pi$ as follows for all $s\in \mS_h$:
{\color{blue}\begin{align*}
\pi_{j}(s) 
=\argmin_{p\in \triangle(\mA)}\bigg \{&\Psi(p)+\eta \sum_{\tau < j}\sum_{a\in \mA} p(a)\big (\hat Q_{\tau}(s,a)-\hat B_{\tau}(s,a)\big )\bigg \},
\end{align*}}
where $\hat Q_\tau(s,a)=\phi(s,a)^\trans \hat \theta_{\tau,h}$, $\hat B_\tau(s,a)=b_\tau(s,a)+\phi(s,a)^\trans \hat \Lambda_{\tau,h}$, and the bonus function is defined as
\begin{equation*}
b_\tau(s,a)=\mathbbm{1}[s\in \mathcal K]\times \beta \left (\lVert \phi(s,a)\rVert_{\hat \Sigma_{\tau,h}^\dagger}^2+\E_{a'\sim \pi_\tau(\cdot \mid s)}\left [\lVert \phi(s,a')\rVert_{\hat \Sigma_{\tau,h}^\dagger}^2\right ]\right ).
\end{equation*}
\STATE Randomly partition the episodes in the current epoch, i.e., $\{K_0+(j-1)W+1,\ldots,K_0+jW\}$, into two halves $\mT_j$ and $\mT_j'$, such that where $\lvert \mT_j\rvert=\lvert \mT_j'\rvert=\frac W2$.
\FOR{$k=K_0+(j-1)W+1,\ldots,K_0+jW$}
\STATE Let $Y_k$ be a sample from a Bernoulli distribution $\text{Ber}(\delta_e)$.
\IF{$Y_k=0$}
\STATE Execute $\pi_j$ for this episode and observe $\{(s_{k,h},a_{k,h})\}_{h=1}^H$ (together with $\ell_k(s_{k,h},a_{k,h})$).
\ELSIF{$Y_k=1$ and $k\in \mT_j$}
\STATE Execute $\pi_{\text{cov}}$  and observe $\{(s_{k,h},a_{k,h})\}_{h=1}^H$ together with $\ell_k(s_{k,h},a_{k,h})$.
\ELSE
\STATE Let $h_k$ be uniformly sampled from $[H]$. Execute $\pi_{\text{cov}}$ for steps $1,2,\ldots,h-1$ and $\pi_j$ for the remaining ones. Again, observe the trajectory $\{(s_{k,h},a_{k,h})\}_{h=1}^H$ and the losses $\ell_k(s_{k,h},a_{k,h})$.
\ENDIF
\ENDFOR
\STATE Define $\tilde \pi_j$ as the mixture of $(1-\delta_e)$ times $\pi_j$ and $\delta_e$ times $\pi_{\text{cov}}$ (i.e., expected policy played in epoch $j$).
\STATE {\color{blue}Estimate the covariance matrix $\Sigma_h^{\tilde \pi_j}$ as follows, and estimate $(\gamma I+\Sigma_h^{\tilde \pi_j})^{-1}$ by $\hat \Sigma_{j,h}^\dagger=(\gamma I+\tilde \Sigma_{j,h})^{-1}$.
\begin{equation*}
\tilde \Sigma_{j,h}=\frac{1}{\lvert \mT_j\rvert}\sum_{k\in \mT_j} \phi(s_{k,h},a_{k,h})\phi(s_{k,h},a_{k,h})^\trans,
\end{equation*}} \alglinelabel{line:covariance estimation in linear MDPs}
\STATE Estimate the average Q-function kernel $\bar \theta_{j,h}^{\pi_j}\triangleq \frac{1}{\lvert \mT_j'\rvert}\sum_{k\in \mT_j'} \theta_{k,h}^{\pi_j}$ as follows:
\begin{equation*}
\hat \theta_{j,h}=\hat \Sigma_{j,h}^\dagger \left (\frac{1}{\lvert \mT_j'\rvert}\sum_{k\in \mT_j'} \left ((1-Y_k)+Y_kH\mathbbm 1[h=h_k]\right ) \phi(s_{k,h},a_{k,h})L_{k,h}\right ),
\end{equation*}
where $L_{k,h}=\sum_{h'=h}^H \ell_{k}(s_{k,h'},a_{k,h'})$.
\alglinelabel{line:average kernel definition}
\STATE Estimate the average dilated bonus kernel $\bar \Lambda_{j,h}^{\pi_j}\triangleq \frac{1}{\lvert \mT_j'\rvert}\sum_{k\in \mT_j'} \Lambda_{k,h}^{\pi_j}$ as follows:
\begin{equation*}
\hat \Lambda_{j,h}=\hat \Sigma_{j,h}^\dagger \left (\frac{1}{\lvert \mT_j'\rvert}\sum_{k\in \mT_j'} \left ((1-Y_k)+Y_kH\mathbbm 1[h=h_k]\right )\phi(s_{k,h},a_{k,h})D_{k,h}\right ),
\end{equation*}
where $D_{k,h}=\sum_{h'=h+1}^{H}(1+\frac 1H)^{i-h} b_j(s_{k,h},a_{k,h})$.\alglinelabel{line:dilated bonus kernel estimation}
\ENDFOR
\end{algorithmic}
\end{algorithm}
\newpage
\section{Omitted Proofs in \Cref{sec:linear-MDP without MGR} (Linear MDP Algorithm)}
\label{sec:appendix linear-mdp}
%!TEX root=main.tex

\subsection{Pseudocode of the Improved Linear MDP Algorithm}

This section briefly discusses the linear MDP algorithm (presented in \Cref{alg:linear MDP without MGR}). Apart from the new covariance estimation technique introduced in the main text, it is also different from \Cref{alg:linear-q with log-barrier} in some other aspects due to the distinct nature of linear-Q MDPs and (simulator-free) linear MDPs, listed as follows:

Firstly, as there are no simulators, we cannot calculate the dilated bonus function recursively like \Cref{alg:bonus calculation in linear Q}. Fortunately, in linear MDPs, as observed by \citet{luo2021policy}, for any $k$ associated with some policy $\pi_k$ and bonus function $b_k$, we can write $B_k(s,a)$ defined in \Cref{eq: dilated bonus} as $B_k(s,a)=b_k(s,a)+\phi(s,a)^\trans \Lambda_{k,h}^{\pi_k}$, where ($\nu$ is defined in \Cref{def:linear MDP})
\begin{equation*}
\Lambda_{k,h}^\pi\triangleq \left (1+\frac 1H\right )\int_{s_{h+1}}\E_{a_{h+1}\sim \pi(\cdot \mid s_{h+1})}\left [B_k(s_{h+1},a_{h+1})\right ]\nu(s_{h+1})\mathrm{d}s_{h+1}.
\end{equation*}
Hence, $B_k(s,a)$ is also linear in $\phi(s,a)$. This allows us to estimate the $B_k$ just like $Q_k^{\pi_k}$, as we see in \Cref{line:dilated bonus kernel estimation}.

Notice that, as there are no more simulators, we cannot directly ``assume'' a good covariance estimation like in \Cref{alg:linear-q with log-barrier} (which ensures \Cref{eq:error of Sigma in variance-reduced,eq:multiplicative error of Sigma in variance-reduced}). Instead, we should divide the time horizon into several \textit{epochs} and execute (nearly) the same policy during each epoch to ensure a good estimation. See \Cref{alg:linear MDP without MGR} for more details.

\subsection{Alternative to the Matrix Geometric Resampling Procedure}
\begin{proof}[Proof of \Cref{lem:multiplicative error (no MGR)}]
By definition, we know the following holds for all $k\in \mT_j$:
\begin{equation*}
\E[\phi(s_{k,h},a_{k,h})\phi(s_{k,h},a_{k,h})^\trans]=\Sigma_h^{\tilde \pi_j},\quad \phi(s_{k,h},a_{k,h})\phi(s_{k,h},a_{k,h})^\trans\preceq I
\end{equation*}

Moreover, each $(s_{k,h},a_{k,h})$ is i.i.d. Thus, we can apply \Cref{lem:new matrix concentration lemma} with
\begin{equation*}
H_i=\frac 12\left (\gamma I+\phi(s_{k_i,h},a_{k_i,h})\phi(s_{k_i,h},a_{k_i,h})^\trans\right ),
H=\frac 12\left (\gamma I+\Sigma_h^{\tilde \pi_j}\right ),
\quad i=1,2,\ldots,\lvert \mT_j\rvert,
\end{equation*}
where $k_i$ stands for the $i$-th element in $\mT_j$.
We then have the following according to \Cref{lem:new matrix concentration lemma}:
\begin{align}\label{eq:result of the new concentration}
-\sqrt{\frac{d}{\lvert \mT_j\rvert} \log \frac d\delta} H^{1/2}
\preceq \frac{1}{\lvert \mT_j\rvert} \sum_{i=1}^{\lvert \mT_j\rvert} H_i-H
\preceq \sqrt{\frac{d}{\lvert \mT_j\rvert} \log \frac d\delta} H^{1/2},\quad \text{if } \gamma\ge 2\frac{d}{\lvert \mT_j\rvert}\log \frac d\delta.
\end{align}

Let the empirical average of all $H_i$'s be $\hat H$, i.e.,
\begin{equation*}
\hat H=\frac{1}{\lvert \mT_j\rvert}\sum_{i=1}^{\lvert \mT_j\rvert} H_i=\frac 12(\gamma I+\tilde \Sigma_{j,h}).
\end{equation*}

Then we can arrive at the following under the same condition as \Cref{eq:result of the new concentration}:
\begin{equation*}
I-\sqrt{\frac{d}{\lvert \mT_j\rvert}\log \frac d\delta} H^{-1/2}\preceq \hat HH^{-1}\preceq I+\sqrt{\frac{d}{\lvert \mT_j\rvert}\log \frac d\delta} H^{-1/2}.
\end{equation*}

Moreover, by the definition of $H$, we know $H^{-1/2}\preceq \sqrt 2(\gamma I)^{-1/2}$. Setting $W=4d\log \frac d\delta \gamma^{-2}$ (which ensures $\gamma\ge 2\frac{d}{\lvert \mT_j\rvert}\log \frac d\delta$ as $\lvert \mT_j\rvert=\frac W2$), the LHS and RHS become $(1-\sqrt \gamma) I$ and $(1+\sqrt \gamma) I$, respectively. Hence,
\begin{equation*}
(1-\sqrt \gamma)\tfrac 12(\gamma I+\Sigma_h^{\tilde \pi_j})
\preceq \tfrac 12 (\gamma I+\tilde \Sigma_{j,h})
\preceq (1+\sqrt \gamma)\tfrac 12(\gamma I+\Sigma_h^{\tilde \pi_j}),
\end{equation*}
which gives our conclusion after multiplying $2$ on both sides.
\end{proof}

\subsection{Proof of Main Theorem}
We first state the formal version of \Cref{thm:linear MDP without MGR main theorem}:
\begin{theorem}\label{thm:linear MDP without MGR main theorem formal}
Suppose that $\alpha=\frac{\delta_e}{6\beta}$, $M_0\ge \alpha^2 dH^2$, $N_0\ge 100\frac{M_0^3}{\alpha^2}\log \frac K\delta$, $W=4d\log \frac d\delta \gamma^{-2}$, $\gamma\ge 36\frac{\beta^2}{\delta_e^2}$, and $100\eta H^4\le \beta$. Further pick $\delta=K^{-3}$. Then \Cref{alg:linear MDP without MGR} applied to linear MDPs (\Cref{def:linear MDP}) ensures
\begin{equation*}
\mathcal R_K=\Otil\left (\frac{\delta_e^6}{\beta^6} d^4 H^8+\delta_e K+\beta dHK+\frac \gamma \beta dH^3 K+\frac H \eta Ad\gamma^{-2}+Ad^{3/2}H^2\gamma^{-2}+\frac{H^3}{\gamma^2} K\right ).
\end{equation*}

With some proper tuning, we can ensure $\mathcal R_K=\Otil((H^{20}Ad^6)^{1/9}K^{8/9})$.
\end{theorem}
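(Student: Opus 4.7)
The plan is to follow the same three-term decomposition as in the proof of Theorem 3.2, but now at the epoch granularity, since the policy and covariance estimate are held fixed throughout each epoch $j$. Concretely, for each state $s$, write $\sum_{k}\sum_a(\pi_j(a\mid s)-\pi^\ast(a\mid s))(Q_k^{\pi_j}(s,a)-B_k(s,a))$ as $\textsc{Bias-1}+\textsc{Bias-2}+\textsc{Reg-Term}$, where the biases measure the error between the estimators $\hat Q_j+\hat B_j$ (built in Lines 17--18 of \Cref{alg:linear MDP without MGR}) and the true epoch-averaged $\bar Q_j^{\pi_j},\bar B_j$, and \textsc{Reg-Term} is the FTRL-against-$\pi_j$ regret with the log-barrier regularizer. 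Summing over $h$, taking expectation under $s_h\sim\pi^\ast$, and invoking the dilated-bonus lemma (\Cref{lem:dilated bonus}) reduces everything to $\sum_{k,h}\E_{(s_h,a_h)\sim\pi_k}[b_k(s_h,a_h)]$ plus a state-wise upper bound $X(s)$. Extra contributions coming from (i) explicit-exploration episodes with $Y_k=1$, which incur at most $\delta_e K H$, and (ii) states falling outside the ``known'' set $\mathcal K$ of the policy cover, which by \Cref{lem:PolicyCover} add $\Otil(\alpha^{-1}dH^2 K)$ and get absorbed into the $\delta_e K$ term after selecting $\alpha\sim \delta_e/\beta$, complete the decomposition.

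For the bias terms I will follow the calculation sketched after \Cref{thm:linear MDP without MGR main theorem}. Starting from $\E[\bar Q_j^{\pi_j}-\hat Q_j](s,a)=\phi(s,a)^\trans\hat\Sigma_{j,h}^\dagger(\gamma I+\tilde\Sigma_{j,h}-\Sigma_h^{\tilde\pi_j})\bar\theta_{j,h}^{\pi_j}$, apply Cauchy--Schwarz and AM--GM to split it into (a) an exploration term $\tfrac{\beta}{4}\lVert\phi(s,a)\rVert_{\hat\Sigma_{j,h}^\dagger}^2$ absorbed by the bonus, (b) a regularization term $\tfrac{2\gamma}{\beta}\lVert\bar\theta_{j,h}^{\pi_j}\rVert_{\hat\Sigma_{j,h}^\dagger}^2=\O(\tfrac{\gamma dH^2}{\beta})$, and (c) a concentration term $\tfrac{2}{\beta}\lVert(\tilde\Sigma_{j,h}-\Sigma_h^{\tilde\pi_j})\bar\theta_{j,h}^{\pi_j}\rVert_{\hat\Sigma_{j,h}^\dagger}^2$. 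The heart of the argument is (c): rewrite the inner matrix as $(\hat\Sigma_{j,h}^\dagger)^{-1/2}\bigl(I-(\hat\Sigma_{j,h}^\dagger)^{1/2}(\gamma I+\Sigma_h^{\tilde\pi_j})(\hat\Sigma_{j,h}^\dagger)^{1/2}\bigr)^2(\hat\Sigma_{j,h}^\dagger)^{-1/2}$ and invoke \Cref{corol:multiplicative error of inverse (no MGR)} to sandwich the middle factor between $\pm 2\sqrt{\gamma}I$; this yields (c)$=\O(\tfrac{\gamma dH^2}{\beta})$ as well. Multiplying by the epoch length $W$ and summing over epochs produces $\Otil(\beta dHK+\tfrac{\gamma}{\beta}dH^3 K)$; the analogous argument for $\bar\Lambda$ adds the $\tfrac{\delta_e^6}{\beta^6}d^4H^8$ term once we account for $\lVert\bar\Lambda\rVert$ being controlled through the bonus magnitude $6\beta H/\gamma$ and the policy-cover guarantee.

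For \textsc{Reg-Term}, apply \Cref{lem:log-barrier regret bound} at each $s$ after smoothing $\pi^\ast$ to $\tilde\pi^\ast(a\mid s)=(1-A K^{-1})\pi^\ast(a\mid s)+K^{-1}$, exactly as in the proof of \Cref{thm:linear-q log-barrier main theorem}; the $\Psi$-ball diameter contributes $\tfrac{A\log K}{\eta}$ per state and the $(\tilde\pi^\ast-\pi^\ast)$-correction is $\Otil(A H^2(\sqrt d+\beta/\gamma))$. Since there are $J=K/W$ epochs and each per-epoch regret must be multiplied by $W$, the $\tfrac{A\log K}{\eta}$ term contributes $\Otil(\tfrac{HWA}{\eta})=\Otil(\tfrac{HAd}{\eta\gamma^2})$ in total. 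The variance part $\eta\sum_j\sum_a\pi_j(a\mid s)(\hat Q_j-\hat B_j)^2(s,a)$ is controlled using $\lVert\hat\Sigma_{j,h}^\dagger\rVert_2\le \gamma^{-1}$ and $L_{k,h},D_{k,h}\le \O(H^2\beta/\gamma)$, giving $\Otil(\tfrac{\eta H^3 K}{\gamma^2})$ after the cross-epoch sum. Combining everything yields exactly the bound stated in the theorem.

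The main obstacles will be threefold. First, the new covariance estimator provides only a \emph{multiplicative} $(1\pm 2\sqrt{\gamma})$ guarantee rather than MGR's additive $\epsilon$, so every bias step must be re-derived in the form $(\tilde\Sigma-\Sigma)\hat\Sigma^\dagger(\tilde\Sigma-\Sigma)$ so that \Cref{corol:multiplicative error of inverse (no MGR)} applies cleanly. Second, the splitting of each epoch into the disjoint halves $\mT_j$ and $\mT_j'$ is critical: it decouples $\hat\Sigma_{j,h}^\dagger$ from the samples used in $\hat\theta_{j,h}$ and $\hat\Lambda_{j,h}$, which is what licenses the conditional-expectation manipulations in the bias bound. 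Third, the $\hat\Lambda$-estimation error propagates the magnitude of the dilated bonus, which can be as large as $\O(H\beta/\gamma)$, so one must carefully verify that the log-barrier regret bound (which is agnostic to loss magnitudes, unlike Hedge) still produces only $\Otil(K^{8/9})$ after the final balancing of $\delta_e\sim K^{-1/9},\beta\sim K^{-1/9},\gamma\sim K^{-2/9}$ and $\eta\sim K^{-5/9}$ (up to $\text{poly}(d,H,A)$ factors).
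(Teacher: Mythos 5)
Your overall decomposition matches the paper's structure — same split into PolicyCover episodes, exploratory episodes, ``unknown'' states via \Cref{lem:PolicyCover}, bias terms, and a log-barrier \textsc{Reg-Term}, with \Cref{corol:multiplicative error of inverse (no MGR)} doing the work in the bias bounds — so the skeleton is right. Two genuine gaps remain, both concerning where the hard-to-control terms actually come from.

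First, you attribute the $\tfrac{\delta_e^6}{\beta^6}d^4H^8$ term to the $\hat\Lambda$-estimation error. That is wrong: in the paper this term is simply the cost of the PolicyCover warm-up episodes, $K_0 = M_0 N_0 = \Otil(\alpha^2 dH^2 \cdot M_0^3/\alpha^2) = \Otil\bigl((\delta_e/\beta)^6 d^4 H^8\bigr)$. The $\hat\Lambda$ (i.e., \textsc{Bias-3}/\textsc{Bias-4}) terms are bounded by the \emph{same} argument as \textsc{Bias-1}/\textsc{Bias-2} (Cauchy--Schwarz, AM--GM, and \Cref{corol:multiplicative error of inverse (no MGR)}) and contribute the same $\beta dHK + \tfrac{\gamma}{\beta}dH^3K$ order, provided the bonus magnitude is controlled (see below); they do \emph{not} introduce the $K_0$-term.

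Second, and more importantly, your variance bound in \textsc{Reg-Term} uses the crude estimate $D_{k,h}\le\O(H^2\beta/\gamma)$ and $\lVert\hat\Sigma_{j,h}^\dagger\rVert_2\le\gamma^{-1}$, yielding $\Otil(\eta H^3 K/\gamma^2)$. You flag this as something that ``must be carefully verified,'' but you do not resolve it, and in fact it would not close without the missing ingredient: the paper first establishes (via its Lemma~F.1 / \Cref{lem:bonus magnitude in linear MDP}, using $\alpha=\delta_e/(6\beta)$, the PolicyCover guarantee, $\gamma\ge 36\beta^2/\delta_e^2$, and the choices of $M_0,N_0$) that $b_j(s,a)\le 1$ for all known states with high probability. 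This gives $D_{k,h}\le 3H$ and lets the $\hat B_j^2$ contribution be split as $2b_j^2 + 2(\phi^\trans\hat\Lambda_j)^2\le 2b_j+\O(H^3\lVert\phi\rVert_{\hat\Sigma^\dagger}^2)$, so the bonus-variance term is absorbed into the bonus (after $100\eta H^4\le\beta$) and the only residual is the negligible $\O(H^3\gamma^{-2}\delta J)$ from the failure event. Without this magnitude lemma, the variance term cannot be brought down to $K^{8/9}$ under any tuning consistent with the constraint $\gamma\ge 36\beta^2/\delta_e^2$ (for instance, your proposed $\beta\sim K^{-1/9}$, $\delta_e\sim K^{-1/9}$ forces $\gamma\gtrsim 1$, contradicting $\gamma<1/4$). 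Supplying and invoking this bonus-magnitude lemma is the missing step.

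A minor remark: you present the per-state decomposition as three terms (\textsc{Bias-1}, \textsc{Bias-2}, \textsc{Reg-Term}), but the paper uses four bias terms — separating $\hat Q_j$-vs-$\bar Q_j^{\pi_j}$ and $\hat B_j$-vs-$B_j$, each against $\pi_j$ and $\pi^\ast$. Your phrasing suggests you intend to fold both estimator errors into ``biases,'' which is fine as long as the $\hat B_j$ error is handled explicitly (again requiring the magnitude lemma so that $D_{k,h}\le 3H$).
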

\begin{proof}[Proof of \Cref{thm:linear MDP without MGR main theorem}]
The regret decomposition is the same as Theorem 6.1 by \citet{luo2021policy}, which we include below.
As sketched in the main text, we decompose the episodes into three parts: those executing \textsc{PolicyCover}, those using exploratory policies (i.e., the $\text{Ber}(\delta_e)$ gives 1), and the ones executing $\pi_j$.%\footnote{Formally speaking, both \Cref{lem:PolicyCover,lem:bonus magnitude in linear MDP} can fail. However, as we picked $\delta=K^{-3}$, the probability of failure is $o(1)$ by union bound. So we assume the good events always hold.}

For the first part, it's trivially bounded by $\O(K_0H)$. For the second part, as we explore with probability $\delta_e$ for each episode, the total regret gets bounded by $\O(\delta_e KH)$. For the last part, it suffices to bound the following to apply \Cref{lem:dilated bonus} (where we still consider those exploratory episodes as they only bring extra regret), where $J=\frac{K-K_0}{W}$ denotes the number of epochs for simplicity and $\bar Q_j^\pi(s,a)=\phi(s,a)^\trans \bar \theta_{j,h}^\pi$ denotes the average Q-function in $\mT_j'$ ($h$ is such that $s\in \mS_h$):
\begin{equation*}
\sum_{j=1}^{J}\sum_{h=1}^H \E_{s_h\sim \pi^\ast}\left [W\sum_{a_h\in \mA}(\pi_j(a_h\mid s_h)-\pi^\ast(a_h\mid s_h))(\bar Q_j^{\pi_j}(s_h,a_h)-B_j(s_h,a_h))\right ].
\end{equation*}

As $B_k(s,a)$ is only bounded for the known states (by definition of $\mathcal K$), we first consider the unknown states:
\begin{align*}
&\quad \sum_{j=1}^J\sum_{h=1}^H \E_{s_h\sim \pi^\ast}\left [W\mathbbm{1}[x\not \in \mathcal K]\sum_{a_h\in \mA} (\pi_j(a_h\mid s_h)-\pi^\ast(a_h\mid s_h))(\bar Q_j^{\pi_j}(s,a)-B_j(s_h,a_h))\right ]\\
&\le J\sum_{h=1}^E \E_{s_h\sim \pi^\ast}[W\mathbbm 1[s_h\not \in \mathcal K]3H]=\Otil\left (HK\frac{dH^3}{\alpha}\right)
\end{align*}
according to \Cref{lem:PolicyCover}. For the remaining, we decompose $(\pi_j(a_h\mid s_h)-\pi^\ast(a_h\mid s_h))(\bar Q_j^{\pi_j}(s_h,a_h)-B_j(s_h,a_h))$ into the biases of $\hat Q_j(s_h,a_h)$ and $\hat B_j(s,a_h)$ plus the FTRL regret $(\pi_j(a_h\mid s_h)-\pi^\ast(a_h\mid s_h))(\hat Q_j(s_h,a_h)-\hat B_j(s_h,a_h))$, i.e.,
\begin{align*}
&\quad \sum_{j=1}^{J}\sum_{h=1}^H \E_{s_h\sim \pi^\ast}\left [W\mathbbm 1[s_h\in \mathcal K]\sum_{a_h\in \mA}(\pi_j(a_h\mid s_h)-\pi^\ast(a_h\mid s_h))(\bar Q_j^{\pi_j}(s_h,a_h)-B_j(s_h,a_h))\right ]\\
&=W\underbrace{\sum_{j=1}^{J}\sum_{h=1}^H \E_{s_h\sim \pi^\ast}\left [\mathbbm 1[s_h\in \mathcal K]\E_{a_h\sim \pi_j(\cdot \mid s_h)}\left [\bar Q_j^{\pi_j}(s_h,a_h)-\hat Q_j(s_h,a_h))\right ]\right ]}_{\textsc{Bias-1}}+\\
&\quad W\underbrace{\sum_{j=1}^{J}\sum_{h=1}^H \E_{s_h\sim \pi^\ast}\left [\mathbbm 1[s_h\in \mathcal K]\E_{a_h\sim \pi^\ast(\cdot \mid s_h)}\left [\hat Q_j(s_h,a_h))-\bar Q_j^{\pi_j}(s_h,a_h)\right ]\right ]}_{\textsc{Bias-2}}+\\
&\quad W\underbrace{\sum_{j=1}^{J}\sum_{h=1}^H \E_{s_h\sim \pi^\ast}\left [\mathbbm 1[s_h\in \mathcal K]\E_{a_h\sim \pi_j(\cdot \mid s_h)}\left [\hat B_j(s_h,a_h)-B_j(s_h,a_h)\right ]\right ]}_{\textsc{Bias-3}}+\\
&\quad W\underbrace{\sum_{j=1}^{J}\sum_{h=1}^H \E_{s_h\sim \pi^\ast}\left [\mathbbm 1[s_h\in \mathcal K]\E_{a_h\sim \pi^\ast(\cdot \mid s_h)}\left [B_j(s_h,a_h)-\hat B_j(s_h,a_h)\right ]\right ]}_{\textsc{Bias-4}}+\\
&\quad W\underbrace{\sum_{j=1}^{J}\sum_{h=1}^H \E_{s_h\sim \pi^\ast}\left [\mathbbm 1[s_h\in \mathcal K]\sum_{a_h\in \mA}(\pi_j(a_h\mid s_h)-\pi^\ast(a_h\mid s_h))(\hat Q_j(s_h,a_h)-\hat B_j(s_h,a_h))\right ]}_{\textsc{Reg-Term}}.
\end{align*}

All the bias terms can be bounded similarly, as we will show in \Cref{lem:bias-1 in linear MDPs,lem:bias-3 in linear MDPs}, we can bound them as
\begin{align*}
&\quad \E[\textsc{Bias-1}]+\E[\textsc{Bias-2}]+\E[\textsc{Bias-3}]+\E[\textsc{Bias-4}]\\
&\le\frac \beta 2\E\left [\sum_{j=1}^J \sum_{h=1}^{H} \E_{s_h\sim \pi^\ast} \left [\E_{a_h\sim \pi_j(\cdot \mid s_h)}[\lVert \phi(s_h,a_h)\rVert_{\hat \Sigma_{j,h}^\dagger}^2] \right ]\right ]+\\
&\quad \frac \beta 2\E\left [\sum_{j=1}^J \sum_{h=1}^{H} \E_{s_h\sim \pi^\ast} \left [\E_{a_h\sim \pi^\ast(\cdot \mid s_h)}[\lVert \phi(s_h,a_h)\rVert_{\hat \Sigma_{j,h}^\dagger}^2] \right ]\right ]+ \O\left (\frac \gamma \beta dH^3 J\right ).
\end{align*}
Different from \Cref{lem:Bias-1 of log-barrier,lem:Bias-1 of variance-reduced}, in that proof, we need to handle the estimation error $\beta^{-1}\lVert (\hat \Sigma_{j,h}-\Sigma_h^{\pi_j})\bar \theta_{j,h}^{\pi_j}(s,a)\rVert_{\hat \Sigma_{j,h}^\dagger}^2$ by the multiplicative bound \Cref{corol:multiplicative error of inverse (no MGR)} instead of the additive one (e.g., \Cref{eq:error of Sigma in variance-reduced}). See \Cref{eq:additional term in bias-1 in linear mdps} for more details.

For the \textsc{Reg-Term}, we again apply the new FTRL lemma \Cref{lem:log-barrier regret bound}, giving the following expression:
\begin{align*}
\E[\textsc{Reg-Term}]&\le \Otil\left (\frac H \eta A+A\sqrt dH^2+\frac{H^3}{\gamma^2}\delta  J\right )\\&+2\eta H^3 \sum_{h=1}^H \E_{s_h\sim \pi^\ast} \left [\sum_{j=1}^J \E_{a_h\sim \pi_j(\cdot \mid s_h)}\left [\lVert \phi(s_h,a_h)\rVert_{\hat \Sigma_{j,h}^\dagger}^2\right ]\right ]\\
&+\sum_{h=1}^H \E_{s_h\sim \pi^\ast} \left [\sum_{j=1}^J \E_{a_h\sim \pi_j(\cdot \mid s_h)}\left [b_j(s,a)\right ]\right ],
\end{align*}
whose formal proof is in \Cref{lem:reg-term in linear MDPs}. In that proof, we need to bound the magnitudes of the bonuses to write $\hat B_j(s_h,a_h)^2\lesssim \frac 1H\hat B_j(s_h,a_h)$. This is done by applying \Cref{lem:bonus magnitude in linear MDP} later in this section.

By summing up all terms and multiplying $W$, we can apply \Cref{lem:dilated bonus} by again using $100\eta H^4\le \beta$. Hence, we get the following by using \Cref{eq:sum of bonus}:
\begin{align*}
\mathcal R_K\le \O(K_0)+\O(\delta_e K)+\Otil\left (\beta dHK+\frac \gamma \beta dH^3 K+\frac H \eta AW+A\sqrt dH^2W+\frac{H^3}{\gamma^2}\delta K\right ),
\end{align*}
where we conditioned on some good event with probability $1-\O(K^{-2}+K\delta)$. Picking $\delta=K^{-3}$, the total regret when the good event does not happen is of order $\O(K^{-2} HK)=o(1)$.

Moreover, by the conditions $\alpha=\frac{\delta_e}{6\beta}$, $M_0\ge \alpha^2 dH^2$, and $N_0\ge 100\frac{M_0^3}{\alpha^2}\log \frac K\delta$, we know that $K_0=M_0N_0=\Otil(\frac{\delta_e^6}{\beta^6} d^4 H^8)$. Meanwhile, by $W=4d\log \frac d\delta \gamma^{-2}$, we know that $W=\Otil(d\gamma^{-2})$. Thus, we have
\begin{equation*}
\mathcal R_K=\Otil\left (\frac{\delta_e^6}{\beta^6} d^4 H^8+\delta_e K+\beta dHK+\frac \gamma \beta dH^3 K+\frac H \eta Ad\gamma^{-2}+Ad^{3/2}H^2\gamma^{-2}\right ).
\end{equation*}

The only conditions are then $\gamma\ge 36\frac{\beta^2}{\delta_e}$, $100\eta H^4\le \beta$, which allows us to set
\begin{equation*}
\delta_e=C(H^{20}Ad^6)^{1/9}K^{-1/9},\beta=\frac{C^2}{36}(H^{13}A^2d^{3}K^{-2})^{1/9}K^{-2/9},\gamma=\frac{C^3}{36}(H^{2}A^1)^{1/3}K^{-1/3},\eta=\frac{C^2}{3600}(H^{-23}A^2d^{3}K^{-2})^{1/9}K^{-2/9},
\end{equation*}
where $C$ is a constant. This ensures $\mathcal R_K=\Otil((H^{20}Ad^6)^{1/9}K^{8/9})$ (note that only the 2nd, 4th, and 5th term have a $K^{8/9}$ dependency, which means all other terms can be ignored when stating the bound).
\end{proof}

\subsection{Bounding the Bias Terms}
\begin{lemma}\label{lem:bias-1 in linear MDPs}
When $W=4d\log \frac d\delta \gamma^{-2}$ and $\gamma<\frac 14$, the \textsc{Bias-1} and \textsc{Bias-2} terms in \Cref{alg:linear MDP without MGR} is bounded by
\begin{align*}
\E[\textsc{Bias-1}]+\E[\textsc{Bias-2}]
&\le\frac \beta 4\E\left [\sum_{j=1}^J \sum_{h=1}^{H} \E_{s_h\sim \pi^\ast} \left [\E_{a_h\sim \pi_j(\cdot \mid s_h)}[\lVert \phi(s_h,a_h)\rVert_{\hat \Sigma_{j,h}^\dagger}^2] \right ]\right ]+\\
&\quad \frac \beta 4\E\left [\sum_{j=1}^J \sum_{h=1}^{H} \E_{s_h\sim \pi^\ast} \left [\E_{a_h\sim \pi^\ast(\cdot \mid s_h)}[\lVert \phi(s_h,a_h)\rVert_{\hat \Sigma_{j,h}^\dagger}^2] \right ]\right ]+ \O\left (\frac \gamma \beta dH^3 K\right ).
\end{align*}
\end{lemma}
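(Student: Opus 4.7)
The plan is to mirror the Bias-1 argument in the proof sketch of \Cref{thm:linear MDP without MGR main theorem}, and then derive Bias-2 by the symmetric swap $a_h \sim \pi_j \leftrightarrow a_h \sim \pi^\ast$, which affects only the first summand on the right-hand side. Fix an epoch $j$ and a state-action pair $(s,a)$ with $s \in \mS_h$, and condition on $\mT_j$ (equivalently, on $\hat\Sigma_{j,h}^\dagger$). Since $\mT_j$ and $\mT_j'$ are disjoint, and the re-weighting factor $(1-Y_k) + Y_k H \mathbbm 1[h=h_k]$ in \Cref{line:average kernel definition} is designed precisely so that each summand in the definition of $\hat\theta_{j,h}$ is an unbiased estimator of $\Sigma_h^{\tilde\pi_j}\theta_{k,h}^{\pi_j}$, I would first check (as in \citep{luo2021policy}) that $\E[\hat\theta_{j,h}\mid\mT_j] = \hat\Sigma_{j,h}^\dagger\,\Sigma_h^{\tilde\pi_j}\,\bar\theta_{j,h}^{\pi_j}$, yielding
\[
\E\bigl[\bar Q_j^{\pi_j}(s,a)-\hat Q_j(s,a)\,\big|\,\mT_j\bigr] = \phi(s,a)^\trans \bigl(I - \hat\Sigma_{j,h}^\dagger\Sigma_h^{\tilde\pi_j}\bigr)\bar\theta_{j,h}^{\pi_j}.
\]

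Next I would invoke the defining identity $\hat\Sigma_{j,h}^\dagger(\gamma I + \tilde\Sigma_{j,h}) = I$ to rewrite $I - \hat\Sigma_{j,h}^\dagger\Sigma_h^{\tilde\pi_j} = \hat\Sigma_{j,h}^\dagger(\gamma I + \tilde\Sigma_{j,h} - \Sigma_h^{\tilde\pi_j})$, then chain Cauchy--Schwarz (pulling $\phi(s,a)$ out in the $\hat\Sigma_{j,h}^\dagger$-norm), triangle inequality, and two AM--GM steps with scale $\beta/4$ exactly as in the sketch. This produces
\[
\E\bigl[\bar Q_j^{\pi_j}(s,a)-\hat Q_j(s,a)\,\big|\,\mT_j\bigr] \le \tfrac{\beta}{4}\|\phi(s,a)\|_{\hat\Sigma_{j,h}^\dagger}^2 + \tfrac{2\gamma^2}{\beta}\|\bar\theta_{j,h}^{\pi_j}\|_{\hat\Sigma_{j,h}^\dagger}^2 + \tfrac{2}{\beta}\bigl\|(\tilde\Sigma_{j,h}-\Sigma_h^{\tilde\pi_j})\bar\theta_{j,h}^{\pi_j}\bigr\|_{\hat\Sigma_{j,h}^\dagger}^2.
\]
After taking outer expectations over $\mT_j$ and $(s_h, a_h) \sim (\pi^\ast, \pi_j)$, the first summand delivers the $\tfrac{\beta}{4}$-term of the target bound. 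The second is controlled trivially using $\|\hat\Sigma_{j,h}^\dagger\|_2 \le \gamma^{-1}$ and $\|\bar\theta_{j,h}^{\pi_j}\|_2 \le \sqrt d H$ (the latter from \Cref{def: linear Q}), giving at most $\tfrac{2\gamma dH^2}{\beta}$ per $(j,h)$.

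The main obstacle is the third term, and this is where the new multiplicative concentration enters. I would verify the operator identity
\[
(\tilde\Sigma_{j,h} - \Sigma_h^{\tilde\pi_j})\hat\Sigma_{j,h}^\dagger(\tilde\Sigma_{j,h} - \Sigma_h^{\tilde\pi_j}) = (\hat\Sigma_{j,h}^\dagger)^{-1/2}\Bigl(I - (\hat\Sigma_{j,h}^\dagger)^{1/2}(\gamma I + \Sigma_h^{\tilde\pi_j})(\hat\Sigma_{j,h}^\dagger)^{1/2}\Bigr)^2(\hat\Sigma_{j,h}^\dagger)^{-1/2}
\]
by substituting $\tilde\Sigma_{j,h} - \Sigma_h^{\tilde\pi_j} = (\hat\Sigma_{j,h}^\dagger)^{-1} - (\gamma I + \Sigma_h^{\tilde\pi_j})$ on either side of the central $\hat\Sigma_{j,h}^\dagger$ and expanding. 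Under the good event of \Cref{corol:multiplicative error of inverse (no MGR)}, taken with a union bound over $j \in [J]$ (valid because $W = 4d\log(d/\delta)\gamma^{-2}$ and $\gamma \le 1/4$), the middle parenthesis has operator norm at most $2\sqrt\gamma$, hence its square is $\preceq 4\gamma I$. Combined with $\|(\hat\Sigma_{j,h}^\dagger)^{-1}\|_2 = \|\gamma I + \tilde\Sigma_{j,h}\|_2 \le 1 + \gamma \le 2$ (using $\tilde\Sigma_{j,h} \preceq I$) and $\|\bar\theta_{j,h}^{\pi_j}\|_2 \le \sqrt d H$, this third term also contributes $O(\gamma dH^2/\beta)$ per $(j,h)$, matching the second. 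Summed over $h \in [H]$ and $j \le J \le K$, the residuals together yield $\O(\gamma dH^3 K/\beta)$. Bias-2 is obtained from the identical chain with $\pi_j$ replaced by $\pi^\ast$ in the first summand (the residuals are unchanged), giving the stated combined bound. The low-probability failure event of \Cref{corol:multiplicative error of inverse (no MGR)} (of total probability $\le 2\delta J$) contributes only $o(1)$ regret for $\delta = K^{-3}$ and is absorbed; carefully justifying the unbiasedness step $\E[\hat\theta_{j,h}\mid\mT_j] = \hat\Sigma_{j,h}^\dagger\Sigma_h^{\tilde\pi_j}\bar\theta_{j,h}^{\pi_j}$ and the $\mT_j/\mT_j'$ conditioning is the main bookkeeping burden.
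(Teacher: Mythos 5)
Your proposal follows essentially the same approach as the paper's proof: the conditional unbiasedness of $\hat\theta_{j,h}$ given $\mT_j$, the rewrite $I-\hat\Sigma_{j,h}^\dagger\Sigma_h^{\tilde\pi_j}=\hat\Sigma_{j,h}^\dagger(\gamma I+\tilde\Sigma_{j,h}-\Sigma_h^{\tilde\pi_j})$, the Cauchy--Schwarz/triangle/AM--GM split into three terms, the same operator identity for the third term, and bounding it via \Cref{corol:multiplicative error of inverse (no MGR)} so that all residuals are $\O(\gamma dH^2/\beta)$ per $(j,h)$. The constants and the handling of the low-probability failure of the multiplicative-concentration event match as well, so the argument is correct.
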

\begin{proof}
By direct calculation like \Cref{lem:Bias-1 of log-barrier}, we get the following for any $j\in [J]$, $h\in [H]$, $s\in \mS_h$ and $a\in \mA$ (recall that $\hat \Sigma_{j,h}^\dagger$ only depends on the episodes in $\mT_j$; again, all expectations are only taken to the randomness in epoch $j$)
\begin{align*}
&\quad\E\left [\bar Q_j^{\pi_j}(s,a)-\hat Q_j(s,a)\right ]\\
&=\phi(s,a)^\trans \bar \theta_{j,h}^{\pi_j}-\phi(s,a)^\trans \E\left [\hat \Sigma_{j,h}^\dagger \left (\frac{1}{\lvert \mT_j'\rvert}\sum_{k\in \mT_j'} ((1-Y_k)+Y_kH\mathbbm 1[h=h_k])\phi(s_{k,h},a_{k,h})L_{k,h}\right )\right ]\\
&=\phi(s,a)^\trans \bar \theta_{j,h}^{\pi_j}-\phi(s,a)^\trans \E\left [\hat \Sigma_{j,h}^\dagger \right ]\E \left [\left (\frac{1}{\lvert \mT_j'\rvert}\sum_{k\in \mT_j'} ((1-Y_k)+Y_kH\mathbbm 1[h=h_k])\phi(s_{k,h},a_{k,h})\phi(s_{k,h},a_{k,h})^\trans \theta_{k,h}^{\pi_j}\right )\right ]\\
&=\phi(s,a)^\trans \bar \theta_{j,h}^{\pi_j}-\phi(s,a)^\trans \E\left [\hat \Sigma_{j,h}^\dagger \right ]\Sigma_{h}^{\tilde \pi_j} \bar \theta_{j,h}^{\pi_j}\\
&=\phi(s,a)^\trans (I-\E[\hat \Sigma_{j,h}^\dagger] \Sigma_h^{\tilde \pi_j}) \bar \theta_{j,h}^{\pi_j}.
\end{align*}

By using Cauchy-Schwartz inequality, triangle inequality, and the AM-GM inequality, we get the following:
\begin{align*}
\phi(s,a)^\trans (I-\hat \Sigma_{j,h}^\dagger \Sigma_{h}^{\tilde \pi_j}) \bar \theta_{j,h}^{\pi_j}
&=\phi(s,a)^\trans \hat \Sigma_{j,h}^\dagger (\gamma I+\tilde \Sigma_{j,h}-\Sigma_h^{\tilde\pi_j}) \bar \theta_{j,h}^{\pi_j}\\
&\le \left \lVert \phi(s,a)\right \rVert_{\hat \Sigma_{j,h}^\dagger} \left(\left \lVert (\gamma I)\bar \theta_{j,h}^{\pi_j}\right \rVert_{\hat \Sigma_{j,h}^\dagger}+\left \lVert(\tilde \Sigma_{j,h}-\Sigma_h^{\tilde\pi_j}) \bar \theta_{j,h}^{\pi_j}\right \rVert_{\hat \Sigma_{j,h}^\dagger}\right)\\
&\le \frac \beta 4 \left \lVert \phi(s,a)\right \rVert_{\hat \Sigma_{j,h}^\dagger}^2 + \frac 2\beta \left \lVert \gamma \bar \theta_{j,h}^{\pi_j}(s,a)\right \rVert_{\hat \Sigma_{j,h}^\dagger}^2+\frac 2\beta \left \lVert (\tilde \Sigma_{j,h}-\Sigma_h^{\tilde\pi_j}) \bar \theta_{j,h}^{\pi_j}\right \rVert_{\hat \Sigma_{j,h}^\dagger}^2.
\end{align*}

The first term is the usual bonus term in \Cref{lem:dilated bonus}, while the second term easily translates to the following using the assumption that $\lVert \theta_{k,h}^{\pi_j}\rVert\le \sqrt dH$ for all $k\in \mT_j'$:
\begin{equation*}
\frac 2\beta \left \lVert \gamma \bar \theta_{j,h}^{\pi_j}\right \rVert_{\hat \Sigma_{j,h}^\dagger}^2\le \frac{2\gamma^2}{\beta} \left \lVert (\gamma I+\tilde \Sigma_{j,h})^{-1}\right \rVert_2 dH^2\le 2\frac \gamma \beta dH^2,
\end{equation*}
while the last term translates to the following by algebraic manipulations:
\begin{align}
\frac 2\beta \left \lVert (\tilde \Sigma_{j,h}-\Sigma_h^{\tilde\pi_j}) \bar \theta_{j,h}^{\pi_j}\right \rVert_{\hat \Sigma_{j,h}^\dagger}^2
&= \frac 2\beta \left \lVert (\tilde \Sigma_{j,h}-\Sigma_h^{\tilde\pi_j}) \hat \Sigma_{j,h}^\dagger (\tilde \Sigma_{j,h}-\Sigma_h^{\tilde\pi_j})\right \rVert_2 dH^2\nonumber\\
&=\frac 2\beta \left \lVert (\hat \Sigma_{j,h}^\dagger)^{-1/2} \left (I-(\hat \Sigma_{j,h}^\dagger)^{1/2} (\gamma I+\Sigma_{h}^{\tilde\pi_j}) (\hat \Sigma_{j,h}^\dagger)^{1/2}\right )^2(\hat \Sigma_{j,h}^\dagger)^{-1/2}\right \rVert_2 dH^2,\label{eq:additional term in bias-1 in linear mdps}
\end{align}
where one may expand and check the last step indeed holds.

Using \Cref{corol:multiplicative error of inverse (no MGR)}, the squared-matrix in the middle has its operator norm bounded by $4\gamma$ with high probability (if the good event does not hold, then one can directly bound the last term by matrix Azuma and the operator norm of $\hat \Sigma_{k,h}^\dagger$, giving $\gamma^{-3}$; as this only happens with probability $K^{-3}$, this part contributes $o(K^{-2})$ to the total regret and we thus omit it). Meanwhile, the first and last term both has their operator norms bounded by $\sqrt 2$. Thus,
\begin{equation*}
\frac 2\beta \left \lVert (\tilde \Sigma_{j,h}-\Sigma_h^{\pi_j}) \bar \theta_{j,h}^{\pi_j}\right \rVert_{\hat \Sigma_{j,h}^\dagger}^2 dH^2\le 16\frac \gamma \beta dH^2.
\end{equation*}

In other words, the last term gets absorbed by the second term up to constants. Hence, the conclusion follows by the same argument as \Cref{lem:Bias-1 of log-barrier}.
\end{proof}

\begin{lemma}\label{lem:bias-3 in linear MDPs}
When $W=4d\log \frac d\delta \gamma^{-2}$ and $\gamma<\frac 14$, the \textsc{Bias-3} and \textsc{Bias-4} terms in \Cref{alg:linear MDP without MGR} is bounded by the following when the good event in \Cref{lem:bonus magnitude in linear MDP} holds:
\begin{align*}
\E[\textsc{Bias-3}]+\E[\textsc{Bias-4}]
&\le\frac \beta 4\E\left [\sum_{k=1}^K \sum_{h=1}^{H} \E_{s_h\sim \pi^\ast} \left [\E_{a_h\sim \pi_k(\cdot \mid s_h)}[\lVert \phi(s_h,a_h)\rVert_{\hat \Sigma_{k,h}^\dagger}^2] \right ]\right ]+\\
&\quad \frac \beta 4\E\left [\sum_{k=1}^K \sum_{h=1}^{H} \E_{s_h\sim \pi^\ast} \left [\E_{a_h\sim \pi^\ast(\cdot \mid s_h)}[\lVert \phi(s_h,a_h)\rVert_{\hat \Sigma_{k,h}^\dagger}^2] \right ]\right ]+ \O\left (\frac \gamma \beta dH^3 K\right ).
\end{align*}
\end{lemma}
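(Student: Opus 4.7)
The plan is to mirror \Cref{lem:bias-1 in linear MDPs} essentially line-for-line, since $\hat B_j$ is linear in the same feature $\phi(s,a)$ with the bonus kernel $\hat\Lambda_{j,h}$ playing exactly the role of $\hat\theta_{j,h}$. First I would exploit the fact that the ``local'' bonus $b_j$ is a function of $\hat\Sigma_{j,h}^\dagger$ (frozen throughout epoch $j$) and therefore appears identically in $B_j$ and $\hat B_j$, giving the cancellation
\begin{equation*}
\hat B_j(s,a) - B_j(s,a) = \phi(s,a)^\trans\bigl(\hat\Lambda_{j,h} - \Lambda_{j,h}^{\pi_j}\bigr).
\end{equation*}
Moreover, every episode in $\mT_j'$ uses the same policy $\pi_j$ and the same bonus function $b_j$, so all the per-episode kernels $\Lambda_{k,h}^{\pi_j}$ coincide with $\Lambda_{j,h}^{\pi_j}$ (in particular $\bar\Lambda_{j,h}^{\pi_j} = \Lambda_{j,h}^{\pi_j}$). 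Conditioning on $\mT_j$ to freeze $\hat\Sigma_{j,h}^\dagger$ and then averaging over the independent samples in $\mT_j'$, the same calculation as in the Bias-1 proof yields
\begin{equation*}
\E\bigl[\hat B_j(s,a) - B_j(s,a)\bigr] = -\phi(s,a)^\trans\hat\Sigma_{j,h}^\dagger\bigl(\gamma I + \tilde\Sigma_{j,h} - \Sigma_h^{\tilde\pi_j}\bigr)\Lambda_{j,h}^{\pi_j}.
\end{equation*}

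From here I would apply Cauchy--Schwarz, the triangle inequality, and AM--GM (twice) exactly as in \Cref{lem:bias-1 in linear MDPs}, decomposing the absolute bias into a ``good'' term $\tfrac\beta4\|\phi(s,a)\|_{\hat\Sigma_{j,h}^\dagger}^2$ (to be absorbed by \Cref{lem:dilated bonus}) plus a regularization residual $\tfrac{2}{\beta}\|\gamma\Lambda_{j,h}^{\pi_j}\|_{\hat\Sigma_{j,h}^\dagger}^2$ and an empirical-concentration residual $\tfrac{2}{\beta}\|(\tilde\Sigma_{j,h}-\Sigma_h^{\tilde\pi_j})\Lambda_{j,h}^{\pi_j}\|_{\hat\Sigma_{j,h}^\dagger}^2$. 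The first residual is immediate from $\|\hat\Sigma_{j,h}^\dagger\|_2\le\gamma^{-1}$ together with a norm bound on $\Lambda_{j,h}^{\pi_j}$, while the second is handled by the algebraic identity
\begin{equation*}
(\tilde\Sigma_{j,h}-\Sigma_h^{\tilde\pi_j})\hat\Sigma_{j,h}^\dagger(\tilde\Sigma_{j,h}-\Sigma_h^{\tilde\pi_j}) = (\hat\Sigma_{j,h}^\dagger)^{-\nicefrac12}\bigl(I - (\hat\Sigma_{j,h}^\dagger)^{\nicefrac12}(\gamma I + \Sigma_h^{\tilde\pi_j})(\hat\Sigma_{j,h}^\dagger)^{\nicefrac12}\bigr)^2(\hat\Sigma_{j,h}^\dagger)^{-\nicefrac12},
\end{equation*}
so that \Cref{corol:multiplicative error of inverse (no MGR)} pins the squared middle matrix at $O(\gamma)$ on the high-probability event. \textsc{Bias-4} is treated symmetrically by swapping $\pi_j$ for $\pi^\ast$ in the inner action-expectation; the indicator $\mathbbm{1}[s_h\in\mathcal K]$ restricts the outer expectation to known states, on which $B_j$ is well-controlled.

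The only substantive new ingredient is the norm bound on $\Lambda_{j,h}^{\pi_j}$ that replaces $\|\bar\theta_{j,h}^{\pi_j}\|_2\le\sqrt d H$ from \Cref{def: linear Q}, which is exactly what the good event in \Cref{lem:bonus magnitude in linear MDP} is expected to supply, and I expect this to be the main obstacle. The difficulty is that the naive bound $\|\Lambda_{j,h}^{\pi_j}\|_2\le\sqrt d\cdot\sup B_j$ combined with the crude magnitude estimate $\sup B_j=O(\beta H/\gamma)$ would give $\|\Lambda_{j,h}^{\pi_j}\|_2^2 = O(d\beta^2 H^2/\gamma^2)$, which is far too large and would inflate the regularization residual by a factor of $\gamma^{-1}$. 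The tight bound should be of order $\sqrt d\cdot\mathrm{poly}(H)$ (independent of $\gamma^{-1}$), and I would derive it by inductively unrolling the layered recursion for $B_j$, using $\|\nu(s)\|_2\le\sqrt d$ from \Cref{def:linear MDP}, and crucially exploiting that $b_j$ vanishes outside $\mathcal K$ so that the local bonus contributes an $O(1)$ rather than an $O(\gamma^{-1})$ increment per layer. Once this is established, summing the per-$(j,h)$ bound $O(\gamma/\beta\cdot dH^2)$ over $j\in[J]$ and $h\in[H]$ (and using $JH\le K$) yields exactly the claimed $O(\gamma/\beta\cdot dH^3 K)$ residual, completing the proof.
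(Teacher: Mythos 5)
Your proposal is correct and takes essentially the same route as the paper. The paper's own proof is a one-line reduction to \Cref{lem:bias-1 in linear MDPs} with $L_{k,h}\to D_{k,h}$ and $\bar\theta_{j,h}^{\pi_j}\to\Lambda_{j,h}^{\pi_j}$, invoking $b_j(s,a)\in[0,1]$ from \Cref{lem:bonus magnitude in linear MDP} to supply both $D_{k,h}\le 3H$ and the $\Otil(\sqrt d H)$ norm bound on $\Lambda_{j,h}^{\pi_j}$ — precisely the two ingredients you identify (the cancellation of $b_j$, the reuse of the Bias-1 decomposition, and the unrolled-recursion bound on $\Lambda$ via $b_j\le 1$ and $\|\nu\|_2\le\sqrt d$).
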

\begin{proof}
The proof is identical to \Cref{lem:bias-1 in linear MDPs} except for $L_{k,h}$ is replaced by $D_{k,h}$ and $\bar \theta_{j,h}^{\pi_j}$ is replaced by $\Lambda_{j,h}^{\pi_j}$. As we also have $b_j(s,a)\in [0,1]$ by \Cref{lem:bonus magnitude in linear MDP}, the same bound holds.
\end{proof}

\subsection{Bounding \textsc{Reg-Term}}
\begin{lemma}\label{lem:reg-term in linear MDPs}
Assuming $100H\eta H^4\le \beta$ and the good events in \Cref{lem:bonus magnitude in linear MDP} hold. Then the \textsc{Reg-Term} has its expectation bounded by
\begin{align*}
\E[\textsc{Reg-Term}]&\le \Otil\left (\frac H \eta A+A\sqrt dH^2+\frac{H^3}{\gamma^2} \delta J\right )\\
&+2\eta H^3 \sum_{h=1}^H \E_{s_h\sim \pi^\ast} \left [\sum_{j=1}^J \E_{a_h\sim \pi_j(\cdot \mid s_h)}\left [\lVert \phi(s_h,a_h)\rVert_{\hat \Sigma_{j,h}^\dagger}^2\right ]\right ]\\
&+\sum_{h=1}^H \E_{s_h\sim \pi^\ast} \left [\sum_{j=1}^J \E_{a_h\sim \pi_j(\cdot \mid s_h)}\left [b_j(s,a)\right ]\right ].
\end{align*}
\end{lemma}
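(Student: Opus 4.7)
The plan is to mimic the proof of \Cref{lem:Reg-Term of log-barrier}, adapting it to the linear-MDP setting where the covariance inverse estimate $\hat\Sigma_{j,h}^\dagger$ satisfies only the multiplicative bound of \Cref{corol:multiplicative error of inverse (no MGR)} (instead of the MGR guarantee), and where the dilated bonus is represented in the linear form $\hat B_j(s,a)=b_j(s,a)+\phi(s,a)^\trans \hat\Lambda_{j,h}$. Fix an $h\in[H]$ and a state $s\in \mS_h\cap\mathcal K$, and apply \Cref{lem:log-barrier regret bound} to the loss sequence $c_j(a)=\hat Q_j(s,a)-\hat B_j(s,a)$ played against the smoothed comparator $\tilde\pi^\ast(a\mid s)=(1-AK^{-1})\pi^\ast(a\mid s)+K^{-1}$. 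As in the log-barrier analysis, $\Psi(\tilde\pi^\ast(\cdot\mid s))-\Psi(\pi_1(\cdot\mid s))\le A\log K$, which after summing over $h$ and taking the outer expectation over $s_h\sim\pi^\ast$ contributes the $\Otil(AH/\eta)$ term.

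Next I would control the ``shift-of-comparator'' correction $\sum_{j,a}(\tilde\pi^\ast-\pi^\ast)(a\mid s)(\hat Q_j(s,a)-\hat B_j(s,a))$. The total variation $\lVert \tilde\pi^\ast(\cdot\mid s)-\pi^\ast(\cdot\mid s)\rVert_1\le 2A/K$ lets me pay only the magnitudes of the two estimators. For $\hat Q_j(s,a)=\phi(s,a)^\trans \hat\theta_{j,h}$, the bound $\lVert \hat\theta_{j,h}\rVert_2 \le \O(\sqrt d H)$ (inherited from the linear-Q representation plus $\lVert \hat\Sigma_{j,h}^\dagger\rVert_2\le \gamma^{-1}$ combined with $s\in\mathcal K$) gives $\lvert \hat Q_j\rvert=\O(\sqrt dH)$; for $\hat B_j$, \Cref{lem:bonus magnitude in linear MDP} yields $\hat B_j(s,a)=\O(1)$ under $100\eta H^4\le \beta$. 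Adding these and summing over $h$ produces the $\Otil(A\sqrt d H^2)$ contribution.

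For the quadratic penalty $\eta\sum_{j,a}\pi_j(a\mid s)(\hat Q_j(s,a)-\hat B_j(s,a))^2$, I would split via $(x-y)^2\le 2x^2+2y^2$. On the $\hat B_j^2$ side, the bound $\hat B_j\le 1$ from \Cref{lem:bonus magnitude in linear MDP} gives $\eta \hat B_j^2\le \eta \hat B_j$, and since $100\eta H^4\le \beta$ one has $\eta \hat B_j(s,a)\le \tfrac{\beta}{H^4}\hat B_j(s,a)$; after using the dilated-bonus recursion and the usual $(1+1/H)^H\le e$ calculation, this telescopes into the $\sum_{j,h}\E_{s_h\sim\pi^\ast}\E_{a_h\sim\pi_j}[b_j(s,a)]$ term in the statement. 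On the $\hat Q_j^2$ side, expand
\begin{equation*}
\E\bigl[\hat Q_j(s,a)^2\bigr]=\phi(s,a)^\trans \hat\Sigma_{j,h}^\dagger\,\E[\cdot]\,\hat\Sigma_{j,h}^\dagger\phi(s,a),
\end{equation*}
where the middle factor is an empirical average over $k\in\mT_j'$ of $((1-Y_k)+Y_kH\mathbbm{1}[h=h_k])^2 \phi(s_{k,h},a_{k,h})\phi(s_{k,h},a_{k,h})^\trans L_{k,h}^2$. The factor $(1-Y_k)+Y_kH\mathbbm{1}[h=h_k]$ is at most $H$, and its expectation against $\phi\phi^\trans$ equals $\Sigma_h^{\tilde\pi_j}$; since $L_{k,h}\le H$, the middle factor is bounded by $H^2\Sigma_h^{\tilde\pi_j}$ in PSD order, so on the good event of \Cref{corol:multiplicative error of inverse (no MGR)} we obtain $\E[\hat Q_j^2]\le \O(H^3)\,\lVert \phi(s,a)\rVert_{\hat\Sigma_{j,h}^\dagger}^2$. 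This yields the $2\eta H^3\sum \lVert \phi\rVert_{\hat\Sigma_{j,h}^\dagger}^2$ term; on the complementary bad event (probability at most $\delta$ per epoch) only the trivial $\lVert \hat\Sigma_{j,h}^\dagger\rVert_2\le \gamma^{-1}$ bound applies, contributing $\O(H^3\delta J/\gamma^2)$.

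The main obstacle will be the $\hat Q_j^2$ calculation: unlike the simulator setting where \Cref{eq:multiplicative error of Sigma in variance-reduced} is assumed to hold deterministically, here the multiplicative sandwich only holds on the good event, and the extra randomness from the Bernoulli mask $Y_k$ and the uniform layer index $h_k$ must be absorbed into an effective $H^2$ factor before the covariance appears. A secondary technicality is showing the $\hat B_j^2$-to-$b_j$ reduction cleanly, which requires using the linear representation $\hat B_j=b_j+\phi^\trans\hat\Lambda$ together with $\hat B_j\le 1$ so that the $\hat\Lambda$ part does not blow up the quadratic. Once these two pieces are in place, summing over $s\in\mS_h\cap\mathcal K$, over $h\in[H]$, and over epochs yields exactly the claimed bound.
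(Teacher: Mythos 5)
Your plan follows the paper's route in outline (apply \Cref{lem:log-barrier regret bound} against a smoothed comparator, pay $\Otil(AH/\eta)$ for the log-barrier gap, bound the comparator-shift term, and bound the quadratic penalty using \Cref{corol:multiplicative error of inverse (no MGR)} on a good event with a $\gamma^{-2}\delta$ fallback), but two of the intermediate claims you rely on are not actually true pointwise, and the paper works around both of them in a way you should adopt.

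First, you assert $\lvert \hat Q_j(s,a)\rvert = \O(\sqrt d H)$ to control the comparator-shift term. This does not hold as a deterministic bound: $\hat\theta_{j,h}$ has a factor $\hat\Sigma_{j,h}^\dagger$ with operator norm up to $\gamma^{-1}$ and a masked sum whose norm can be $\O(H^2)$, so the best you get pointwise (even restricted to $s\in\mathcal K$, which only controls $\lVert\phi(s,a)\rVert_{\hat\Sigma_{j,h}^\dagger}$) is on the order of $H^2/\sqrt{\beta\gamma}$. The paper avoids this by moving the expectation inside first: since $\tilde\pi^\ast - \pi^\ast$ is deterministic, it suffices to bound $\lvert\E[\hat Q_j(s,a)]\rvert \le \lvert\E[\hat Q_j(s,a)-\bar Q_j^{\pi_j}(s,a)]\rvert + \lvert\bar Q_j^{\pi_j}(s,a)\rvert$, and the centered expectation is $\phi(s,a)^\trans(I-\E[\hat\Sigma_{j,h}^\dagger]\Sigma_h^{\tilde\pi_j})\bar\theta_{j,h}^{\pi_j} \le 2\sqrt d H$ via \Cref{lem:multiplicative error (no MGR)}; the same is done for $\hat B_j$. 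This is the step that genuinely uses the multiplicative guarantee, not a raw magnitude bound.

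Second, your treatment of $\hat B_j^2$ rests on $\hat B_j(s,a)\le 1$, which is not what \Cref{lem:bonus magnitude in linear MDP} gives you --- it only shows $b_j(s,a)\le 1$. The estimated dilated bonus $\hat B_j(s,a) = b_j(s,a) + \phi(s,a)^\trans\hat\Lambda_{j,h}$ has a second piece involving $\hat\Sigma_{j,h}^\dagger$ (norm up to $\gamma^{-1}$) times an average of $\phi D_{k,h}$, so it is not uniformly $\O(1)$. The paper instead splits $\hat B_j^2 \le 2b_j^2 + 2(\phi^\trans\hat\Lambda_{j,h})^2$, bounds the first piece by $b_j$ using $b_j\le 1$, and treats $(\phi^\trans\hat\Lambda_{j,h})^2$ exactly like $\hat Q_j^2$ after noting that $b_j\le 1$ implies $D_{k,h}\le 3H$. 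That produces $\E[\hat B_j^2] \le \O(H^3/\beta)\,b_j(s,a) + \O(H^3\gamma^{-2}\delta)$, and then $100\eta H^4\le\beta$ converts the $\O(\eta H^3/\beta)$ prefactor into at most $1$, giving the $\sum_{j,h}\E[b_j]$ term in the claim. Your ``telescoping via the dilated-bonus recursion'' is not needed and, as sketched, relies on the unestablished bound on $\hat B_j$; replace it with this split.

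The remaining steps of your proposal (the $\eta^{-1}A\log$ term, the expansion of $\E[\hat Q_j^2]$ using the Bernoulli mask and $L_{k,h}\le H$ to pull out $H^3\Sigma_h^{\tilde\pi_j}$, and the good/bad-event case split giving $2\eta H^3\lVert\phi\rVert_{\hat\Sigma_{j,h}^\dagger}^2 + \O(H^3\gamma^{-2}\delta)$) match the paper's argument.
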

\begin{proof}
The proof generally follows from \Cref{lem:Reg-Term of log-barrier}, except for some tiny differences due to epoching.

Using \Cref{lem:log-barrier regret bound}, we get the following for all $j\in [J]$, $s\in \mS_h\cap \mathcal K$ (where $h\in [H]$), and $\tilde \pi^\ast\in \Pi$:
\begin{align*}
&\quad \sum_{j=1}^J\sum_{a\in \mA} (\pi_j(a\mid s)-\pi^\ast(a\mid s))(\hat Q_j(s,a)-\hat B_j(s,a))\\
&\le \frac{\Psi(\tilde \pi^\ast(\cdot \mid s))-\Psi(\pi_1(\cdot \mid s))}{\eta}+\\
&\quad \sum_{j=1}^J\sum_{a\in \mA} (\tilde \pi^\ast(a\mid s)-\pi^\ast(a\mid s))(\hat Q_j(s,a)-\hat B_j(s,a))+\\
&\quad \eta \sum_{j=1}^J \sum_{a\in \mA} \pi_j(a\mid s) (\hat Q_j(s,a)-\hat B_j(s,a))^2.
\end{align*}

By picking $\tilde \pi^\ast(a\mid s)=(1-AJ^{-1})\pi^\ast(a\mid s)+J^{-1}$ as \Cref{lem:Reg-Term of log-barrier}, the first term is bounded by $\eta^{-1} A \ln J$.
Meanwhile, the second term is bounded by
\begin{align*}
&\quad \sum_{j=1}^J\sum_{a\in \mA} (\tilde \pi^\ast(a\mid s)-\pi^\ast(a\mid s))(\hat Q_j(s,a)-\hat B_j(s,a))\\
&=\sum_{j=1}^J \sum_{a\in \mA} (-AJ^{-1}\pi^\ast(a\mid s)+J^{-1})(\hat Q_j(s,a)-\hat B_j(s,a)).
\end{align*}

Like what we did in \Cref{lem:Reg-Term of log-barrier}, we consider the expected difference between $\hat Q_j$ and $\bar Q_j^{\pi_j}$:
\begin{equation*}
\E[\hat Q_j(s,a)-\bar Q_j^{\pi_j}(s,a)]=\phi(s,a)^\trans (I-\E[\hat \Sigma_{j,h}^\dagger\Sigma_h^{\tilde\pi_j}])\bar \theta_{j,h}^{\pi_j}\le 2\sqrt d H,
\end{equation*}
where the first inequality is due to \Cref{lem:bias-1 in linear MDPs} and the second one uses \Cref{lem:multiplicative error (no MGR)}. Moreover, according to \Cref{lem:bonus magnitude in linear MDP}, the same bound also holds for $\E[\hat B_j(s,a)-\bar B_j^{\pi_j}(s,a)]$.

Hence, after taking expectations on both sides, we know that
\begin{align*}
&\quad \E\left [\sum_{j=1}^J\sum_{a\in \mA} (\tilde \pi^\ast(a\mid s)-\pi^\ast(a\mid s))(\hat Q_j(s,a)-\hat B_j(s,a))\right ]\\
&\le \sum_{j=1}^J \sum_{a\in \mA} (\lvert -AJ^{-1}\pi^\ast(a\mid s)\rvert+\lvert J^{-1}\rvert)\lvert \E[\hat Q_j(s,a)-\hat B_j(s,a)]\rvert\\
&\le J \times 2AJ^{-1}\times 4\sqrt dH=\O\left (A\sqrt dH\right ).
\end{align*}

Then consider the last term. We still write $(\hat Q_j(s,a)-\hat B_j(s,a))^2\le 2\hat Q_j(s,a)^2+2\hat B_j(s,a)^2$, which can be calculated as follows:
\begin{align*}
\E[\hat Q_j(s,a)^2]&\le H^2 \E\left [\frac{1}{\lvert \mT_j'\rvert}\sum_{k\in \mT_j'} \phi(s,a)^\trans \hat \Sigma_{j,h}^\dagger \left (((1-Y_k)+Y_kH\mathbbm 1[h=h_k])^2 \phi(s_{k,h},a_{k,h})\phi(s_{k,h},a_{k,h})^\trans\right )\hat \Sigma_{j,h}^\dagger \phi(s,a)\right ]\\
&=H^2\E\left [\phi(s,a)^\trans \hat \Sigma_{j,h}^\dagger \left ((1-\delta_e) \Sigma_{h}^{\pi_j}+H\delta_e \Sigma_h^{\text{cov}}\right )\hat \Sigma_{j,h}^\dagger \phi(s,a)\right ]\\
&\le H^3 \E\left [\phi(s,a)^\trans \hat \Sigma_{j,h}^\dagger \Sigma_{h}^{\tilde \pi_j} \hat \Sigma_{j,h}^\dagger \phi(s,a)\right ].
\end{align*}

Then we use \Cref{corol:multiplicative error of inverse (no MGR)}. If the good event does not happen, then this term is bounded by $\O(H^3 \gamma^{-2} \delta)$. Otherwise, it can be written as $2H^3 \E\left [\lVert \phi(s,a)\rVert_{\hat \Sigma_{j,h}^\dagger}^2\right ]$. Then we consider the estimated dilated bonus term:
\begin{align*}
\E[\hat B_j(s,a)^2]&\le 2\E[b_j(s,a)^2]+2\E[(\phi(s,a)^\trans \hat \Lambda_{j,h})^2].
\end{align*}

According to \Cref{lem:bonus magnitude in linear MDP} (whose failure only contributes in total $o(1)$ regret as we pick $\delta=K^{-3}$), we know $b_j(s,a)\le 1$, which means $D_{k,h}\le 3H$. Thus, the second term is also bounded by $\O(H^3 \E[\lVert \phi(s,a)\rVert_{\hat \Sigma_{j,h}^\dagger}^2]+H^3\gamma^{-2}\delta)$. Moreover, as $b_j(s,a)\le 1$, the first term is bounded by $\E[b_j(s,a)]$. Thus, by definition of $b_j(s,a)$, we get
\begin{equation*}
\E[\hat B_j(s,a)^2]\le \O\left (\frac{H^3}{\beta}\right )b_j(s,a)+\O(H^3\gamma^{-2}\delta)).
\end{equation*}

Putting everything together gives
\begin{align*}
\E[\textsc{Reg-Term}]&\le \Otil\left (\frac H \eta A+A\sqrt dH^2+\frac{H^3}{\gamma^2} \delta J\right )\\
&+2\eta H^3 \sum_{h=1}^H \E_{s_h\sim \pi^\ast} \left [\sum_{j=1}^J \E_{a_h\sim \pi_j(\cdot \mid s_h)}\left [\lVert \phi(s_h,a_h)\rVert_{\hat \Sigma_{j,h}^\dagger}^2\right ]\right ]\\
&+\frac{100\eta H^3}{\beta} \sum_{h=1}^H \E_{s_h\sim \pi^\ast} \left [\sum_{j=1}^J \E_{a_h\sim \pi_j(\cdot \mid s_h)}\left [b_j(s,a)\right ]\right ].
\end{align*}

This then translates to our conclusion using the condition that $100H^4\eta\le \beta$.
\end{proof}

\subsection{Bounding the Magnitudes of Bonuses}
\begin{lemma}\label{lem:bonus magnitude in linear MDP}
Let $\alpha=\frac{\delta_e}{6\beta}$, $M_0\ge \alpha^2 dH^2$, $N_0\ge 100 \frac{M_0^3}{\alpha^2} \log \frac K\delta$, and $\gamma\ge 36\frac{\beta^2}{\delta_e}$. Then with probability $1-(K^{-2}+K\delta)$, $b_j(s,a)=\mathbbm 1[(s,a)\in \mathcal K]\times \beta (\lVert \phi(s,a)\rVert_{\hat \Sigma_{j,h}^\dagger}^2+\E_{a'\sim \pi_j(\cdot \mid s)}[\lVert \phi(s,a')\rVert_{\hat \Sigma_{j,h}^\dagger}])\le 1$ for all $j\in [J]$, $h\in [H]$, $s\in \mS_h$ and $a\in \mA$.
\end{lemma}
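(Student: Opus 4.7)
The bonus $b_j(s,a)$ vanishes for $s \notin \mathcal K$, so it suffices to show $\beta \lVert \phi(s,a)\rVert_{\hat \Sigma_{j,h}^\dagger}^2 \le \tfrac 12$ for every $s \in \mathcal K \cap \mS_h$ and $a \in \mA$; the two terms comprising $b_j$ then each contribute at most $\tfrac 12$. The strategy is to establish a Loewner comparison of the form $\hat \Sigma_{j,h}^\dagger \preceq (C/\delta_e)(\hat \Sigma_h^{\text{cov}})^{-1}$ for a small absolute constant $C$, because the definition of $\mathcal K$ immediately gives $\lVert \phi(s,a)\rVert_{(\hat \Sigma_h^{\text{cov}})^{-1}}^2 \le \alpha = \delta_e/(6\beta)$, and therefore $\beta \lVert \phi(s,a)\rVert_{\hat \Sigma_{j,h}^\dagger}^2 \le C\beta\alpha/\delta_e = C/6$, which is at most $\tfrac 12$ as long as $C \le 3$ (tracked carefully through the chain below).

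The comparison is assembled via three sequential Loewner inequalities. \emph{First}, \Cref{lem:multiplicative error (no MGR)} (invoked per epoch with failure probability $2\delta$) gives $\gamma I + \tilde \Sigma_{j,h} \succeq (1-\sqrt\gamma)(\gamma I + \Sigma_h^{\tilde \pi_j})$, so inverting and using $\gamma \le 1/4$ yields $\hat \Sigma_{j,h}^\dagger \preceq (1-\sqrt\gamma)^{-1}(\gamma I + \Sigma_h^{\tilde \pi_j})^{-1}$. \emph{Second}, because $\tilde \pi_j$ mixes in $\pi_{\text{cov}}$ with probability $\delta_e$, linearity of the covariance gives $\Sigma_h^{\tilde \pi_j} \succeq \delta_e \Sigma_h^{\pi_{\text{cov}}}$. \emph{Third}, a Matrix Azuma argument (\Cref{lem:matrix azuma}) applied to the $M_0 N_0$ trajectories collected inside \textsc{PolicyCover}---with the calibrations $M_0 \ge \alpha^2 dH^2$ and $N_0 \ge 100 M_0^3 \alpha^{-2} \log(K/\delta)$ being precisely strong enough to make the additive deviation in operator norm an arbitrarily small fraction of $I/M_0$---shows that $\hat \Sigma_h^{\text{cov}}$ concentrates tightly around $(I/M_0) + \Sigma_h^{\pi_{\text{cov}}}$. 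The extra hypothesis $\gamma \ge 36\beta^2/\delta_e$ combined with $M_0 \ge \alpha^2 dH^2$ ensures $\gamma/\delta_e \ge 1/M_0$, which lets us absorb the $I/M_0$ regularizer inside $\gamma \delta_e^{-1}I$; chaining the three bounds yields $\hat \Sigma_{j,h}^\dagger \preceq (C/\delta_e)(\hat \Sigma_h^{\text{cov}})^{-1}$.

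The main technical obstacle will be the third step: the policies $\pi_1,\ldots,\pi_{M_0}$ that \textsc{PolicyCover} deploys are chosen \emph{adaptively} from previously observed samples, so the per-trajectory feature outer products are only conditionally i.i.d. The argument must therefore write the deviation $\hat \Sigma_h^{\text{cov}} - \E[\hat \Sigma_h^{\text{cov}}]$ as a bounded matrix martingale-difference sequence before invoking Matrix Azuma, and must then upgrade the resulting additive concentration statement into the multiplicative Loewner comparison used above---the choice $N_0 \gtrsim M_0^3/\alpha^2$ is calibrated exactly to make this upgrade possible. Failure probabilities then combine to $K^{-2}$ (a single PolicyCover concentration event) plus $2K\delta$ (union bound of Step 1 over the $J \le K$ epochs), matching the stated $K^{-2}+K\delta$ bound.
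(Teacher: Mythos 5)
Your proposal follows essentially the same path as the paper's proof: reduce to $\beta\lVert\phi(s,a)\rVert_{\hat\Sigma_{j,h}^\dagger}^2\le\tfrac12$ on $\mathcal K$, invoke \Cref{lem:multiplicative error (no MGR)} / \Cref{corol:multiplicative error of inverse (no MGR)} per epoch, use the mixing inequality $\Sigma_h^{\tilde\pi_j}\succeq\delta_e\Sigma_h^{\pi_{\mathrm{cov}}}$, drop $\gamma/\delta_e$ to $M_0^{-1}$ via $\gamma\delta_e\ge36\beta^2$ and $M_0\ge\alpha^2dH^2$, and finally relate $\hat\Sigma_h^{\text{cov}}$ to $M_0^{-1}I+\Sigma_h^{\text{cov}}$ via a Matrix Azuma concentration over the $M_0N_0$ adapted \textsc{PolicyCover} trajectories. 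Your instinct to treat the \textsc{PolicyCover} samples as a matrix martingale difference sequence is exactly right and is what the paper does implicitly when invoking \Cref{lem:matrix azuma} (which is stated for adapted sequences).

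The one genuine difference is the final step. You propose converting the additive Matrix Azuma bound into a multiplicative Loewner comparison $(M_0^{-1}I+\Sigma_h^{\text{cov}})^{-1}\preceq c(\hat\Sigma_h^{\text{cov}})^{-1}$, exploiting $\hat\Sigma_h^{\text{cov}}\succeq M_0^{-1}I$; the paper instead stays additive, bounding $\lVert(\hat\Sigma_h^{\text{cov}})^{-1}-(M_0^{-1}I+\Sigma_h^{\text{cov}})^{-1}\rVert_2\le M_0^2\cdot\epsilon\le\alpha/2$ via a perturbation theorem of Meng--Xu and then adding this error to the $\alpha$ guaranteed by the definition of $\mathcal K$. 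Both close the argument: the paper gets $2\frac{\beta}{\delta_e}(\alpha+\alpha/2)=\frac12$ exactly, whereas a Loewner chain with step-1 factor $\le 2$ (from $\gamma\le\tfrac14$) and step-3 factor $\le\tfrac43$ (taking $\epsilon\le\tfrac{1}{4M_0}$) gives $C\le\tfrac83\le3$ as you require. One imprecision worth flagging: your remark that $N_0\gtrsim M_0^3/\alpha^2$ is ``calibrated exactly'' for the Loewner upgrade is backwards --- the Loewner conversion only needs $\epsilon\lesssim M_0^{-1}$, hence $N_0\gtrsim M_0\log(d/\delta)$, which is strictly weaker; the cubic dependence on $M_0$ is what the paper's \emph{additive-inverse} route requires (to beat the $M_0^2$ factor from the perturbation theorem and land below $\alpha/2$). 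This doesn't affect correctness under the lemma's hypotheses, but the two routes are not constrained by $N_0$ in the same way.
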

The proof is similar to, but different from Lemma F.1 of \citet{luo2021policy}. Here, we use the alternative for the MGR procedure. We also refine the analysis on $M_0^{-1}+\Sigma_h^{\text{cov}}\approx \hat \Sigma_h^{\text{cov}}$ for a smaller $N_0$, which is critical for our new regret bound.
\begin{proof}
It suffices to show that $\beta \lVert \phi(s,a')\rVert_{\hat \Sigma_{k,h}^\dagger}^2\le \frac 12$ for any $s\in \mathcal K$ and $a'\in \mA$. Firstly, we have the following with high probability because $\hat \Sigma_{j,h}^\dagger$ is a multiplicative approximation (i.e., \Cref{corol:multiplicative error of inverse (no MGR)}):
\begin{equation*}
\beta \lVert \phi(s,a')\rVert_{\hat \Sigma_{j,h}^\dagger}\le 2\beta \lVert \phi(s,a')\rVert_{(\gamma I+\Sigma_h^{\tilde \pi_j})^{-1}}^2
\le 2\frac{\beta}{\delta_e} \lVert \phi(s,a')\rVert_{(\frac{\gamma}{\delta_e} I+\Sigma_h^{\text{cov}})^{-1}}^2\le 2\frac{\beta}{\delta_e} \lVert \phi(s,a')\rVert_{(M_0^{-1} I+\Sigma_h^{\text{cov}})^{-1}}^2,
\end{equation*}
where $\Sigma_h^{\text{cov}}$ is the short-hand notation of $\Sigma_h^{\pi_\text{cov}}$ and the last step uses the fact that $\frac{\gamma}{\delta_e}M_0\ge \frac{\gamma}{\delta_e}\frac{\delta_e^2}{36\beta^2}\ge 1$. Moreover, we argue that $M_0^{-1}+\Sigma_h^{\text{cov}}\approx \hat \Sigma_h^{\text{cov}}$. By definition of $\hat \Sigma_h^{\text{cov}}$ (see \Cref{alg:PolicyCover}):
\begin{equation*}
\hat \Sigma_h^{\text{cov}}=\frac{1}{M_0}I+\frac{1}{M_0N_0}\sum_{m=1}^{M_0}\sum_{n=1}^{N_0}\phi(s_{m,n,h},a_{m,n,h})\phi(s_{m,n,h},a_{m,n,h})^\trans,
\end{equation*}
we can apply the Matrix Azuma inequality (\Cref{lem:matrix azuma}) to ensure that (where we simply pick $X_{mN_0+n}=\phi(s_{m,n,h},a_{m,n,h})\phi(s_{m,n,a},a_{m,n,h})^\trans-\Sigma_h^{\pi_m}$ and $A_{mN_0+n}=I\succeq X_{mN_0+n}$; there are in total $M_0N_0$ matrices):
\begin{equation*}
\left \lVert \hat \Sigma_h^{\text{cov}}-\frac{1}{M_0}I-\Sigma_h^{\text{cov}}\right \rVert_2\le \sqrt{\frac{8}{M_0N_0}\log \frac d\delta}.
\end{equation*}

The rest follows the proof of the original lemma \citep[Lemma F.1]{luo2021policy}. By following the proof of Theorem 2.1 of \citet{meng2010optimal}, we can conclude that
\begin{equation*}
\left \lVert \left (\hat \Sigma_h^{\text{cov}}\right )^{-1}-\left (\frac{1}{M_0}I-\Sigma_h^{\text{cov}}\right )^{-1}\right \rVert_2
\le M_0^2 \left \lVert \hat \Sigma_h^{\text{cov}}-\frac{1}{M_0}I-\Sigma_h^{\text{cov}}\right \rVert_2\le M_0^2 \sqrt{\frac{8}{M_0N_0}\log \frac d\delta}= \sqrt{\frac{8M_0^3}{N_0}\log \frac d\delta}.
\end{equation*}

Therefore, we only need $M_0=\O(N_0^3)$ to make it bounded by $\frac \alpha 2$. Consequently, as $\lVert \phi(s,a')\rVert_2\le 1$,
\begin{align*}
&\quad \beta \lVert \phi(s,a')\rVert_{\hat \Sigma_{j,h}^\dagger}
\le 2\frac{\beta}{\delta_e} \lVert \phi(s,a')\rVert_{(M_0^{-1}I+\Sigma_h^{\text{cov}})^{-1}}\\
&\le 2\frac{\beta}{\delta_e}\left (\lVert \phi(s,a')\rVert_{(\hat \Sigma_h^{\text{cov}})^{-1}}^2+\left \lVert \left (\hat \Sigma_h^{\text{cov}}\right )^{-1}-\left (\frac{1}{M_0}I-\Sigma_h^{\text{cov}}\right )^{-1}\right \rVert_2\right )\\
&\le 2\frac{\beta}{\delta_e} \left (\alpha+\frac \alpha 2\right )\le \frac 12,
\end{align*}
by the condition that $s\in \mathcal K$ and our choice of $\alpha$.
\end{proof}

%%%%%%%%%%%%%%%%%%%%%%%%%%%%%%%%%%%%%%%%%%%%%%%%%%%%%%%%%%%%%%%%%%%%%%%%%%%%%%%
%%%%%%%%%%%%%%%%%%%%%%%%%%%%%%%%%%%%%%%%%%%%%%%%%%%%%%%%%%%%%%%%%%%%%%%%%%%%%%%

\end{document}